\declaretheorem[name=Lemma]{lemma}
\declaretheorem[name=Lemma,numbered=no]{lemmarestate}
\DeclareRobustCommand\onedot{\futurelet\@let@token\@onedot}
\def\@onedot{\ifx\@let@token.\else.\null\fi\xspace}
\definecolor{Red}{rgb}{0.768, 0.054, 0.054}
\definecolor{Blue}{rgb}{0.43, 0.65, 0.86}
\definecolor{Green}{rgb}{0,0.4,0.7}
\definecolor{hotpink}{rgb}{1.0, 0.41, 0.71}
\definecolor{brown}{rgb}{0.59, 0.29, 0.0}
\definecolor{darkpastelgreen}{rgb}{0.01, 0.75, 0.24}
\definecolor{celestialblue}{rgb}{0.29, 0.59, 0.82}
\definecolor{ceruleanblue}{rgb}{0.16, 0.32, 0.75}
\definecolor{goldenrod}{rgb}{0.85, 0.65, 0.13}
\definecolor{navyblue}{rgb}{0.0, 0.0, 0.5}
\definecolor{coolgrey}{rgb}{0.55, 0.57, 0.67}
\definecolor{darkseagreen}{rgb}{0.56, 0.74, 0.56}
\definecolor{darkturquoise}{rgb}{0.0, 0.81, 0.82}
\definecolor{berryred}{rgb}{0.79, 0.25, 0.14}
\definecolor{teagreen}{rgb}{0.81,0.94,0.75}
\definecolor{lightgrey}{rgb}{0.5,0.5,0.5}
\definecolor{purple}{rgb}{0.35,0.25,0.55}
\title{$\mathbf{\Delta}$ Attention: Fast and Accurate Sparse Attention Inference by Delta Correction}
\author{Jeffrey Willette$^{1}$,\quad Heejun Lee$^{1}$, Sung Ju Hwang$^{1,2}$\\
KAIST$^{1}$, DeepAuto.ai$^2$ \\
  \texttt{\{jwillette, ainl, sjhwang82\}@kaist.ac.kr}
}
\begin{document}

\maketitle

\begin{abstract}
The attention mechanism of a transformer has a quadratic complexity, leading to high inference costs and latency for long sequences. However, attention matrices are mostly sparse, which implies that many entries may be omitted from computation for efficient inference. Sparse attention inference methods aim to reduce this computational burden; however, they also come with a troublesome performance degradation. We discover that one reason for this degradation is that the sparse calculation induces a distributional shift in the attention outputs. The distributional shift causes decoding-time queries to fail to align well with the appropriate keys from the prefill stage, leading to a drop in performance. We propose a simple, novel, and effective procedure for correcting this distributional shift, bringing the distribution of sparse attention outputs closer to that of quadratic attention. Our method can be applied on top of any sparse attention method, and results in an average 36\%pt performance increase, recovering 88\% of quadratic attention accuracy on the 131K RULER benchmark when applied on top of sliding window attention with sink tokens while only adding a small overhead. Our method can maintain approximately 98.5\% sparsity over full quadratic attention, making our model 32 times faster than Flash Attention 2 when processing 1M token prefills.
\end{abstract}

\section{Introduction}
\label{sec:intro}

The main operation that powers modern transformers, self-attention~\citep{attention}, creates causal pairwise comparisons for every item in a sequence. While powerful and expressive, this operation comes with a quadratic complexity, leading to the need for large amounts of computation during inference on long sequences. This increases direct costs for hardware and electricity as well as negative externalities such as CO$_2$ emissions. Training-free sparse attention modifications aim to lower the quadratic complexity at inference time, but come with unwanted side effects such as accuracy degradation due to the sparsification of the attention matrix.

\begin{figure}[H]
    \centering
    \includegraphics[width=0.9\linewidth]{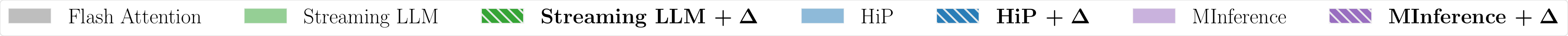}
    \includegraphics[width=\linewidth]{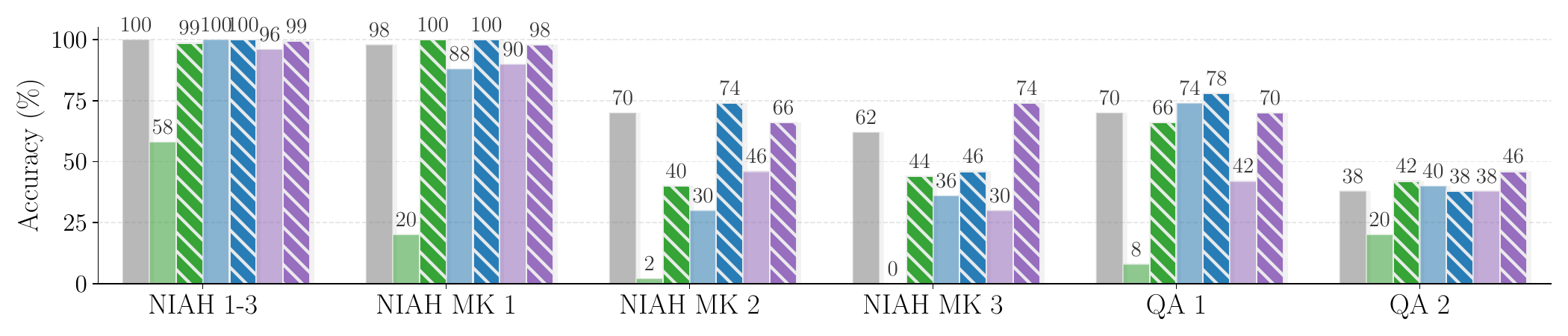}
    \vspace{-1.5em}
    \caption{
    \textbf{RULER 131K Subsets.} At long context lengths, sparse attention can degrade performance by a large margin. Our simple $\mathbf{\Delta}$ correction improves performance and only requires an additional 1.5\% of the full quadratic attention computation.}
    \label{fig:ruler-subset-motivation}
    \vspace{-0.5em}
\end{figure} 

Recent works on sparse attention have found that a sparse sliding window can be added at inference time without a total loss of model stability. This is accomplished by saving a small number of initial tokens, and applying a sliding window on all subsequent tokens (Streaming LLM~\citep{sink}). Subsequent works such as Star Attention~\citep{star-attn} have proposed a similar sparse prefill strategy with a fully dense decoding procedure to generate new tokens. This strategy has the positive attribute of a sparse prefill while still performing attention with all tokens during generation. This should allow the model to accurately recall context buried deep within the prompt. However, we find that this is not the case in practice. For example, there is a challenging subset of the RULER~\citep{ruler} benchmark titled MultiKey-3, which consists entirely of unique UUID keys and values, and the large language model (LLM) must be able to recall the proper value for a particular key in order to get a correct answer. In this setting, a sliding window of 2048 tokens provides more than adequate room for encoding individual key and value pairs together within the window. One would then expect that a dense decode procedure would be able to retrieve the proper UUID given a user query. However, we find that this is not the case and the dense decode achieves a surprisingly low accuracy of 0\% as opposed to 62\% when using quadratic attention.  

\begin{wrapfigure}[13]{r}{0.35\textwidth}
  \vspace{-1.8em}
  \begin{center}
    \includegraphics[width=\linewidth]{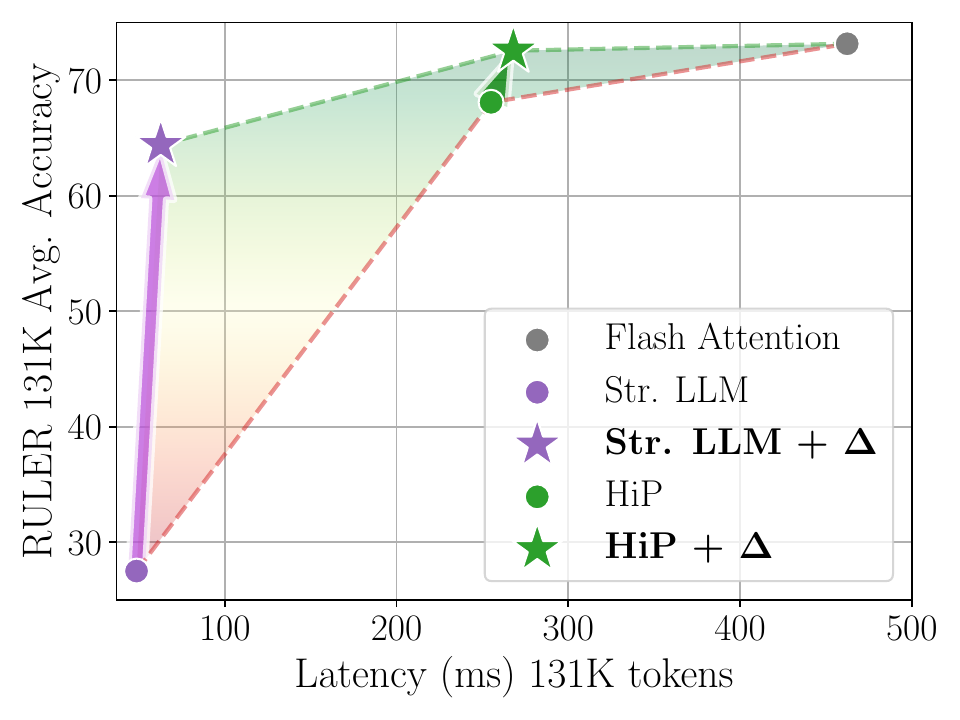}
  \end{center}
  \vspace{-1.1em}
  \caption{
  Comparing RULER 131K prefill attention latency and accuracy for sparse attention methods.
  }
  \label{fig:ruler-latency-acc-motivation}
\end{wrapfigure}

We find this drop in accuracy arises from a distributional shift in the output tokens of each layer due to the sparse prefill. This distributional shift causes problems with the query-key dot products in long contexts and therefore results in an extreme drop in performance as the queries no longer align with the expected keys. We study this problem and found a surprisingly simple fix which we dub $\mathbf{\Delta}$ Attention that improves the accuracy of sliding window attention from \textbf{0\% to 44\%~(\cref{fig:ruler-subset-motivation}, NIAH MK3) on this challenging subset} while maintaining more than 11-fold speedup over plain Flash Attention 2~\citep{flashattention2} for processing 131K context lengths~(\cref{fig:ruler-latency-acc-motivation}). Through evaluations on perplexity, natural language understanding, and synthetic tasks, we demonstrate that our method consistently results in better performance while maintaining the low latency of the sparse prefill. 

Our contributions are as follows:

\begin{itemize}
    \item We identify a distributional shift in tokens when applying an inference-time sparse attention method to pretrained transformers, which interferes with query-key alignment on long contexts and leads to a drop in performance.
    \item We introduce Delta ($\mathbf{\Delta}$) Attention, a sparse post-processing correction that realigns sparse outputs with full quadratic attention.
    \item Our method adds negligible latency overhead compared to plain sparse attention, while drastically increasing performance over purely sparse methods.
    \item Our method is designed to work in the attention output space, so it can be seamlessly integrated with existing sparse attention kernels and inference pipelines without major modification.   
\end{itemize}

\section{Background \& Related Work}
\label{sec:background}

The self attention mechanism of a transformer takes an input sequence $\mathbf{X} \in \mathbb{R}^{N \times d}$ of individual tokens $\mathbf{x}_i \in \mathbb{R}^{d}$ for $i \in \{1..N\}$. After applying linear projections $\mathbf{W_Q}, \mathbf{W_K}, \mathbf{W_V} \in \mathbb{R}^{d \times d}$ to the input $\mathbf{X}$ to achieve the respective $\mathbf{Q}, \mathbf{K}, \mathbf{V}$ matrices, positional encodings such as~\citep{rope} are applied to $\mathbf{Q}$ and $\mathbf{K}$. With $\sigma$ representing the softmax operation over the last dimension, the self-attention operation for an arbitrary layer in a transformer is the following,

\vspace{-1em}
\begin{equation}
    \label{eq:attention}
     \mathbf{AV} = \sigma\left(\frac{\mathbf{Q}\mathbf{K}^\top}{\sqrt{d}}\right)\mathbf{V} = \sigma\left(\frac{\mathbf{XW_Q}(\mathbf{XW_K})^\top}{\sqrt{d}}\right)\mathbf{XW_V}
\end{equation}
\vspace{-0.75em}

We omit the output projections, attention heads, and post-attention multilayer perceptrons (MLPs). For a deeper discussion of these topics in transformers, please see~\citep{attention}. The most expensive operation in~\cref{eq:attention} that arises from the multiplication inside $\sigma()$ which results in the implicit construction of an attention matrix $\mathbf{A} \in \mathbb{R}^{N \times N}$ which is computationally expensive for large $N$. Due to the causality condition of language, a token $\mathbf{x}_i$ may only influence another token $\mathbf{x}_j$ where the index $i \le j$. In practice, this means that only the lower triangle of $\mathbf{A}$ is computed.  

After traversing through the layers of the network, the next token in the sequence $\mathbf{x}_{N+1}$ is generated (predicted) and added to the input sequence to generate the next token and so on until the sequence terminates. In this generation phase, each iteration may use the previously computed tokens, which are stored within a cache at each layer, so that we may avoid re-calculating the entire attention matrix in~\cref{eq:attention}. With a union operator $\cup$ which concatenates matrices by adding new rows, and considering that $\mathbf{K, V}$ contain tokens with indices $\{1..N\}$, and the newly generated token has index $i = N + 1$, the generative process for the next token proceeds through the attention layers as,

\vspace{-1.25em}
\begin{equation}
    \label{eq:generation}
    (\mathbf{a}^\top \mathbf{v})_i = \sigma\left(\frac{\mathbf{q}_i^\top\left[\mathbf{K} \cup \mathbf{k}_i^\top\right]^\top}{\sqrt{d}}\right)\left(\mathbf{V} \cup \mathbf{v}_i^\top \right)
\end{equation} 
\vspace{-1.em}

Sparse attention prefill methods aim to reduce the quadratic computation in~\cref{eq:attention} by computing a subset of entries within $\mathbf{A}$, forming a sparse matrix $\mathbf{A^*}$ where the number of computed entries $\sum_{i,j} \mathbbm{1}\{\mathbf{A^*}_{i,j} > 0\} \ll \frac{N^2}{2}$ with minimal information loss. However, in practice, large portions of the attention matrix are ignored, which may cause unintended differences in the output tokens and lead to unexpected behavior of future query-key dot products, which could degrade performance on downstream tasks. Previous works have studied in-context learning (ICL) processes such as induction heads~\citep{induction-head}, which are responsible for copying relevant content from earlier tokens into later tokens in the sequence~\citep{stacked-induction}. Induction heads are known to be more prevalent in the lower layers of the network~\citep{induction-fv}, which implies that a distributional mismatch between queries and keys at the lower layers of the network will inhibit ICL processes. Additionally, \cite{retrieval-head} showed that these induction or retrieval heads are universal for all transformer model types and further highlighted that interfering with these special attention heads causes a catastrophic drop in performance on downstream tasks during inference. 

Recent works on sparse attention, such as Streaming LLM~\citep{sink}, have shown that a pretrained quadratic transformers can be modified on-the-fly at test time into a stable sparse attention model by utilizing sink tokens and sliding windows. This has inspired a multitude of recent works that utilize this knowledge for inference time adaptations that selectively prune the less important `middle' tokens from the KV-cache during inference. Two approaches, H2O~\citep{h2o} and SnapKV~\citep{snapkv} accomplish this by looking at historical attention scores to decide which tokens to prune. However, these works still leave the quadratic prompt in place, which requires a computation overhead of $\mathcal{O}(n^2)$. 

\begin{wrapfigure}[20]{r}{0.4\textwidth}
\vspace{-1.5em}
\begin{center}
\includegraphics[width=0.8\linewidth]{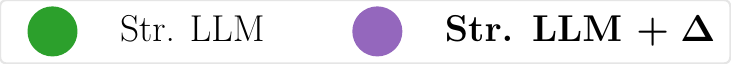}
\includegraphics[width=\linewidth]{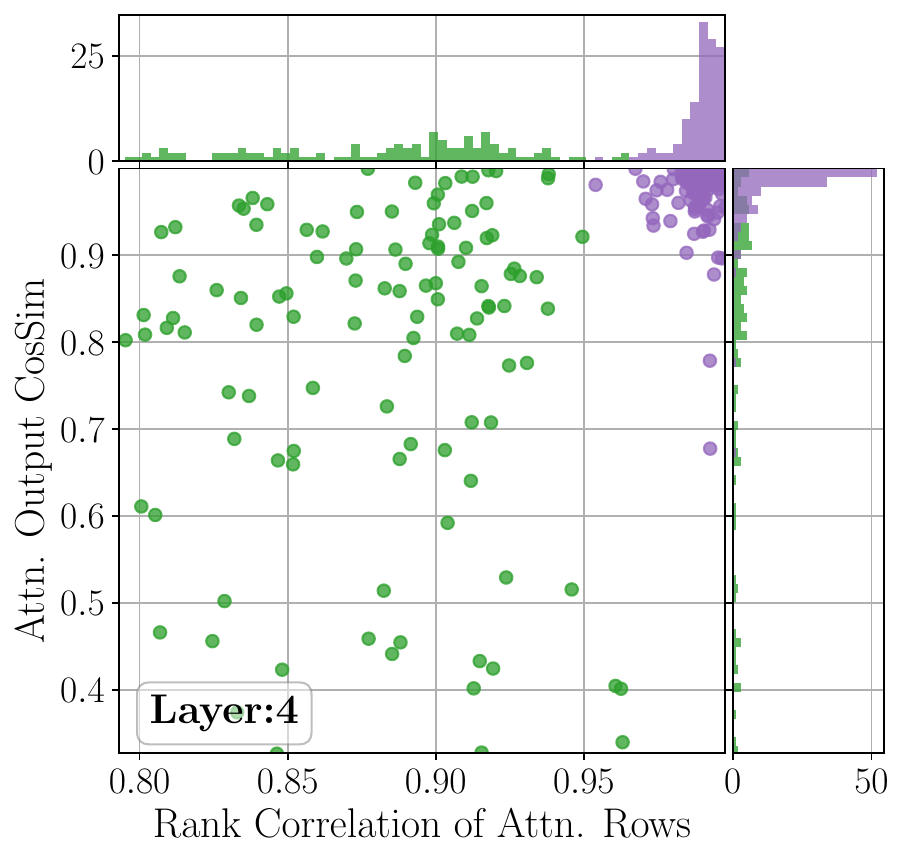}
\end{center}
\vspace{-0.5em}
\caption{
Comparing sparse attention methods to quadratic attention. Our $\mathbf{\Delta}$ correction results in outputs that are more similar to quadratic attention.}
\label{fig:motivation}
\end{wrapfigure}

Other recent works have therefore made efforts to lower the complexity of the prompt as well. Big Bird~\citep{bigbird} studies the effect of randomly choosing keys for every new query in the attention matrix. However, random key selection has been shown to underperform a more targeted selection of keys in HiP Attention~\citep{hip, infhip}, which applies a tree-based pruning mechanism that masks out less important blocks of keys in order to sparsify the computation of the attention matrix. MInference~\citep{minference} studies reliably recurring patterns in the attention matrix of specific attention heads, and builds a set of sparse kernels which apply sparse attention following these patterns. Star Attention~\citep{star-attn} uses a sparse strategy akin to that of Streaming LLM with a sliding window, initial tokens, and a fully dense decode procedure which evaluates the dot product between every past key for new queries during the decoding phase. As we show in our experiments, this scheme does not work for all tasks unless the sliding window represents a large percentage of the total context length (see~\cref{tab:ruler}).

To illustrate how our findings integrate with these prior works, we provide an example in~\cref{fig:motivation}. In this experiment, we use quadratic attention and Streaming LLM to prefill a 131K length input from the RULER benchmark. We then compute the cosine similarity $\text{cos}([\mathbf{A^*V}]_i, [\mathbf{AV}]_i)$ of the sparse and quadratic outputs, and also construct the last part of the full attention matrix using the last 128 queries in order to compare the rank correlation coefficient $\rho(\mathbf{A^*}_i, \mathbf{A}_i)$ in the final rows of the attention matrix. If the sparse attention method does not cause a distributional shift, then the attention outputs should have a high cosine similarity to quadratic attention, and sorting the rows of the attention matrix should lead to the same sort order, which implies that the relative importance (ranking) between queries and keys has been maintained. As seen in~\cref{fig:motivation}, in both dimensions, the sparse attention of Streaming LLM causes a drift in the distribution of tokens, which causes the degradation in task performance seen in \cref{fig:ruler-subset-motivation}. However, we find we can correct this distributional shift with the addition of a $\mathbf{\Delta}$ term which we will describe in the following section. 

\vspace{-0.75em}
\section{Method}
\label{sec:method}
\vspace{-0.5em}

\begin{figure}[t]
    \vspace{-0.25in}
    \centering
    \includegraphics[width=\linewidth,trim={0 0 30mm 20mm},clip]{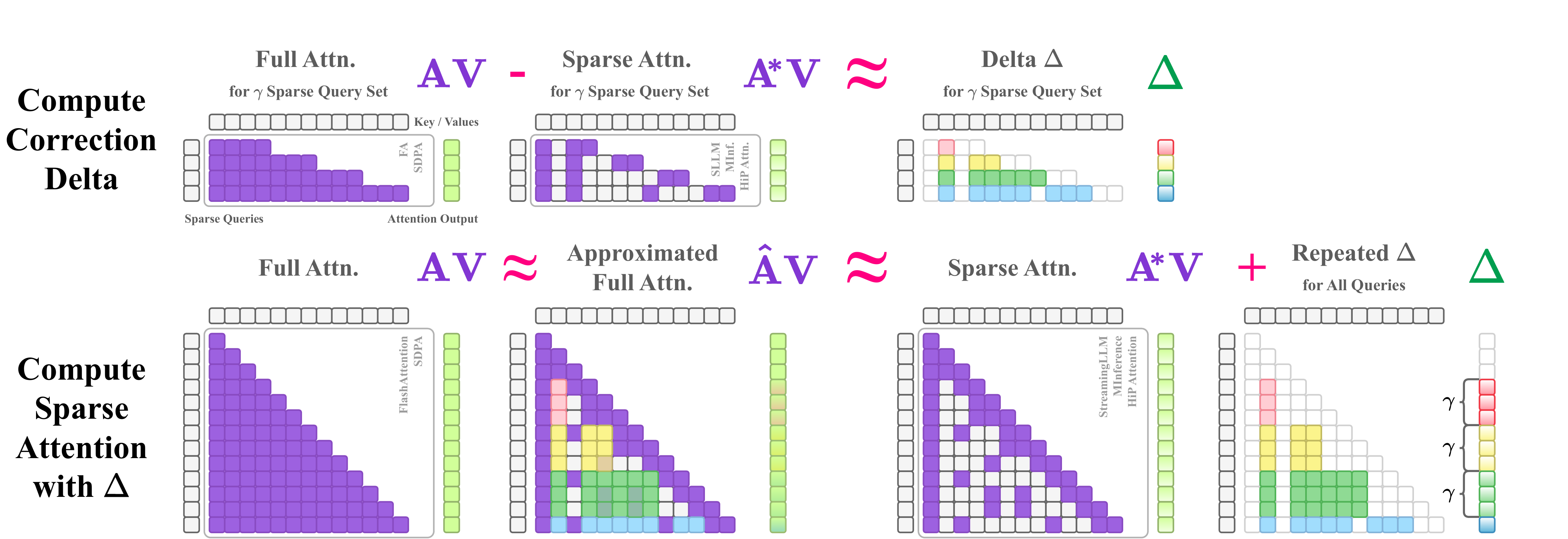}
    \caption{
    \textbf{Overview of $\mathbf{\Delta}$ Attention.} \textbf{(Top)} Given an arbitrary sparse attention method we calculate the difference between the sparse attention and full attention for a small subset of queries. The subset size is controlled by a hyperparameter $\gamma$. \textbf{(Bottom)} We then repeat the calculated difference for all output tokens and add the result to the full sparse attention output. The result is an approximation to the original quadratic attention.}
    \vspace{-1.0em}
    \label{fig:concept}
\end{figure}

\begin{wrapfigure}[8]{r}{0.45\textwidth}
  \vspace{-2.3em}
  \begin{center}
    \includegraphics[width=\linewidth,trim={0 0 170mm 0},clip]{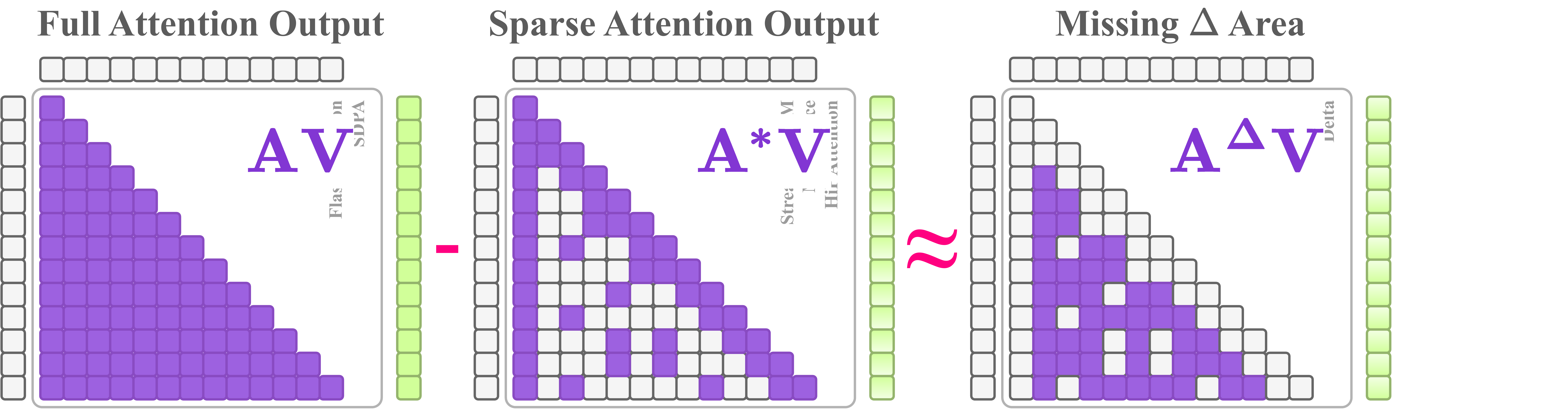}
  \end{center}
  \vspace{-0.8em}
  \caption{
  \textbf{Intuition for $\mathbf{\Delta}$ Attention.} The difference of attention outputs approximates the missing attention contribution.}
  \label{fig:intuition}
\end{wrapfigure}

Given the distributional shift shown in~\cref{fig:motivation}, our method answers the following question: \textit{How may we shift the distribution of attention outputs such that they are closer to the representation which is expected during quadratic attention?} Specifically, we wish to add a term to the sparse attention output $\mathbf{A}^*\mathbf{V}$ such that we recover the attention contribution $\mathbf{A^\Delta V}$ from the places where sparse attention has given zero weight. This region is usually located somewhere inside the lower triangle of the attention matrix and resembles a delta shape. We propose to approximate this $\mathbf{\Delta}$ region by a simple difference of attention outputs, as geometrically depicted in~\cref{fig:intuition}. Specifically, 

\vspace{-1.25em}
\begin{equation}
 \mathbf{A^\Delta V} \approx \mathbf{AV} - \mathbf{A}^*\mathbf{V}
\end{equation}
\vspace{-1.75em}

Note that the softmax normalization of sparse attention methods generally only computes the normalization constant over the nonzero values. Thus, $\mathbf{A}$ and $\mathbf{A}^*$ have different normalization constants, which makes the relation an approximation. We consider $\mathbf{A}$ and $\mathbf{A^\Delta}$ to share the same softmax normalization constant. Let the full attention softmax normalization constant be $T + H$, and the sparse attention normalization constant be $T$.

\begin{lemma} 
\label{lem:delta}
w.l.o.g. Consider an arbitrary row in the attention matrix $\mathbf{a}$ and arbitrary column of the values $\mathbf{v}$, with both $\mathbf{a}$ and $\mathbf{v}$ being sorted according to rank of $\mathbf{a}$ such that $\mathbf{a} = (a_{r(1)} \leq a_{r(2)} \leq \dots \leq a_{r(N)})$. For a top-$k$ sparse attention matrix which only computes the top-$k$ attention scores, one only needs to compute ${\mathbf{a}^*}^\top \mathbf{v}  = \sum_{N-k+1}^N \mathbf{a^*}_i\mathbf{v}_i$. With $\mathbf{\Delta} = \mathbf{a}^\top \mathbf{v}-\mathbf{a^*}^\top \mathbf{v}$, we may bound the error of our attention approximation as,

\vspace{-1em}
\[
\left| \mathbf{\Delta} - \sum_{i=1}^{N-k} \mathbf{a}_i\,\mathbf{v}_i \right| \leq \frac{H}{H + T} \max_{i > N - k} |v_i|
\]

\end{lemma}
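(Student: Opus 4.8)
The plan is to unfold the softmax definitions explicitly and carefully track the two different normalization constants. Write the pre-softmax scores (logits) of the chosen row as $s_1,\dots,s_N$, indexed in the sorted order of $\mathbf{a}$, so that the top-$k$ entries are exactly those with index $i>N-k$. Set $T=\sum_{i>N-k}e^{s_i}$ (the sparse normalizer), $H=\sum_{i\le N-k}e^{s_i}$ (the discarded mass), and observe $T+H=\sum_{i=1}^N e^{s_i}$ is the full normalizer, consistent with the convention fixed just before the lemma. Then $\mathbf{a}_i=e^{s_i}/(T+H)$ for every $i$, whereas $\mathbf{a^*}_i=e^{s_i}/T$ for $i>N-k$ and $\mathbf{a^*}_i=0$ otherwise.

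Next I would substitute these into $\mathbf{\Delta}=\mathbf{a}^\top\mathbf{v}-\mathbf{a^*}^\top\mathbf{v}$ and subtract the ``missing'' full-attention contribution $\sum_{i=1}^{N-k}\mathbf{a}_i\,\mathbf{v}_i$. The index-$i\le N-k$ terms of $\mathbf{a}^\top\mathbf{v}$ cancel exactly against this subtracted sum, so only a sum over the retained indices survives:
\[
\mathbf{\Delta}-\sum_{i=1}^{N-k}\mathbf{a}_i\,\mathbf{v}_i=\sum_{i>N-k}\Bigl(\tfrac{1}{T+H}-\tfrac{1}{T}\Bigr)e^{s_i}\mathbf{v}_i=-\frac{H}{T(T+H)}\sum_{i>N-k}e^{s_i}\mathbf{v}_i .
\]
Then I would take absolute values, apply the triangle inequality to move $|v_i|$ inside the sum, bound each $|v_i|\le\max_{i>N-k}|v_i|$, and finally use $\sum_{i>N-k}e^{s_i}=T$ to cancel one factor of $T$. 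The prefactor collapses as $\frac{H}{T(T+H)}\cdot T=\frac{H}{H+T}$, which is precisely the claimed bound.

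There is no genuinely hard step here; the argument is essentially bookkeeping. The one place to be careful is not to conflate $\mathbf{a}$ (normalized by $T+H$) with $\mathbf{a^*}$ (normalized by $T$), and to notice that the cancellation is with the \emph{un-renormalized} full-attention tail $\sum_{i\le N-k}\mathbf{a}_i\,\mathbf{v}_i$, which is exactly the quantity $\mathbf{\Delta}$ is meant to approximate. I would close with a one-line interpretation: the approximation error is governed by the fraction $H/(H+T)$ of softmax probability mass that the sparse pattern discards, scaled by the largest value-vector magnitude among the retained tokens, so whenever the top-$k$ entries capture most of the mass the $\mathbf{\Delta}$ correction is accurate.
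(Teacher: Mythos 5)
Your argument is correct and is essentially the paper's own proof: the same split of $\mathbf{\Delta}$ into the head sum plus a remainder over the retained indices, the same identification of the remainder as $-\tfrac{H}{H+T}$ times the $\mathbf{a^*}$-weighted tail average (you just carry the $e^{s_i}/T$ factors explicitly rather than writing them as $\mathbf{a^*}_i$ with $\sum_{i>N-k}\mathbf{a^*}_i=1$), and the same bound via the triangle inequality and the tail maximum of $|v_i|$. No gaps; the bookkeeping matches the paper's derivation step for step.
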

\vspace{-1em}
\begin{proof} See~\cref{sec:delta-lemma-restate}.\end{proof}

We ultimately seek a shift in the attention outputs such that $\mathbf{A^*V} + \mathbf{\Delta} \approx \mathbf{AV}$. Trivially, if we choose $\mathbf{\Delta} = \mathbf{AV} - \mathbf{A^*}\mathbf{V}$, we have exact equality; however, calculating $\mathbf{A}$ requires the full quadratic attention procedure that we wish to avoid. As $\mathbf{AV - A^*V} \approx \mathbf{A^\mathbf{\Delta}V}$, if we further assume that $(\mathbf{A^\mathbf{\Delta}V})_i \approx (\mathbf{A^\mathbf{\Delta}V})_{i+\nu}$ for $\nu \in \{1, \dots, \gamma\}$ and $\gamma \in \mathbb{N}$, then we may approximate $(\mathbf{AV})_{i+\nu} \approx (\mathbf{A^\mathbf{\Delta}V)}_{i} + (\mathbf{A^*V})_{i + \nu}$. Under this approximation, one only needs to compute every $\gamma^{\text{th}}$ row of the attention matrix, which maintains a sparse computation by only computing a subset of rows of $\mathbf{A}$. To do this, we select a fixed fraction of row indices from $\mathbf{Q}$, such that,

\vspace{-0.75em}
\begin{equation}
    \label{eq:q-index-selection}
    \widetilde{\mathbf{Q}}_{\lfloor \frac{i}{\gamma} \rfloor} = \mathbf{Q}_i \implies i \bmod \gamma = 0; \quad \forall \quad  i \in \{1\;..\;N\}
\end{equation}
\vspace{-0.75em}

and therefore $\mathbf{\widetilde{A}}\mathbf{V} = \sigma(\widetilde{\mathbf{Q}} \mathbf{K}^\top)\mathbf{V}$ which is sparse in the query dimension, but dense in the key dimension. One possible approach would be to substitute this representation into the appropriate rows of the sparse output $\mathbf{A}^*\mathbf{V}$ such that the final representation $\widehat{\mathbf{A}}$, is the following,

\vspace{-1em}
\begin{equation}
    \label{eq:recompute}
    \left(\widehat{\mathbf{A}} \mathbf{V}\right)_i = \left(\mathbf{A}^*\mathbf{V}\right)_i + \overbrace{\mathbbm{1}\{i \bmod \gamma = 0\} \left[ \mathbf{\widetilde{A}V}_{\lfloor \frac{i}{\gamma} \rfloor} - \left(\mathbf{A}^*\mathbf{V}\right)_{\lfloor \frac{i}{\gamma} \rfloor \gamma} \right]}^{\text{make a dense output row if $i \bmod \gamma = 0$}}; \quad \forall \quad  i \in \{1\;..\;N\}
\end{equation}

We dub this approach as `recompute', as we are essentially using the sparse representation with some densely computed output tokens interwoven at regular intervals. However, we find that this approach still does not shift the distribution of attention outputs far enough towards the expected representation under quadratic attention~(see \cref{fig:spearman-cos}). Therefore, in order to apply a shift to all tokens in the output of $\mathbf{A}^*\mathbf{V}$ while maintaining a sparse computation, we instead apply the following correction to the sparse attention output,

\vspace{-1.25em}
\begin{align}
    \label{eq:delta-final}
    \left(\widehat{\mathbf{A}} \mathbf{V}\right)_i 
    &= \left(\mathbf{A}^*\mathbf{V}\right)_i + (\mathbf{A^\Delta V})_{\lfloor \frac{i}{\gamma} \rfloor \gamma} \\
    &= \left(\mathbf{A}^*\mathbf{V}\right)_i + \underbrace{\left[ \mathbf{\tilde{A}V}_{\lfloor \frac{i}{\gamma} \rfloor} - \left(\mathbf{A}^*\mathbf{V}\right)_{\lfloor \frac{i}{\gamma} \rfloor \gamma} \right]}_{\mathbf{\Delta} \;\; \text{correction term}}
\end{align}
\vspace{-1em}

\begin{wrapfigure}[16]{r}{0.44\linewidth}
\vspace{-1.2em} 
\rule{\linewidth}{1.5pt}
\vspace{-1.2em}
\captionof{algorithm}{
$\mathbf{\Delta}$ Attention Algorithm
}\label{alg:delta}
\vspace{-0.2em}
\rule{\linewidth}{1pt}

\begin{algorithmic}
\Require $f()$, $f^*()$ $\mathbf{Q,K,V}$, $\gamma$

\State \text{\small \textcolor{gray}{// sparse attention for all of $\mathbf{Q}$}}
\State $\mathbf{A}^*\mathbf{V} \gets f^*(\mathbf{Q,K,V})$

\State $\mathbf{\widetilde{Q}} \gets $ Equation 4
\State \text{\small \textcolor{gray}{// dense attention every $\gamma^{\text{th}}$ query}}
\State $\mathbf{\tilde{A}}\mathbf{V} \gets f(\mathbf{\widetilde{Q},K,V})$ 

\State \text{\small \textcolor{gray}{// collect proper indices for $\mathbf{\Delta}$ construction}}
\State $\delta \gets \{i \mid i \bmod \gamma = 0\}$
\State $\mathbf{\Delta} \gets \mathbf{\tilde{A}}\mathbf{V} - (\mathbf{A^*}\mathbf{V})_{i \in \delta}$ 

\State \text{\small \textcolor{gray}{// repeat $\mathbf{\Delta}$ and apply correction}}
\State $\widehat{\mathbf{A}}\mathbf{V} = \mathbf{A}^*\mathbf{V} + \text{repeat}(\mathbf{\Delta}, \gamma)$

\State return $\widehat{\mathbf{A}} \mathbf{V}$
\vspace{-0.5em}
\end{algorithmic}
\rule{\linewidth}{1pt}
\end{wrapfigure}

Which is equivalent to swapping in a dense row of the attention matrix at every $\gamma^{\text{th}}$ row, and applying the difference between the dense and sparse attention for the previous $\gamma^{\text{th}}$ row otherwise. A visual depiction of this process can be seen in~\cref{fig:concept}, and pseudocode in~\cref{alg:delta}. Since our method is applied directly on the attention outputs, we may utilize existing sparse attention kernels to compute $\mathbf{A^*V}$ and make use of a minimally modified flash attention kernel to compute our query-sparse attention $\mathbf{\tilde{A}V}$.

Assuming that a row index $j$ of the attention matrix is not evenly divisible by $\gamma$, this means that an attention differential from a previous row is being applied to the current row $j$.  The intuition from this operation comes from prior works which have studied attention locality~\citep{hip}, finding that the difference between attention scores for neighboring tokens is generally small. Likewise, our conjecture is that the low attention score regions from neighboring rows of the attention matrix also have a negligible difference, allowing for the less important part of the row of the attention matrix to be reused multiple times. Specifically, as stated above~\cref{eq:q-index-selection}, we assume that $(\mathbf{A^\mathbf{\Delta}V})_i \approx (\mathbf{A^\mathbf{\Delta}V})_{i+\nu}$ for $\nu \in \{1, \dots, \gamma\}$ and $\gamma \in \mathbb{N}$. To validate this assumption, we examine the average cosine similarity of $(\mathbf{A^\mathbf{\Delta}V})_i$ within a $\gamma$ window on an input from the RULER 131K task set for various values of $\gamma$ in~\cref{fig:delta-diff-cos}. We find a high average cosine similarity within the window, implying that $(\mathbf{A^\mathbf{\Delta}V})_i$ may be reused for multiple rows of the attention output. 

\vspace{-0.5em}
\section{Experiments}
\label{sec:experiments}
\vspace{-0.5em}

We evaluate our method in terms of perplexity (PPL) and long context perplexity using the LongPPL~\citep{long-ppl} metric on a QA version of the PG19~\citep{pg19} test set, which was recently proposed as a long context understanding dataset~\citep{pg19-long-qa}. We also provide evaluations of our method on the RULER~\citep{ruler} benchmark, which tests models' performance under a number of long context retrieval tasks. Additionally, we evaluate our $\mathbf{\Delta}$ Attention on Infinite-Bench~\citep{inf-bench}, and also provide analysis that evaluates the effect of our $\mathbf{\Delta}$ correction on the distribution of attention outputs and scores, and overall attention latency. Our work considers that the decoding process shown in~\cref{eq:generation} is dense along the key dimension and should be able to successfully learn from previously encoded information during the sparse prefill.

We apply our method in conjunction with the sparse attention methods Streaming LLM~\citep{sink}, HiP~\citep{hip,infhip}, and MInference~\citep{minference}, on models from the Llama~\citep{llama} (3.1 and 4), and Mistral~\citep{mistral} model families. Unless otherwise noted, our standard setting uses $\gamma = 64$ which means we calculate every $64^{\text{th}}$ query row (approximately $98.5$\% sparsity) in the attention computation required by $\mathbf{\Delta}$ Attention.

\textbf{RULER.} For baselines on needle-in-a-haystack type tasks, we compare our method in addition to Streaming LLM, HiP, and MInference for both Llama and Mistral models. In all cases, $\mathbf{\Delta}$ Attention shows a large improvement upon the given sparse methods, and especially at the longer context lengths in~\cref{tab:ruler}. In particular, we note an improvement of nearly 37\%pt over Streaming LLM with the same 2K window size for 131K with Llama 3.1. For Streaming LLM, if we adjust for the extra computation needed by our method, we find that the approximate window size of our method is $3072$ (see~\cref{sec-approx-window-size} for calculation). This is due to the fact that we also use a sliding window of 2048 and compute every 64$^{\text{th}}$ row of the lower triangle in the attention matrix. Therefore, even when Streaming LLM is allowed a higher computational budget of a 4K window, $\mathbf{\Delta}$ Attention still results in an increase of 34\%pt, more than doubling the accuracy of Streaming LLM (+112\%, relative). Even when Streaming LLM is allowed a 32K window, Streaming LLM + $\mathbf{\Delta}$ with a 2K window still delivers higher accuracy.

\begin{table}[t]
\centering
\vspace{-0.25in}
\caption{ 
\textbf{RULER (Llama 3.1 8B Instruct and Mistral NeMo 12B)} for sparse attention methods. Adding $\mathbf{\Delta}$ Attention results in better overall accuracy, with the largest improvement occurring at the longest context length and on the most naive sparse method (Streaming LLM). Colors are relative to each attention method group + Flash Attention 2.}
\label{tab:ruler}
\vspace{0.5em}
\setlength{\tabcolsep}{3.5pt}
\newsavebox{\mytablebox}
\newlength{\mytableheight}
\savebox{\mytablebox}{%
\begin{tabular}{lccccccccccccccc}
\toprule
Model 
    & \multicolumn{10}{c}{Llama 3.1 8B Instruct} 
    & \multicolumn{5}{c}{Mistral NeMo 12B} \\
\midrule
\raisebox{1.0em}{\makecell{Attn.\\Method}}
    & \rotatebox[origin=l, x=3em]{-90}{\footnotesize Flash Attn.} 
    & \rotatebox[origin=l, x=3em]{-90}{\footnotesize Str. \tiny LLM} 
    & \rotatebox[origin=l, x=3em]{-90}{\footnotesize Str. \tiny LLM} 
    & \rotatebox[origin=l, x=3em]{-90}{\footnotesize Str. \tiny LLM} 
    & \rotatebox[origin=l, x=3em]{-90}{\footnotesize Str. \tiny LLM} 
    & \rotatebox[origin=l, x=3em]{-90}{\footnotesize \textbf{Str. \tiny LLM+}$\mathbf{\Delta}$}
    & \rotatebox[origin=l, x=3em]{-90}{\footnotesize MInf.} 
    & \rotatebox[origin=l, x=3em]{-90}{\footnotesize \textbf{MInf.+}$\mathbf{\Delta}$} 
    & \rotatebox[origin=l, x=3em]{-90}{\footnotesize HiP} 
    & \rotatebox[origin=l, x=3em]{-90}{\footnotesize \textbf{HiP+}$\mathbf{\Delta}$}
    & \rotatebox[origin=l, x=3em]{-90}{\footnotesize Flash Attn.} 
    & \rotatebox[origin=l, x=3em]{-90}{\footnotesize Str. \tiny LLM}
    & \rotatebox[origin=l, x=3em]{-90}{\footnotesize \textbf{Str. \tiny LLM+}$\mathbf{\Delta}$} 
    & \rotatebox[origin=l, x=3em]{-90}{\footnotesize HiP} 
    & \rotatebox[origin=l, x=3em]{-90}{\footnotesize \textbf{HiP+}$\mathbf{\Delta}$} \\
\midrule
Wind. & - & 2K & 4K & 16K & 32K & 2K & 3K & 3K & 3K & 3K & - & 2K & 2k & 3K & 3K\\
\midrule
4K & \cellcolor[HTML]{D8FFD8}96\tiny.74 &\cellcolor[HTML]{FFEDD8}90\tiny.52 &\cellcolor[HTML]{D9FFD8}96\tiny.71 &\cellcolor[HTML]{D9FFD8}96\tiny.71 &\cellcolor[HTML]{D9FFD8}96\tiny.71 &\cellcolor[HTML]{DAFFD8}96\tiny.54 &\cellcolor[HTML]{D8FFD8}96\tiny.74 &\cellcolor[HTML]{D9FFD8}96\tiny.71 &\cellcolor[HTML]{D8FFD8}96\tiny.80 &\cellcolor[HTML]{E4FFD8}96\tiny.31 &\cellcolor[HTML]{D8FFD8}90\tiny.60 &\cellcolor[HTML]{FFDFD8}71\tiny.01 &\cellcolor[HTML]{D9FFD8}90\tiny.42 &\cellcolor[HTML]{E0FFD8}90\tiny.36 &\cellcolor[HTML]{DAFFD8}90\tiny.55 \\
8K & \cellcolor[HTML]{D8FFD8}93\tiny.25 &\cellcolor[HTML]{FFDCD8}60\tiny.53 &\cellcolor[HTML]{D8FFD8}93\tiny.76 &\cellcolor[HTML]{D8FFD8}93\tiny.76 &\cellcolor[HTML]{D8FFD8}93\tiny.76 &\cellcolor[HTML]{DAFFD8}92\tiny.25 &\cellcolor[HTML]{D8FFD8}93\tiny.65 &\cellcolor[HTML]{D8FFD8}93\tiny.69 &\cellcolor[HTML]{D8FFD8}94\tiny.56 &\cellcolor[HTML]{D8FFD8}94\tiny.43 &\cellcolor[HTML]{D8FFD8}87\tiny.67 &\cellcolor[HTML]{FFDAD8}44\tiny.89 &\cellcolor[HTML]{DCFFD8}85\tiny.38 &\cellcolor[HTML]{D8FFD8}88\tiny.36 &\cellcolor[HTML]{D8FFD8}87\tiny.69 \\
16K & \cellcolor[HTML]{D8FFD8}90\tiny.99 &\cellcolor[HTML]{FFDAD8}38\tiny.13 &\cellcolor[HTML]{F9FFD8}68\tiny.07 &\cellcolor[HTML]{D8FFD8}91\tiny.15 &\cellcolor[HTML]{D8FFD8}91\tiny.15 &\cellcolor[HTML]{DCFFD8}88\tiny.66 &\cellcolor[HTML]{D8FFD8}92\tiny.32 &\cellcolor[HTML]{D8FFD8}91\tiny.34 &\cellcolor[HTML]{D8FFD8}94\tiny.10 &\cellcolor[HTML]{D8FFD8}93\tiny.86 &\cellcolor[HTML]{D8FFD8}81\tiny.82 &\cellcolor[HTML]{FFDAD8}33\tiny.28 &\cellcolor[HTML]{DEFFD8}78\tiny.07 &\cellcolor[HTML]{FFF2D8}78\tiny.07 &\cellcolor[HTML]{E2FFD8}81\tiny.08 \\
32K & \cellcolor[HTML]{D8FFD8}85\tiny.84 &\cellcolor[HTML]{FFD9D8}30\tiny.25 &\cellcolor[HTML]{FFEBD8}43\tiny.38 &\cellcolor[HTML]{FFFDD8}56\tiny.32 &\cellcolor[HTML]{D8FFD8}85\tiny.83 &\cellcolor[HTML]{DEFFD8}81\tiny.27 &\cellcolor[HTML]{D8FFD8}86\tiny.75 &\cellcolor[HTML]{D8FFD8}85\tiny.96 &\cellcolor[HTML]{D8FFD8}89\tiny.92 &\cellcolor[HTML]{D8FFD8}89\tiny.39 &\cellcolor[HTML]{D8FFD8}62\tiny.54 &\cellcolor[HTML]{FFD9D8}12\tiny.27 &\cellcolor[HTML]{FFFBD8}34\tiny.76 &\cellcolor[HTML]{FFEED8}58\tiny.76 &\cellcolor[HTML]{F8FFD8}60\tiny.38 \\
65K & \cellcolor[HTML]{D8FFD8}85\tiny.25 &\cellcolor[HTML]{FFD9D8}18\tiny.59 &\cellcolor[HTML]{FFEAD8}34\tiny.08 &\cellcolor[HTML]{FFF3D8}41\tiny.28 &\cellcolor[HTML]{F7FFD8}58\tiny.35 &\cellcolor[HTML]{E4FFD8}75\tiny.22 &\cellcolor[HTML]{E9FFD8}84\tiny.43 &\cellcolor[HTML]{F9FFD8}83\tiny.67 &\cellcolor[HTML]{FFF9D8}82\tiny.51 &\cellcolor[HTML]{DEFFD8}84\tiny.89 &\cellcolor[HTML]{D8FFD8}46\tiny.89 &\cellcolor[HTML]{FFD8D8}03\tiny.28 &\cellcolor[HTML]{FFEFD8}16\tiny.22 &\cellcolor[HTML]{FFDED8}35\tiny.87 &\cellcolor[HTML]{FAFFD8}41\tiny.56 \\
131K & \cellcolor[HTML]{D8FFD8}73\tiny.16 &\cellcolor[HTML]{FFD9D8}27\tiny.45 &\cellcolor[HTML]{FFDED8}30\tiny.32 &\cellcolor[HTML]{FFEFD8}40\tiny.51 &\cellcolor[HTML]{FFFDD8}49\tiny.17 &\cellcolor[HTML]{E7FFD8}64\tiny.40 &\cellcolor[HTML]{FFE6D8}65\tiny.73 &\cellcolor[HTML]{D8FFD8}73\tiny.31 &\cellcolor[HTML]{FFEED8}68\tiny.74 &\cellcolor[HTML]{D8FFD8}73\tiny.71 &\cellcolor[HTML]{D8FFD8}18\tiny.09 &\cellcolor[HTML]{FFDCD8}02\tiny.25 &\cellcolor[HTML]{FFD8D8}01\tiny.44 &\cellcolor[HTML]{FFDBD8}10\tiny.10 &\cellcolor[HTML]{FFE2D8}10\tiny.93 \\
\midrule
Avg. & \cellcolor[HTML]{D8FFD8}87\tiny.54 &\cellcolor[HTML]{FFDAD8}44\tiny.25 &\cellcolor[HTML]{FFF7D8}61\tiny.05 &\cellcolor[HTML]{F7FFD8}69\tiny.96 &\cellcolor[HTML]{E7FFD8}79\tiny.16 &\cellcolor[HTML]{E0FFD8}\underline{\textbf{83\tiny.06}} &\cellcolor[HTML]{EFFFD8}86\tiny.60 &\cellcolor[HTML]{DBFFD8}\underline{\textbf{87\tiny.44}} &\cellcolor[HTML]{D8FFD8}87\tiny.77 &\cellcolor[HTML]{D8FFD8}\underline{\textbf{88\tiny.76}} &\cellcolor[HTML]{D8FFD8}64\tiny.60 &\cellcolor[HTML]{FFDAD8}27\tiny.83 &\cellcolor[HTML]{F4FFD8}\underline{\textbf{51\tiny.05}} &\cellcolor[HTML]{FFECD8}60\tiny.25 &\cellcolor[HTML]{FAFFD8}\underline{\textbf{62\tiny.03}} \\
\bottomrule
\end{tabular}
}
\setlength{\mytableheight}{5.2cm}

\resizebox{\linewidth}{!}{%
  \begin{tikzpicture}
    \node[inner sep=0pt] (table) {\usebox{\mytablebox}};
    \draw[thick, dash pattern=on 4pt off 3pt] ($(table.south west)+(1.35cm,0.1cm)$) -- ++(0, \mytableheight);
    \draw[thick, dash pattern=on 4pt off 3pt] ($(table.south west)+(2.2cm,0.1cm)$) -- ++(0, \mytableheight);
    \draw[thick, dash pattern=on 4pt off 3pt] ($(table.south west)+(6.5cm,0.1cm)$) -- ++(0, \mytableheight);
    \draw[thick, dash pattern=on 4pt off 3pt] ($(table.south west)+(9.95cm,0.1cm)$) -- ++(0, 5.75cm);
    \draw[thick, dash pattern=on 4pt off 3pt] ($(table.south west)+(8.25,0.1cm)$) -- ++(0, \mytableheight);
    \draw[thick, dash pattern=on 4pt off 3pt] ($(table.south west)+(10.82cm,0.1cm)$) -- ++(0, \mytableheight);
    \draw[thick, dash pattern=on 4pt off 3pt] ($(table.south west)+(12.55cm,0.1cm)$) -- ++(0, \mytableheight);
  \end{tikzpicture}%
}
\vspace{-1.5em}
\end{table}

\begin{wraptable}[11]{r}{0.5\linewidth}
\vspace{-1.35em}
\setlength{\tabcolsep}{5.5pt}
\caption{ 
\textbf{Perplexity on PG19 Long QA~\citep{pg19-long-qa}.} Our simple $\mathbf{\Delta}$ correction results in a significant drop in both PPL and Long PPL.
}
\label{tab:pg19-longqa}
\vspace{-0.15em}
\resizebox{\linewidth}{!}{
    \begin{tabular}{lll}
        \toprule
        Method & Long PPL $\downarrow$ & PPL $\downarrow$ \\
        \midrule
        Flash Attention 2 & 5.11 (-) & 3.33 (-) \\
        \midrule
        Streaming LLM & 7.02 (+1.91) & 3.54 (+0.21) \\
        \textbf{Streaming LLM + $\mathbf{\Delta}$} & \textbf{5.96 (+0.85)} & \textbf{3.41 (+0.08)} \\
        \midrule
        HiP Attention & 6.29 (+1.18) & 3.48 (+0.15) \\
        \textbf{HiP Attention + $\mathbf{\Delta}$} & \textbf{5.45 (+0.34)} & \textbf{3.37 (+0.04)} \\
        \bottomrule
    \end{tabular}
}
\end{wraptable}

\textbf{Perplexity (PPL) and Long Perplexity (LongPPL).} We generated a QA dataset based on the PG19 test set according to the procedure outlined by~\cite{pg19-long-qa}. This results in a long context task where an entire book is used as context, along with a series of LLM-generated questions and answer pairs with total context lengths of approximately 100K. In order to excel at this task, a model must be able to retain all information and facts from the text, which may be asked in the follow-up QA session. We evaluate both PPL and LongPPL, where the latter metric selects a subset of tokens that are found to rely heavily on long context for the final loss calculation. LongPPL has been shown to have a stronger correlation with long context performance over PPL~\citep{long-ppl}. We use Llama 3.1 8B instruction-tuned models for this experiment. Results can be seen in~\cref{tab:pg19-longqa,fig:pg19-longqa-query-stride}. When our $\mathbf{\Delta}$ Attention is applied on top of both HiP and Streaming LLM, we achieve between a 50-75\% reduction in the PPL performance gap between quadratic attention. This trend holds true for both PPL and LongPPL. \cref{fig:pg19-longqa-query-stride} shows the effect of varying the $\gamma$ parameter form 8-256. As $\gamma$ also controls the sparsity, we find that as the sparsity increases, both perplexity metrics tend to rise.

\begin{table}[t]
\vspace{-0.25in}
\newcolumntype{C}[1]{>{\centering\let\newline\\\arraybackslash\hspace{0pt}}m{#1}}
\centering
\caption{ 
\textbf{$\infty$-bench results.} Colors are made relative to the best and worst metrics within each model group, with Flash Attention being part of every group. Our $\mathbf{\Delta}$ correction improves overall performance in every case. En.QAR displays recall for the En.QA subset.
}
\label{tab:infbench}
\vspace{0.5em}
\setlength{\tabcolsep}{2.5pt}
\newsavebox{\mytableboxruler}
\newlength{\mytableheightruler}
\savebox{\mytableboxruler}{%
\begin{tabular}{clc C{1.0cm} C{1.0cm} C{1.0cm} C{1.0cm} C{1.0cm} C{1.0cm} C{1.0cm} C{1.0cm} C{1.0cm}}
\toprule
Model 
    & Method 
    & Ctx Len. 
    & \scalebox{0.8}[1.0]{En.MC} 
    & \scalebox{0.8}[1.0]{En.QA} 
    & \scalebox{0.7}[1.0]{En.QAR} 
    & \scalebox{0.8}[1.0]{En.Sum} 
    & \scalebox{0.8}[1.0]{Passkey} 
    & \scalebox{0.7}[1.0]{Number} 
    & \scalebox{1.0}[1.0]{KV} 
    & \scalebox{0.8}[1.0]{Math.F} 
    & \scalebox{1.0}[1.0]{Avg.} \\
\midrule
\multirow{5.7}{*}{\makecell{Llama 3.1\\8B Instruct}} &Flash Attention & 126K & \cellcolor[HTML]{D8FFD8}64.19 &\cellcolor[HTML]{D8FFD8}35.89 &\cellcolor[HTML]{D8FFD8}44.69 &\cellcolor[HTML]{D8FFD8}31.59 &\cellcolor[HTML]{D8FFD8}99.13 &\cellcolor[HTML]{D8FFD8}99.83 &\cellcolor[HTML]{D8FFD8}92.40 &\cellcolor[HTML]{D8FFD8}24.86 &\cellcolor[HTML]{D8FFD8}61.57 \\
\cmidrule{2-12}
& HiP & 126K & \cellcolor[HTML]{FFE1D8}54.15 &\cellcolor[HTML]{FFE4D8}31.49 &\cellcolor[HTML]{FFE2D8}38.12 &\cellcolor[HTML]{F7FFD8}31.06 &\cellcolor[HTML]{FFDED8}75.08 &\cellcolor[HTML]{FFF6D8}96.10 &\cellcolor[HTML]{FFD9D8}30.60 &\cellcolor[HTML]{FFDED8}18.86 &\cellcolor[HTML]{FFDED8}46.93 \\
& \textbf{HiP +} $\mathbf{\Delta}$ & 126K & \cellcolor[HTML]{EDFFD8}61.14 &\cellcolor[HTML]{F9FFD8}33.70 &\cellcolor[HTML]{E4FFD8}43.54 &\cellcolor[HTML]{E9FFD8}31.30 &\cellcolor[HTML]{D8FFD8}100.0 &\cellcolor[HTML]{EFFFD8}97.97 &\cellcolor[HTML]{F4FFD8}69.60 &\cellcolor[HTML]{D8FFD8}25.71 &\cellcolor[HTML]{EAFFD8}\textbf{57.87} \\
\cmidrule{2-12}
& Str. LLM & 126K & \cellcolor[HTML]{FFDAD8}27.95 &\cellcolor[HTML]{FFD9D8}07.25 &\cellcolor[HTML]{FFD9D8}14.67 &\cellcolor[HTML]{FFDCD8}20.57 &\cellcolor[HTML]{FFD8D8}02.71 &\cellcolor[HTML]{FFD8D8}01.36 &\cellcolor[HTML]{FFD9D8}01.20 &\cellcolor[HTML]{D8FFD8}25.14 &\cellcolor[HTML]{FFD9D8}12.51 \\
& \textbf{Str. LLM +} $\mathbf{\Delta}$ & 126K & \cellcolor[HTML]{E9FFD8}56.33 &\cellcolor[HTML]{F5FFD8}24.93 &\cellcolor[HTML]{F5FFD8}33.35 &\cellcolor[HTML]{F7FFD8}26.95 &\cellcolor[HTML]{DBFFD8}96.27 &\cellcolor[HTML]{F0FFD8}68.81 &\cellcolor[HTML]{FFD8D8}00.40 &\cellcolor[HTML]{D8FFD8}25.43 &\cellcolor[HTML]{F7FFD8}\textbf{41.66} \\
\midrule
\midrule
\multirow{5.7}{*}{\makecell{Llama 4\\Scout 109B}} &Flash Attention & 384K & \cellcolor[HTML]{D8FFD8}82.10 &\cellcolor[HTML]{D8FFD8}44.34 &\cellcolor[HTML]{D8FFD8}48.82 &\cellcolor[HTML]{D8FFD8}35.30 &\cellcolor[HTML]{D8FFD8}100.0 &\cellcolor[HTML]{D8FFD8}100.0 &\cellcolor[HTML]{D8FFD8}99.20 &\cellcolor[HTML]{D8FFD8}43.14 &\cellcolor[HTML]{D8FFD8}69.11 \\
\cmidrule{2-12}
& HiP & 384K & \cellcolor[HTML]{D8FFD8}74.67 &\cellcolor[HTML]{D8FFD8}43.19 &\cellcolor[HTML]{D8FFD8}48.29 &\cellcolor[HTML]{D8FFD8}34.28 &\cellcolor[HTML]{D8FFD8}100.0 &\cellcolor[HTML]{D8FFD8}99.83 &\cellcolor[HTML]{D8FFD8}99.40 &\cellcolor[HTML]{D8FFD8}41.14 &\cellcolor[HTML]{D8FFD8}67.60 \\
& \textbf{HiP +} $\mathbf{\Delta}$ & 384K & \cellcolor[HTML]{D8FFD8}78.60 &\cellcolor[HTML]{D8FFD8}42.84 &\cellcolor[HTML]{D8FFD8}48.14 &\cellcolor[HTML]{D8FFD8}34.06 &\cellcolor[HTML]{D8FFD8}100.0 &\cellcolor[HTML]{DDFFD8}99.66 &\cellcolor[HTML]{D8FFD8}97.20 &\cellcolor[HTML]{D8FFD8}44.29 &\cellcolor[HTML]{D8FFD8}\textbf{68.10} \\
\cmidrule{2-12}
& Str. LLM & 384K & \cellcolor[HTML]{FFDED8}49.78 &\cellcolor[HTML]{FFDAD8}15.23 &\cellcolor[HTML]{FFDBD8}26.11 &\cellcolor[HTML]{DCFFD8}31.50 &\cellcolor[HTML]{FFDAD8}52.88 &\cellcolor[HTML]{FFD8D8}08.31 &\cellcolor[HTML]{FFD8D8}03.40 &\cellcolor[HTML]{D8FFD8}40.57 &\cellcolor[HTML]{FFDAD8}28.47 \\
& \textbf{Str. LLM +} $\mathbf{\Delta}$ & 384K & \cellcolor[HTML]{D8FFD8}73.80 &\cellcolor[HTML]{D8FFD8}37.82 &\cellcolor[HTML]{DFFFD8}43.03 &\cellcolor[HTML]{FFFAD8}30.62 &\cellcolor[HTML]{DFFFD8}94.75 &\cellcolor[HTML]{DFFFD8}91.36 &\cellcolor[HTML]{FFFDD8}46.60 &\cellcolor[HTML]{D8FFD8}40.86 &\cellcolor[HTML]{E2FFD8}\textbf{57.35} \\
\bottomrule
\end{tabular}
}
\setlength{\mytableheightruler}{5.5cm}

\resizebox{\linewidth}{!}{%
  \begin{tikzpicture}
    \node[inner sep=0pt] (table) {\usebox{\mytableboxruler}};
    \draw[thick, dash pattern=on 4pt off 3pt] ($(table.south west)+(15.0cm,0.1cm)$) -- ++(0, \mytableheightruler);
  \end{tikzpicture}%
}
\end{table}

\begin{figure}[t]
    \includegraphics[width=0.7\linewidth]{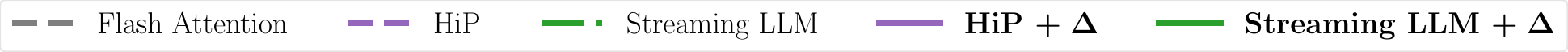}\\
    \begin{subfigure}[t]{0.65\textwidth}
    \centering
        \includegraphics[width=0.49\linewidth]{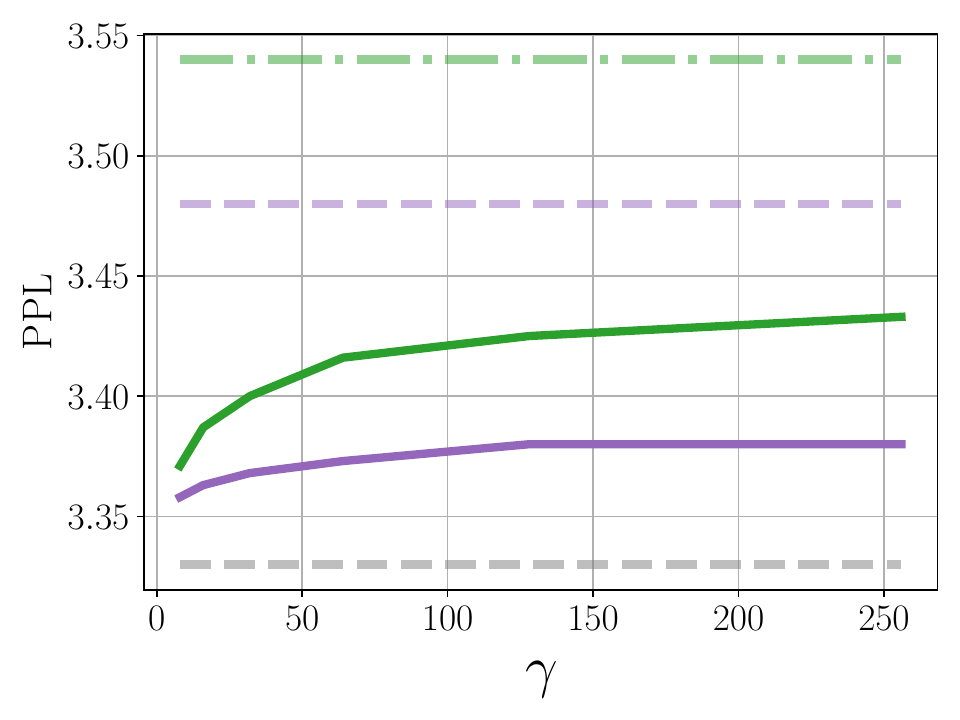}
        \includegraphics[width=0.49\linewidth]{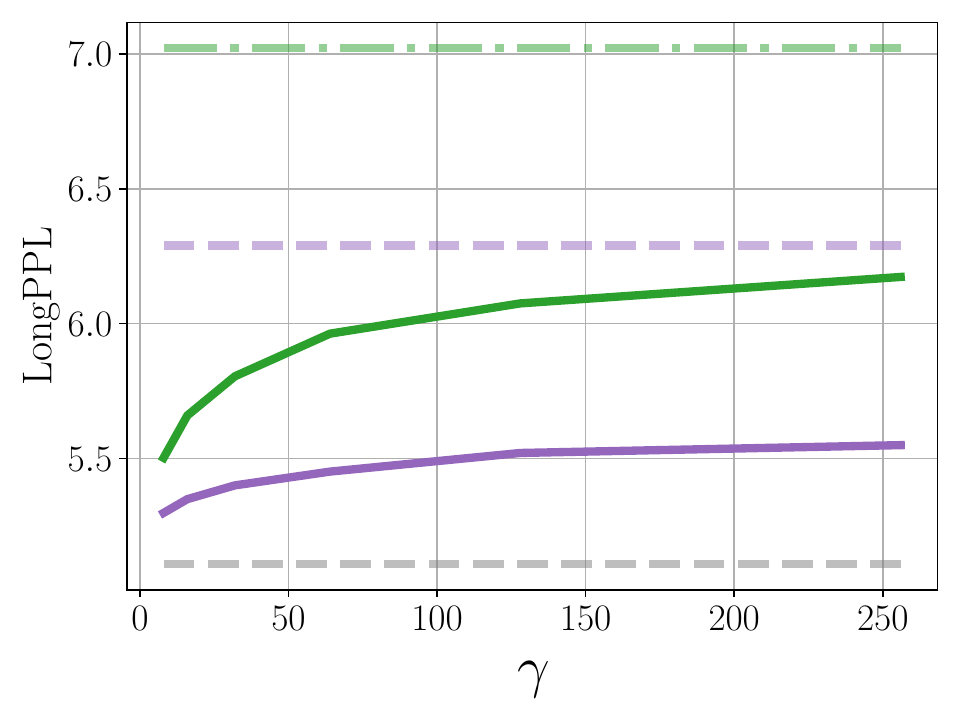}
    \vspace{-0.5em}
    \caption{Perplexity and Long Perplexity}
    \label{fig:ppl}
    \end{subfigure}%
    \begin{subfigure}[t]{0.32\textwidth}
    \centering
        \includegraphics[width=\linewidth]{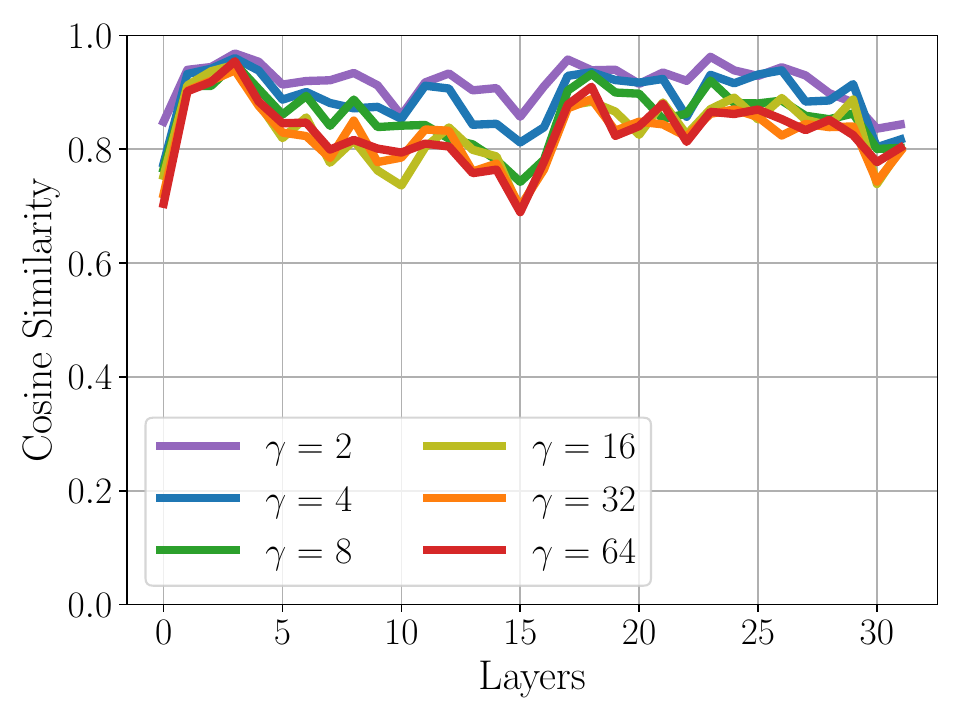}
    \vspace{-1.7em}
    \caption{cos($[\mathbf{A^\mathbf{\Delta}V}]_i, [\mathbf{A^\mathbf{\Delta}V}]_{i+\nu}$)}
    \label{fig:delta-diff-cos}
    \end{subfigure}%
    \vspace{-0.3em}
    \caption{ 
    \textbf{(\subref{fig:ppl})} Perplexity metrics for increasing $\gamma \in \{2^3, \dots, 2^8\}$ . For PPL and LongPPL, increasing the query stride shows a slight trend towards higher PPL with higher sparsity. \textbf{(\subref{fig:delta-diff-cos})} Measures the average cosine similarity between the approximate $(\mathbf{A^\mathbf{\Delta}V})_i$ and $(\mathbf{A^\mathbf{\Delta}V})_{i+\nu}$ for $\nu \in \{1, \dots, \gamma\}$ for Streaming LLM and finds a high similarity within a $\gamma$ neighborhood of attention outputs. High similarity implies $(\mathbf{A^\mathbf{\Delta}V})_i$ can be reused within the $\gamma$ neighborhood.}
    \label{fig:pg19-longqa-query-stride}
\end{figure}

\textbf{Infinite Bench.}~\citep{inf-bench} For both LLama 3.1 8B and Llama 4 Scout 109B, results are displayed in~\cref{tab:infbench}. The display colors are encoded to show the performance difference within each model group, and including flash attention in all groups. For Llama 4 (Streaming LLM), the addition of $\mathbf{\Delta}$ resulted in an increase of 40\%pt, which leads to recapturing 82\% of quadratic attention accuracy (up from 41\%). Similarly, for Llama 3.1, the addition of $\mathbf{\Delta}$ increased overall performance by 29\%pt, which moves from 20\% of full attention accuracy to recovering 67\%. The realized performance gains when applying our method to HiP result in a 10\%pt increase for Llama 3.1 and a 0.5\%pt increase for Llama 4. Note that HiP with Llama 4 only shows a total of 1.5\%pt gap in performance, which means that $\mathbf{\Delta}$ Attention was able to recapture 33\% of the total performance gap.

\subsection{Ablation \& Analysis}
\label{sec:ablation}

\textbf{Latency.} For a single attention layer, our method shows a large reduction in latency when compared to Flash Attention 2 benchmarked at 1M tokens. In~\cref{fig:latency}, HiP + $\mathbf{\Delta}$ runs more than 8 times faster. For Streaming LLM + $\mathbf{\Delta}$ this factor increases to over 32, which means that $\mathbf{\Delta}$ Attention may perform more than 32 attention operations for a single quadratic Flash Attention 2 operation. While our method does require more computation than the standalone sparse methods in~\cref{fig:sparse-latency}, the relative increase is modest in comparison to the latency of quadratic attention. MInference has been excluded from these latency results due to the current public implementation not fully utilizing hardware parallelization in this experiment. For further details, please see~\cref{sec:minference-latency-discussion}. 

\begin{figure}[t]
\centering
\vspace{-0.25in}
\begin{subfigure}[t]{0.32\textwidth}
    \centering
        \includegraphics[width=\linewidth]{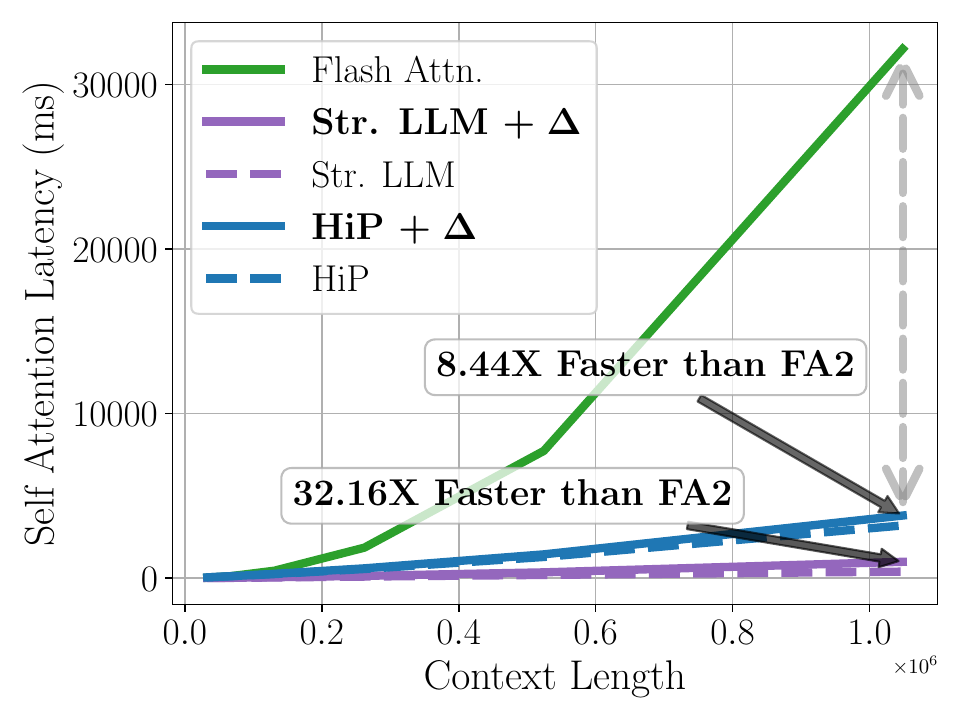}
    \caption{Latency vs. Flash Attention}
    \label{fig:flash-latency}
\end{subfigure}%
\begin{subfigure}[t]{0.325\textwidth}
    \centering
    \includegraphics[width=\linewidth]{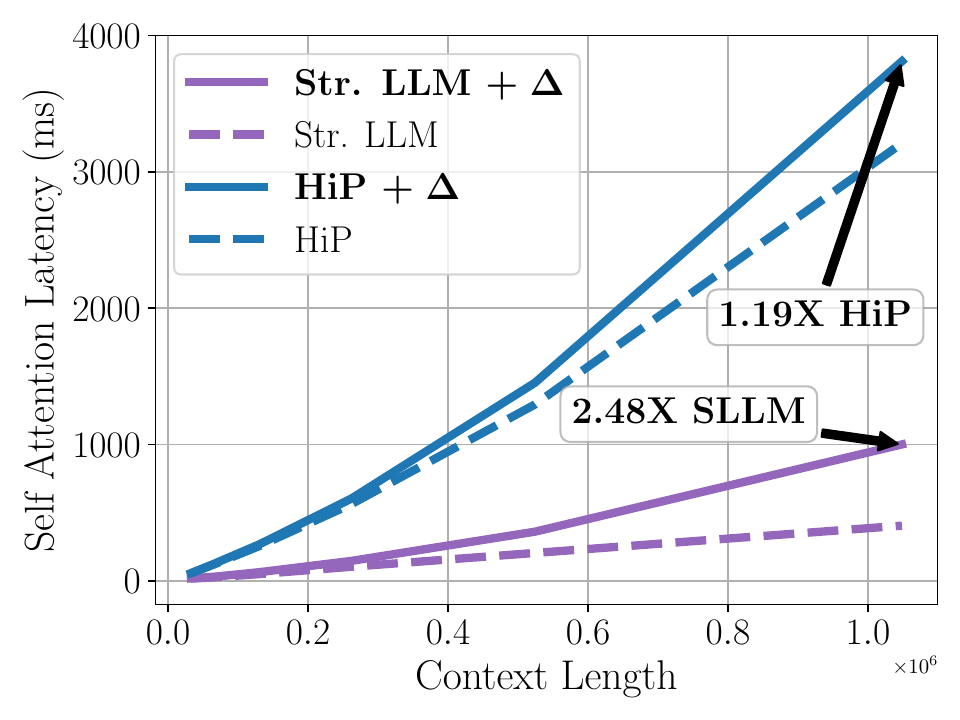}
    \caption{Latency vs. Sparse Methods}
    \label{fig:sparse-latency}
\end{subfigure}%
\begin{subfigure}[t]{0.32\textwidth}
    \centering
        \includegraphics[width=\linewidth]{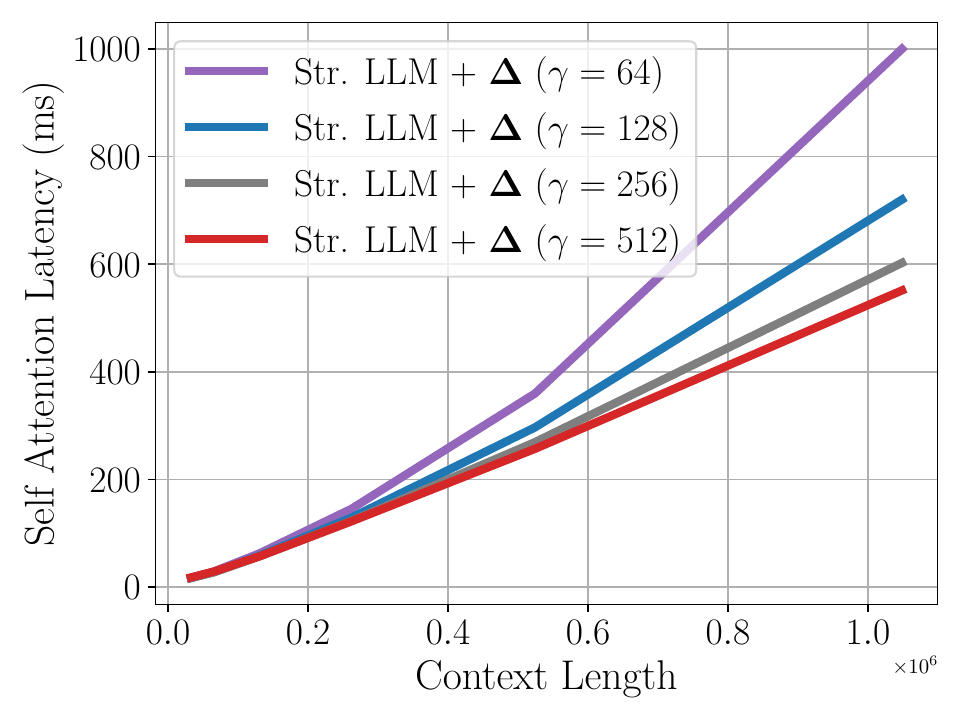}
    \caption{Latency for increasing $\gamma$}
    \label{fig:gamma-ablation}
\end{subfigure}%
\caption{\textbf{(\subref{fig:flash-latency})}  shows latency comparisons against flash attention at 1M tokens. Our method maintains most of the large latency reductions of sparse methods. \textbf{(\subref{fig:sparse-latency})} compares latency against plain sparse methods. Our method introduces a slight overhead due to requiring computation equivalent to 1.5\% of the whole attention matrix. \textbf{(\subref{fig:gamma-ablation})} evaluates the effect of different $\gamma$ parameters on latency. We find that increasing the stride between queries leads to an expected decrease in latency.}
\vspace{-1em}
\label{fig:latency}
\end{figure}

\textbf{How does the $\mathbf{\Delta}$ affect attention outputs and scores?} To study the effect of the $\mathbf{\Delta}$ correction on the attention outputs and scores, we evaluate both attention output cosine similarity and the Spearman rank correlation coefficient~\citep{spearmanr} of the attention rows for the last 128 queries of the prefill. For this, we used a sample from the MultiKey-3 RULER (131K) benchmark with the Llama3.1 8B instruction tuned model. A subset of layers is depicted in~\cref{fig:spearman-cos}, where each point in the plot and histogram is a random sample from one of the $32 \times 128$ (attention heads and queries). Additional plots for all layers in the network can be seen in~\cref{fig:spearman-cos-app,fig:spearman-cos-app-2,fig:spearman-cos-app-3} in the appendix. At the key lower layers where the induction heads are known to be most prevalent, we find that the $\mathbf{\Delta}$ correction results in a large corrective shift in both the rank correlation and cosine similarity, making both metrics much closer to the ground truth distributions of quadratic attention. Notably, only using `Recompute', which densely recomputes some rows of the attention matrix, is not enough to shift the distribution, as it is indistinguishable from the plain Streaming LLM model in \cref{fig:spearman-cos}. 

In ~\cref{sec:intro}, we stated that $\mathbf{\Delta}$ Attention shifts the distribution of attention outputs towards the distribution which would be seen under fully quadratic attention. \cref{fig:spearman-cos} provides three more examples of lower layers which show the same shift as shown in~\cref{fig:motivation}. It is notable, however, that this strong shift towards the distribution of quadratic attention is not present in all layers of the network. \cref{fig:spearman-cos-app,fig:spearman-cos-app-2,fig:spearman-cos-app-3} together show all layers. $\mathbf{\Delta}$ Attention appears to maintain a strong similarity to quadratic attention at the lower layers, which gradually dissipates until layer 10, when the three methods become indistinguishable. However, there is a sudden rise in attention output cosine similarity again towards the last layers of the network

While both the output cosine similarity and the rank correlation are important, the high rank correlation coefficient provides a crucial insight as to how the $\mathbf{\Delta}$ correction aids in improving performance. For sparse methods, the last 128 queries from a 131K context have undergone a distributional shift induced by the sparse method, which means that they no longer correctly align with the appropriate key tokens during dot-product attention. A high rank correlation, however, implies that the ranking (importance order) of dot products across an entire row of the attention matrix remains largely intact and therefore, should result in outputs with higher similarity to quadratic attention outputs. This suggests that dense decoding can now effectively access information buried deep in the prompt, which is something our experiments show sparse attention methods struggle to do. 

\begin{wraptable}[7]{r}{0.45\textwidth}
\setlength{\tabcolsep}{2.5pt}
\vspace{-1.35em}
\caption{
RULER ablation for \cref{eq:recompute} `recompute' and \cref{eq:delta-final} $\mathbf{\Delta}$.
}
\label{tab:ruler-ablation}
\vspace{-0.35em}

\newsavebox{\mytableboxrecompute}
\newlength{\mytableheightrecompute}
\savebox{\mytableboxrecompute}{%
\begin{tabular}{lccccc}
\toprule
Model & 131K & 65K & 32K & \dots & Avg. \\
\midrule
Str. LLM & \cellcolor[HTML]{FFD9D8}27.45 &\cellcolor[HTML]{FFD9D8}18.59 &\cellcolor[HTML]{FFD9D8}30.25 & \dots &\cellcolor[HTML]{FFDAD8}44.25 \\
\midrule
Str. LLM + Recompute & \cellcolor[HTML]{FAFFD8}52.67 &\cellcolor[HTML]{E7FFD8}72.71 &\cellcolor[HTML]{E2FFD8}78.39 & \dots &\cellcolor[HTML]{ECFFD8}79.99 \\
\textbf{Str. LLM + } $\mathbf{\Delta}$ & \cellcolor[HTML]{E7FFD8}64.40 &\cellcolor[HTML]{E4FFD8}75.22 &\cellcolor[HTML]{DEFFD8}81.27 & \dots &\cellcolor[HTML]{E8FFD8}\textbf{83.06} \\
\bottomrule
\end{tabular}
}

\setlength{\mytableheightrecompute}{1.4cm}

\resizebox{1.0\linewidth}{!}{%
  \begin{tikzpicture}
    \node[inner sep=0pt] (table) {\usebox{\mytableboxrecompute}};
    \draw[thick, dash pattern=on 4pt off 3pt] ($(table.south west)+(6.95cm,0.1cm)$) -- ++(0, \mytableheightrecompute);
  \end{tikzpicture}%
}

\end{wraptable}

\textbf{Does \cref{eq:recompute} or \cref{eq:delta-final} Perform Better?} In the previous paragraph we gave qualitative examples of the difference between \cref{eq:recompute} and \cref{eq:delta-final} on the attention output cosine similarity. Now we ask, how does this observed difference affect the performance of the model? \cref{tab:ruler-ablation} shows the effect of `recompute' from \cref{eq:recompute}, which recomputes a selected number of queries with dense attention and does not apply the difference to subsequent tokens in the $\gamma$ neighborhood. Only recomputing tokens results in a 37\%pt increase over all context lengths and is only 3\%pt short of matching $\mathbf{\Delta}$. However, at the longest context length, $\mathbf{\Delta}$ still delivers a more than 11\%pt increase in accuracy. 

\cref{fig:infbench-ablation} shows `recompute' compared to $\mathbf{\Delta}$ Attention for individual subsets of the RULER-131K context length.  We find that the only case where `recompute' outperforms our method is on the variable tracking subset (VT). We are unsure of the cause of this anomaly, although it is important to note that `recompute' even outperformed flash attention by approximately 15\%pt, which implies that there is some structure within this task that happened to benefit from `recompute'. In general, flash attention should represent an upper bound to sparse attention, which is what we observe in general. Note that the CWE subset of RULER is removed from this plot, as all methods (including flash attention) score 0\% on the 131K context length.

\begin{wrapfigure}[17]{r}{0.4\textwidth}
  \vspace{-2.0em}
    \includegraphics[width=\linewidth]{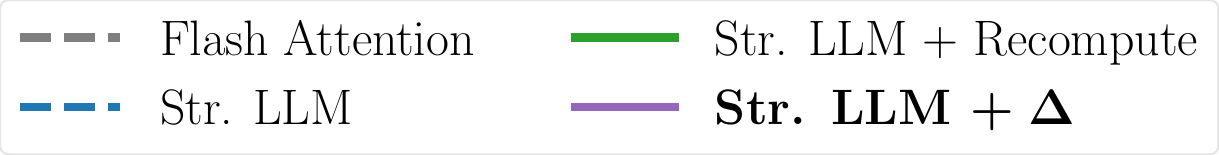}\\[-0em]
    \includegraphics[width=\linewidth]{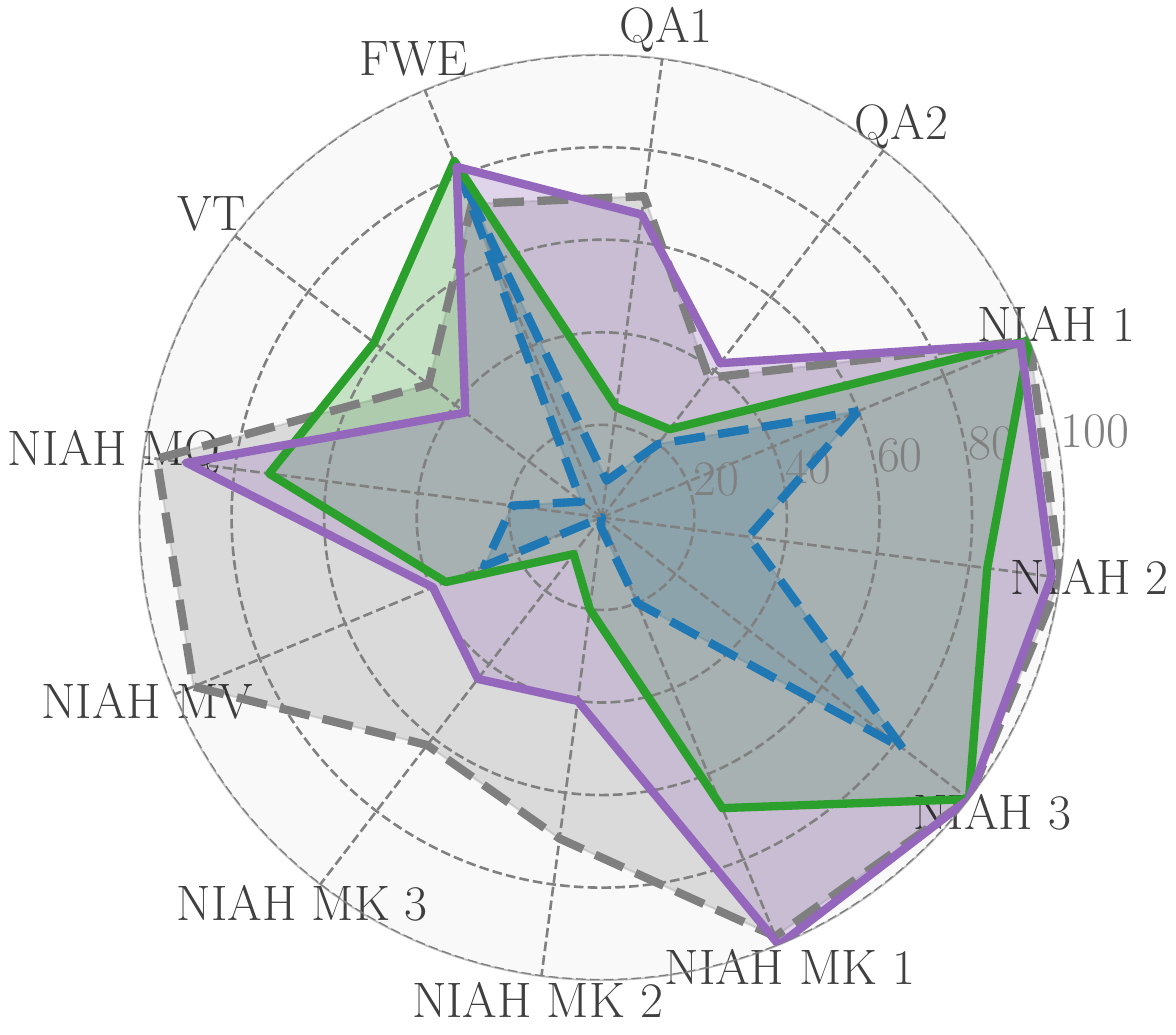}
  \vspace{-1.5em}
  \caption{ 
  Comparing the effects of \cref{eq:recompute} `recompute' and \cref{eq:delta-final} $\mathbf{\Delta}$ on RULER 131K subsets.}
  \label{fig:infbench-ablation}
\end{wrapfigure}

\begin{figure}[t]
    \vspace{-0.25in}
    \centering
    \includegraphics[width=0.5\linewidth]{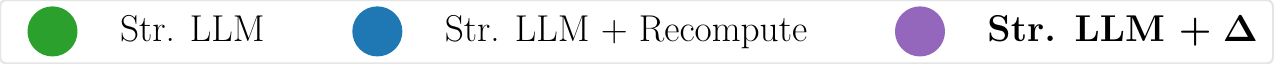} \\
    \includegraphics[width=0.32\linewidth]{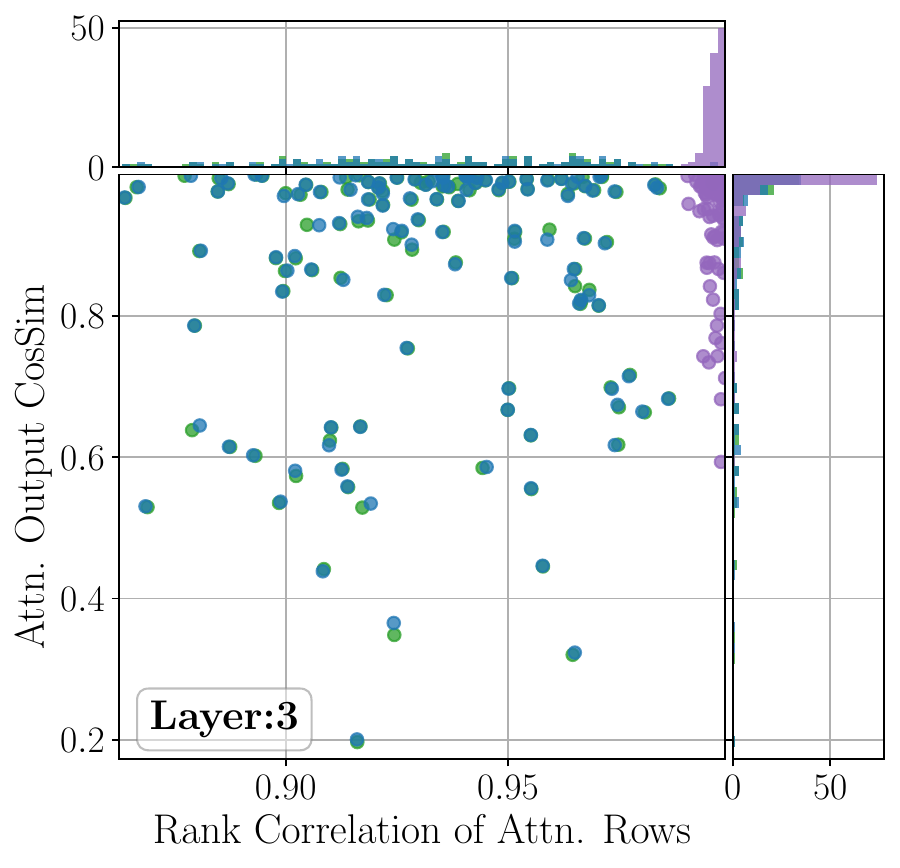}
    \includegraphics[width=0.32\linewidth]{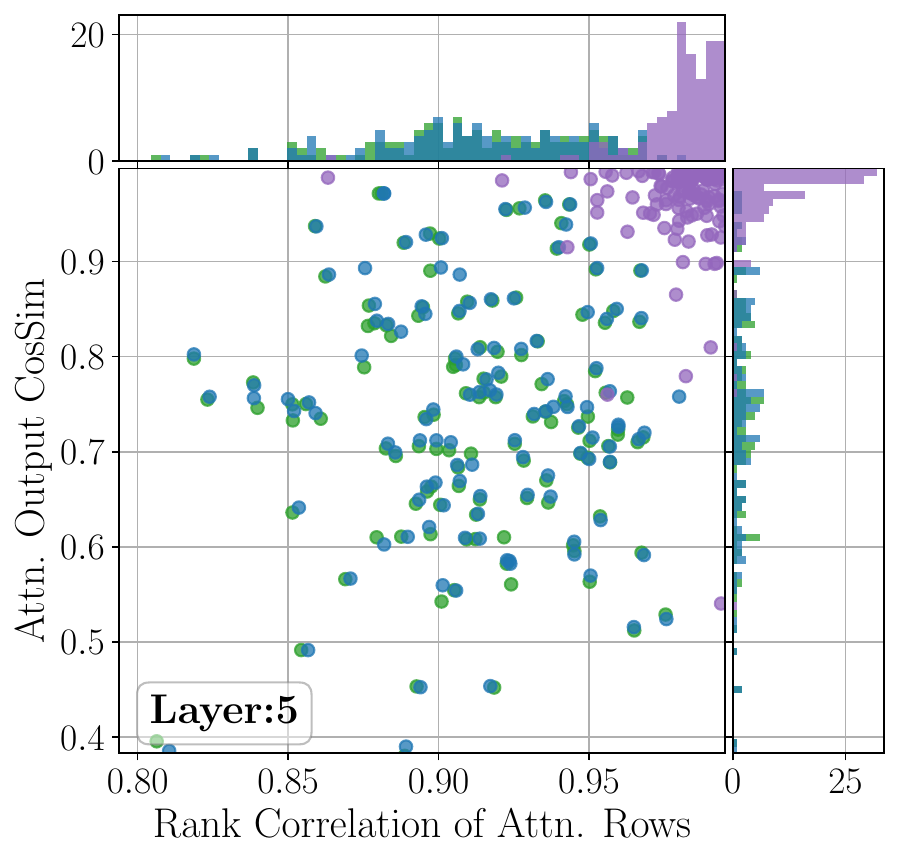}
    \includegraphics[width=0.32\linewidth]{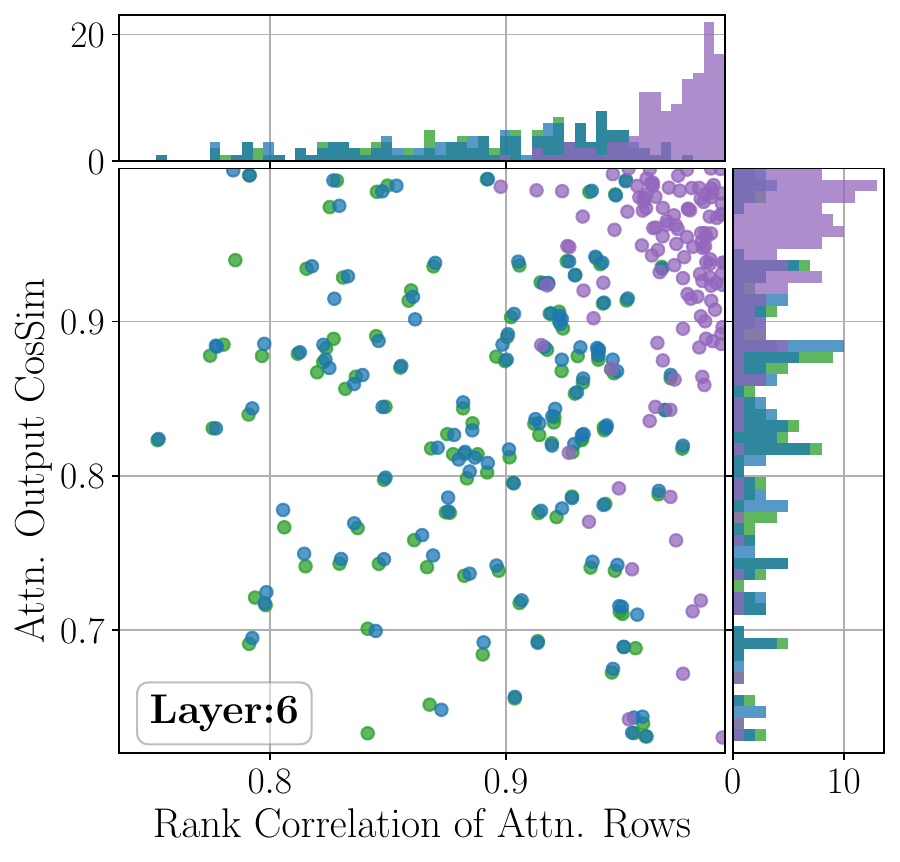}
    \caption{For RULER with a context length of 131K, we look at the final 128 tokens in the attention output and the final 128 queries in the attention matrix. We compare the cosine similarity of the outputs and the rank correlation of the attention rows to quadratic attention. We find that for both measures, $\mathbf{\Delta}$ Attention is more similar to quadratic attention.}
    \label{fig:spearman-cos}
    \vspace{-1.0em}
\end{figure}

\section{Discussion \& Limitations}
\label{sec:discussion}

Our method presented thus far has been a simple extension to existing sparse attention methods, which can be applied with a minimal addition of overhead and a very simple modification to the attention layer. The common way of computing sparse attention in prior work is to compute an attention output that is dense in the queries and sparse in the keys, so that there is at least one output for every input query token. One way to view our $\mathbf{\Delta}$ Attention extension is that we are mixing a key-sparse (and query-dense) attention output with a query-sparse (and key-dense) attention output in order to arrive at a representation which is closer to the quadratic attention output that is dense in both the queries and keys. 

The idea of viewing attention sparsity from both dimensions holds the potential for future works to explore novel ways of combining various combinations of sparse methods in order to approximate the full attention operation. With~\cref{lem:delta}, we were able to show that the difference of attention outputs approximates the missing attention output, however, we only have empirical evidence of the secondary approximation that $(\mathbf{A^\mathbf{\Delta}V})_i \approx (\mathbf{A^\mathbf{\Delta}V})_{i+\nu}$ for $\nu \in \{1, \dots, \gamma\}$. While this is empirically validated in our experiments and by the high cosine similarity in~\cref{fig:delta-diff-cos}, future works may study this approximation further, which could lead to creating a smarter selection criteria for the query sparse attention, as our method uses only a fixed hyperparameter to set the size of the gap between query tokens.

\section{Conclusion}
\label{sec:conclusion}
In this work, we first diagnose a harmful distributional shift induced by sparse attention prefill methods. We then propose a remedy with our lightweight, sparse-kernel agnostic $\mathbf{\Delta}$ Attention procedure. $\mathbf{\Delta}$ Attention corrects sparse outputs to align better with full quadratic attention outputs, requiring only a small post‑processing step that can be integrated seamlessly into existing inference pipelines. Across all benchmarks, and especially at the longest context lengths, our method delivers significant accuracy gains while maintaining high sparsity and low latency.

\newpage

\section{Acknowledgments}
\label{sec:ack}

This work was supported by:

\begin{itemize}
    \item Institute for Information \& communications Technology Planning \& Evaluation(IITP) grant funded by the Korea government(MSIT) (RS-2019-II190075, Artificial Intelligence Graduate School Program(KAIST)) 
    \item National Research Foundation of Korea (NRF) grant funded by the Korea government (MSIT) (No. RS-2023-00256259) 
    \item Artificial intelligence industrial convergence cluster development project funded by the Ministry of Science and ICT(MSIT, Korea) \& Gwangju Metropolitan City 
    \item The Institute of Information \& Communications Technology Planning \& Evaluation (IITP) with a grant funded by the Ministry of Science and ICT (MSIT) of the Republic of Korea in connection with the Global AI Frontier Lab International Collaborative Research. (No. RS-2024-00469482 \& RS-2024-00509279) 
    \item DeepAuto R\&D Program (No. DA-RS-2025-01) 
    \item A gift grant from Google 
\end{itemize}


\bibliographystyle{abbrvnat}
\bibliography{neurips_2025}

\newpage
\appendix

\section{Appendix Contents}

\begin{itemize}
\item \cref{sec:broader-impact} Discusses the broader impact of our work.
\item \cref{fig:spearman-cos-app,fig:spearman-cos-app-2,fig:spearman-cos-app-3} shows individual plots comparing cosine similarities and rank correlation coefficients against quadratic attention for all layers, analogous to \cref{fig:spearman-cos}.
\item \cref{fig:attention-latency-stride-ablation-app} shows an additional study on the $\gamma$ parameter and latency for HiP, analogous to~\cref{fig:gamma-ablation}.
\item \cref{sec:implementation-details} discusses details about the implementation of our method.
\item \cref{sec:minference-latency-discussion} discusses details regarding latency for MInference.
\item \cref{fig:ruler-bar-app} shows bar charts for the full set of datasets for the RULER 131K context length.
\item \cref{sec:compute-resources} states the computing resources that were used for the experiments in this work. 
\item \cref{sec:extended-related-work} discusses additional related works and comparisons.
\item \cref{sec:repoqa} shows the performance of our method on code understanding.
\item \cref{sec:interpolation} discusses and shows results for interpolation/inputation between delta terms.
\item \cref{sec:paired-permutation} contains a paired permutation statistical significance test corresponding to the RULER results in \cref{tab:ruler}.
\end{itemize}

\section{Broader Impact}
\label{sec:broader-impact}

We are not aware of any negative potential impacts of our work beyond impacts that are general to all machine learning models. However, lowering the computational cost for inference has the potential to lower costs such as electricity consumption, hardware requirements, and latency for end users. If this can effectively be done with minimal degradation in the performance of the underlying model, it will likely be beneficial to both producers and consumers of AI models.

\section{Implementation Details}
\label{sec:implementation-details}

In addition to the index selection in~\cref{eq:q-index-selection}, in practice, we also select a block of queries for dense recomputation at the end of the prefill sequence, which makes the part of the prefill which requires a delta correction evenly divisible by $\gamma$. We do this for both ease of implementation and also to provide the decoding tokens with the most accurate block of recent context. The block of queries at the end of the sequence allows us to simply reshape a tensor and project the $\mathbf{\Delta}$ correction onto every element in the block, as the tensor that needs a delta correction will have a regular size that is divisible by $\gamma$.  

\section{Compute Resources}
\label{sec:compute-resources}

For LLM inference on benchmark datasets, we use Google Cloud Platform's 8x NVIDIA H100 node. For latency measurements, we use a standalone machine with an NVIDIA RTX 4090 in order to have a controlled environment. Here, we show the detailed specification of the latency benchmarking machine: 

\hspace{0.7in}
\begin{tabular}{p{1in} p{3in}}
\toprule
CPU & AMD Ryzen 7950X, 16 Core, 32 Thread\\
RAM & 128GB, DDR5 5600 Mhz\\
GPU & Nvidia RTX 4090, VRAM 24GB\\
PCIe & Gen 4.0 x8\\
OS & Ubuntu 22.04.4 LTS \\
GPU Driver & 535.171.04\\
\bottomrule
\end{tabular}

\section{Latency of MInference with Delta Attention}
\label{sec:minference-latency-discussion}

We did not report the latency of MInference in the main paper, because MInference shows unusually slower latency than other tested methods, including Flash Attention.
We think this is due to (1) insufficient optimization of the publicly available kernel \footnote{https://github.com/microsoft/MInference} and (2) MInference uses a for-loop across the head dimension that prevents the head dimension from being parallelized within the GPU. This limitation of the publicly available implementation will cause the latency to suffer if the attention calculation for each head does not fully utilize the hardware. This for-loop structure was likely implemented in this way because MInference uses different sparse attention strategies for each head. Therefore, as MInference is algorithmically faster than flash attention, we do not report the latency in~\cref{fig:latency}, as this would be misleading to readers who are not familiar with the low-level details of the implementations.


We capture the kernel latencies and hardware utilization for MInference. 
In our analysis with Nsight Systems, their vertical slash pattern kernel `\texttt{\_triton\_mixed\_sparse\_attn\_fwd\_kernel}', shows around 32 milliseconds latency for a single head, while flash attention shows only 462 milliseconds for 32 heads.
The MInference kernel shows noticeably low utilization of streaming multiprocessor warps, which is around 9\%.

However, for completeness, we put the latency measurements of MInference in~\cref{tab:minference-latency}.
In our measurement using their official codebase without meaningful modification, with pre-compiled model configuration for head-wise sparse method settings, Minference is about 1.377 times slower than Flash Attention. 
We believe this is only due to the lack of a fully parallelized kernel and not the design of the method.

\begin{table}[h]
\centering
\caption{Prefill latency measurements (ms) that include MInference on RTX 4090 up to 256K context length.}
\label{tab:minference-latency}
\vspace{1.0em}
\begin{tabular}{lrrrrr}\toprule
&32K &64K &128K &256K \\\midrule
FA &34.27 &119.77 &462.39 &1858.60 \\
HiP &53.61 &118.53 &255.05 &562.24 \\
HiP + $\Delta$ &55.44 &123.49 &268.02 &602.74 \\
Minference &135.28 &395.92 &1083.66 &2559.47 \\
\bottomrule
\end{tabular}
\end{table}





\begin{figure}
    \centering
    \includegraphics[width=0.5\linewidth]{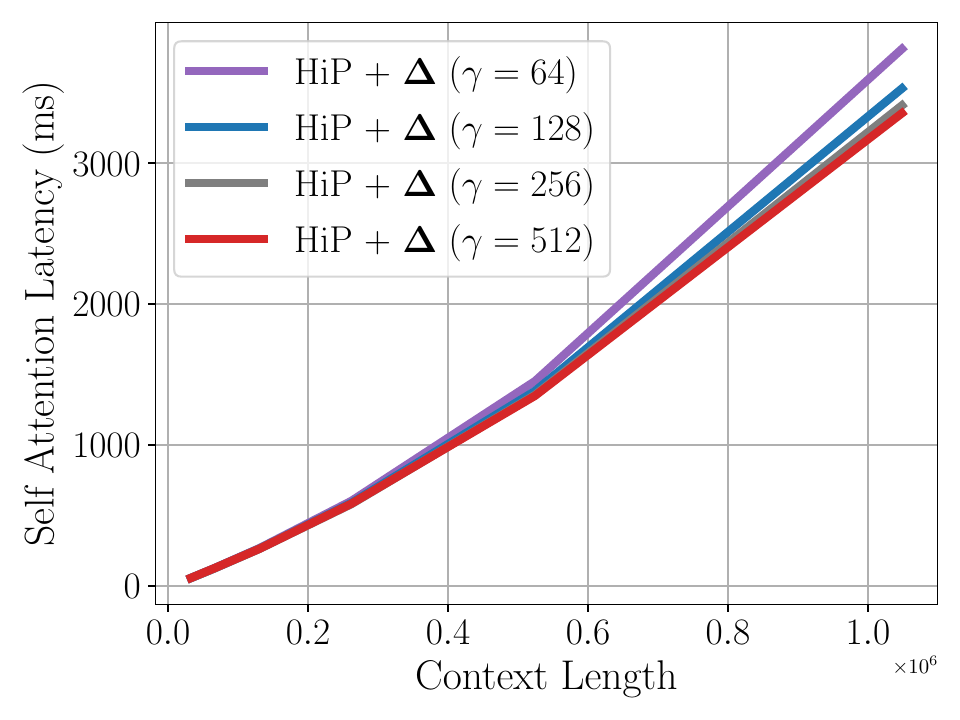}
    \caption{Latency measurements for different settings of $\gamma$ which controls the gap size between queries and also the overall sparsity of the calculation. This figure accompanies the latency ablation for Streaming LLM in the main text, \cref{fig:gamma-ablation}.}
    \label{fig:attention-latency-stride-ablation-app}
\end{figure}

\section{Approx Window Size Calculation}
\label{sec-approx-window-size}

When comparing out method to Streaming LLM, we would like to know how much computation overhead is increased in order to estimate the approximate window size of our method due to the fact that $\mathbf{\Delta}$ Attention computes extra tokens. We can calculate this as follows with $C$ as the context size, and $w$ as the window size in a single row of the attention matrix, our method will compute every $\gamma^{\text{th}}$ row of the attention matrix which would be equivalent to $\frac{C}{2\gamma}$ when amortized into each row calculation. This brings the total calculation per row to $w + \frac{C}{2\gamma}$. In the case of 131K context, a window size of $2048$, and $\gamma = 64$ (our standard setting) this would be evaluated as $2048 + \frac{2^{17}}{2(2^6)} = 2048 + 2^{10} = 2048 + 1024 = 3072$.

\section{Restatement and proof of~\texorpdfstring{\cref{lem:delta}}{}}
\label{sec:delta-lemma-restate}

We want to show that the difference of $\mathbf{A^\Delta V \approx AV - A}^* \mathbf{V}$ is approximately equal to the missing delta-shaped attention output, which is pictured in~\cref{fig:intuition}. w.l.o.g., we will consider a single arbitrary row of the attention matrix $\mathbf{a}$ and a single column vector from the values $\mathbf{v}$. The following is true regardless of the selected entries in $\mathbf{a}$, however, in order to create a tighter error bound, we assume the existence of a sparse attention method which chooses the largest attention values in $\mathbf{a}$ when calculating the sparse dot product $\mathbf{v^\top a^*}$. Specifically,

\begin{lemmarestate}[name=Lemma~\ref{lem:delta}, restated]
\label{lem:delta-restate}
Let \(\mathbf{\bar{a}}=(\bar{a}_1,\dots,\bar{a}_d)\in\mathbb{R}^d\) be the pre-softmax vector which is sorted and satisfies,
\[
\bar{a}_1\le \bar{a}_2\le\cdots\le \bar{a}_N,
\]
then any exact top-$k$ sparse attention method which selects the top-$k$ attention scores should select the last $k$ elements of $\mathbf{a}$. Fix an integer \(1 \le k \leq N\).  Define the head-sum $H$, tail-sum $T$, and normalization constant $Z$ to be the following:
\begin{align}
H &= \sum_{i=1}^{N-k}e^{\bar{a}_i}, \\
T &= \sum_{i=N-k+1}^{N}e^{\bar{a}_i}, \\
Z &= H+T.
\end{align}
Set
\[
\mathbf{a}_i \;=\;\frac{e^{\bar{a}_i}}{Z},
\qquad
\mathbf{a^*}_i \;=\;
\begin{cases}
0, & i\le N-k,\\[6pt]
\dfrac{e^{\bar{a}_i}}{T}, & i>N-k.
\end{cases}
\]
For any \(\mathbf{v}=(v_1,\dots,v_d)\in\mathbb{R}^d\) which is sorted according to the rank of elements in $\mathbf{a}$, define the tail‐max as,
\[
M_{\mathrm{tail}}
\;=\;\max_{\,i>N-k}\bigl|v_i\bigr|.
\]
write
\[
\mathbf{\Delta} \;=\;\mathbf{a}^\top \mathbf{v} \;-\;\mathbf{a^*}^\top \mathbf{v},
\]
we have the exact decomposition
\[
\mathbf{\Delta}
=\sum_{i=1}^{N-k} \mathbf{a}_i\,\mathbf{v}_i
\;+\;R,
\]
where the “remainder” term
\[
R
=\sum_{i=N-k+1}^{N}\bigl[\mathbf{a}_i- \mathbf{a^*}_i\bigr]\,\mathbf{v}_i
\]
is upper bounded by
\[
\bigl\lvert R\bigr\rvert
\;\le\;\frac{H}{H + T}\;M_{\mathrm{tail}}.
\]
Therefore,
\begin{align}
\left|\mathbf{\Delta} - \sum_{i=1}^{N-k}\mathbf{a}_i\,\mathbf{v}_i\right|
&=\lvert R\rvert \\
& \leq \frac{H}{H + T}\;M_{\mathrm{tail}}
\end{align}

\end{lemmarestate}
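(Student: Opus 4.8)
The plan is to treat the displayed decomposition of $\mathbf{\Delta}$ as a pure rearrangement of sums, so that the entire content of the lemma reduces to the bound on the remainder $R$. I would obtain that bound by rewriting the tail coefficients $\mathbf{a}_i-\mathbf{a^*}_i$ in closed form and then applying a convex‑combination estimate.

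\textbf{Step 1 (the decomposition is algebra).} Since $\mathbf{a^*}_i=0$ for $i\le N-k$, we have $\mathbf{a^*}^\top\mathbf{v}=\sum_{i=N-k+1}^{N}\mathbf{a^*}_i\mathbf{v}_i$, so
\[
\mathbf{\Delta}=\sum_{i=1}^{N}\mathbf{a}_i\mathbf{v}_i-\sum_{i=N-k+1}^{N}\mathbf{a^*}_i\mathbf{v}_i=\sum_{i=1}^{N-k}\mathbf{a}_i\mathbf{v}_i+\sum_{i=N-k+1}^{N}(\mathbf{a}_i-\mathbf{a^*}_i)\mathbf{v}_i,
\]
which is exactly $\sum_{i=1}^{N-k}\mathbf{a}_i\mathbf{v}_i+R$. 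I would also note, as a one‑line remark, that an exact top‑$k$ method selects the indices $\{N-k+1,\dots,N\}$: $\exp$ is increasing, so the $k$ largest pre‑softmax entries of the sorted vector $\bar{\mathbf{a}}$ are its last $k$ entries, which is what makes $\mathbf{a^*}$ supported there.

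\textbf{Step 2 (rewrite the tail coefficients).} For $i>N-k$, using $Z=H+T$,
\[
\mathbf{a}_i-\mathbf{a^*}_i=e^{\bar a_i}\Bigl(\tfrac{1}{Z}-\tfrac{1}{T}\Bigr)=e^{\bar a_i}\,\frac{T-Z}{ZT}=-\frac{H}{Z}\cdot\frac{e^{\bar a_i}}{T}=-\frac{H}{Z}\,\mathbf{a^*}_i,
\]
hence $R=-\dfrac{H}{Z}\sum_{i=N-k+1}^{N}\mathbf{a^*}_i\mathbf{v}_i$. Then, because $\sum_{i=N-k+1}^{N}\mathbf{a^*}_i=\tfrac{1}{T}\sum_{i=N-k+1}^{N}e^{\bar a_i}=1$ with each $\mathbf{a^*}_i\ge 0$, the sum $\sum_{i=N-k+1}^{N}\mathbf{a^*}_i\mathbf{v}_i$ is a convex combination of the tail entries $\{\mathbf{v}_i:i>N-k\}$, so its absolute value is at most $M_{\mathrm{tail}}=\max_{i>N-k}|v_i|$. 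Combining, $|R|\le\frac{H}{Z}M_{\mathrm{tail}}=\frac{H}{H+T}M_{\mathrm{tail}}$, and together with Step 1 this gives both the exact identity $\mathbf{\Delta}-\sum_{i=1}^{N-k}\mathbf{a}_i\mathbf{v}_i=R$ and the stated inequality.

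\textbf{Main obstacle.} There is essentially none; the argument is elementary. The only points that need care are the sign bookkeeping in the coefficient identity of Step 2 and the observation that the renormalized tail weights sum to one, which is precisely what licenses the convex‑combination bound. The hypothesis that $\mathbf{v}$ is permuted to match the sort order of $\mathbf{a}$ is not actually used in deriving the bound — it only specifies which coordinates of $\mathbf{v}$ count as tail entries in the definition of $M_{\mathrm{tail}}$.
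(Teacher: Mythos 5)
Your proposal is correct and follows essentially the same route as the paper's proof: the same split of $\mathbf{\Delta}$ into the head sum plus the remainder $R$, the same identity $\mathbf{a}_i-\mathbf{a^*}_i=-\frac{H}{H+T}\,\mathbf{a^*}_i$ on the tail, and the same convex-combination bound using $\sum_{i>N-k}\mathbf{a^*}_i=1$. No gaps; your closing remark about the sort hypothesis only fixing which coordinates count as tail entries is also accurate.
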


\begin{proof}
Split
\begin{align}
\mathbf{\Delta}
&=\sum_{i=1}^{N-k}\mathbf{a}_i\,\mathbf{v}_i + \sum_{i=N-k+1}^N\bigl[\mathbf{a}_i-\mathbf{a^*}_i\bigr]\,\mathbf{v}_i \\
&=\sum_{i=1}^{N-k}\mathbf{a}_i\,\mathbf{v}_i+R.
\end{align}
For \(i>N-k\), 
\[
\mathbf{a}_i
=\frac{e^{\bar{a}_i}}{H+T}
=\frac{e^{\bar{a}_i}}{T}\,\frac{T}{H+T}
=\mathbf{a^*}_i\,\frac{T}{H+T},
\]
so
\begin{align}
\mathbf{a}_i- \mathbf{a^*}_i
&=  \mathbf{a^*}_i\,\frac{T}{H+T} - \mathbf{a}^*_i \\
&=  \mathbf{a^*}_i\,\left(\frac{T}{H+T} - 1\right) \\
&= -\,\mathbf{a^*}_i\,\frac{H}{H+T}.
\end{align}
Thus
\[
R
=-\frac{H}{H+T}\sum_{i=N-k+1}^N \mathbf{a^*}_i\,\mathbf{v}_i,
\]
and since \(\sum_{i=N-k+1}^N \mathbf{a^*}_i=1\) and \(\lvert \mathbf{v}_i\rvert\le M_{\mathrm{tail}}\) on the tail,
\begin{align}
\lvert R\rvert
 & =\frac{H}{H+T}\left|\sum_{i=N-k+1}^N\mathbf{a^*}_i\, \mathbf{v}_i\right| \\
 & \le\frac{H}{H+T}\sum_{i=N-k+1}^N\mathbf{a^*}_i\,\lvert \mathbf{v}_i\rvert \\
 &\le\frac{H}{H+T}\,M_{\mathrm{tail}}.
\end{align}
completing the proof.
\end{proof}

If we assume that $T \gg H$ as is the expected outcome with sparse attention, then the bound becomes tighter, as the denominator $H + T \gg H$. This implies that better sparse top-$k$ approximations will result in a lower error bound. We empirically verified this difference in \cref{fig:r-bound}, which analyzes both the error bound and the empirical error on a real input from the RULER-131K subset. \cref{fig:oracle-bound} measures the bound and empirical error of an oracle top-$k$ attention while \cref{fig:sllm-bound} measures the same bound and empirical error for Streaming LLM, which chooses a sliding window and attention sink. We find that the bound is generally tighter for the oracle top-$k$ attention, but in both cases, the overall empirical approximation error remains low.

\begin{figure}[H]
    \centering
    \includegraphics[width=\linewidth]{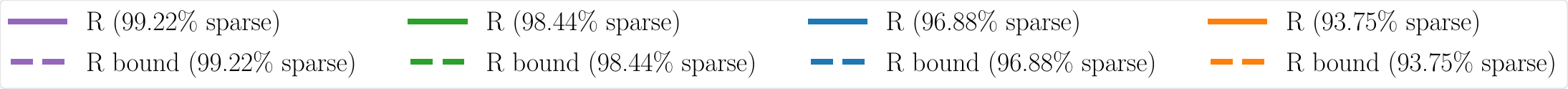}
    \begin{subfigure}[t]{0.5\textwidth}
        \includegraphics[width=\linewidth]{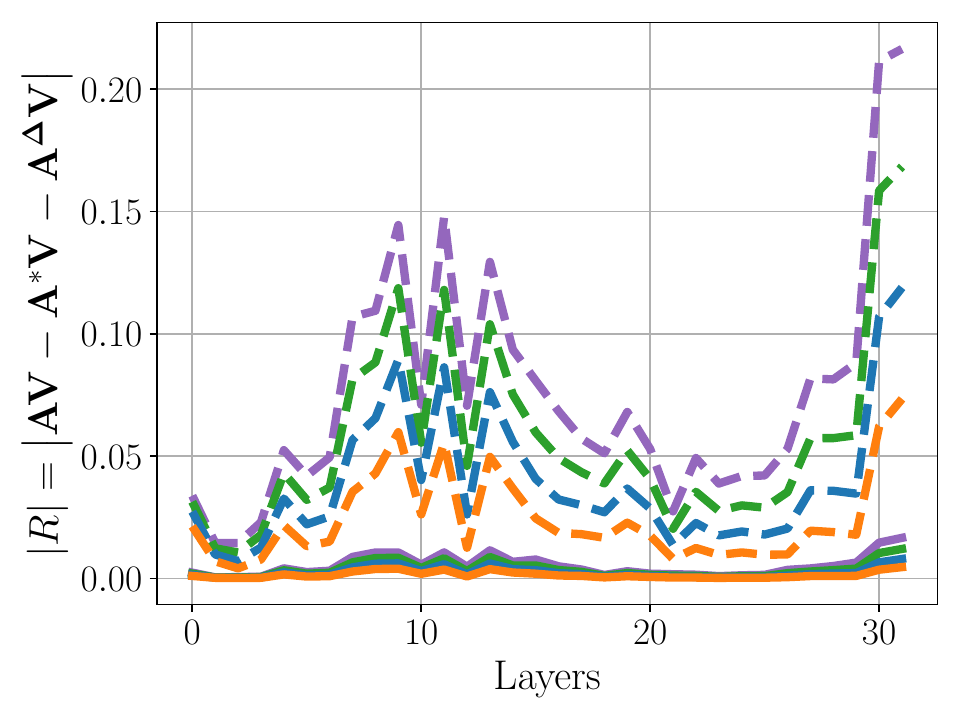} 
        \caption{Oracle top-$k$ sparse attention.}
        \label{fig:oracle-bound}
    \end{subfigure}%
    \begin{subfigure}[t]{0.5\textwidth}
        \includegraphics[width=\linewidth]{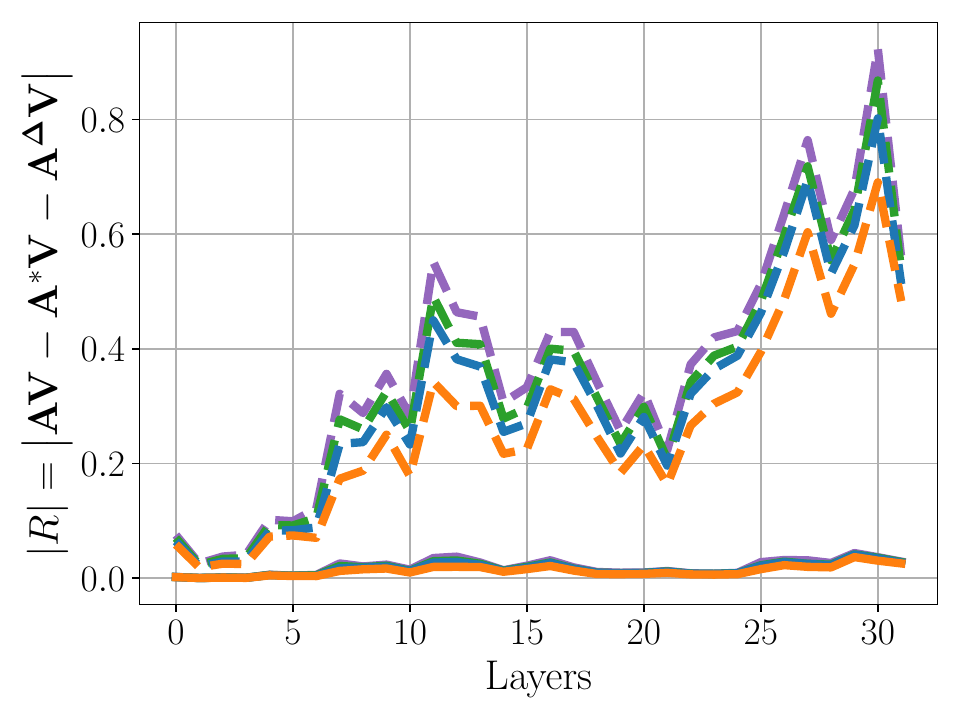}
        \caption{Streaming LLM.}
        \label{fig:sllm-bound}
    \end{subfigure}%
    \caption{Empirically analyzing the approximation and bound from~\cref{lem:delta}. A more precise sparse top-$k$ attention method, such as an oracle~\textbf{(\subref{fig:oracle-bound})} maintains a tighter bound on the approximation error. Streaming LLM~\textbf{(\subref{fig:sllm-bound})} results in a looser bound, however the empirical approximation error (solid lines) remains low in both methods.}
    \label{fig:r-bound}
\end{figure}

\begin{figure}[H]
    \centering
    \includegraphics[width=\linewidth]{figures/recompute-plots/bar-legend.pdf}
    \includegraphics[width=\linewidth]{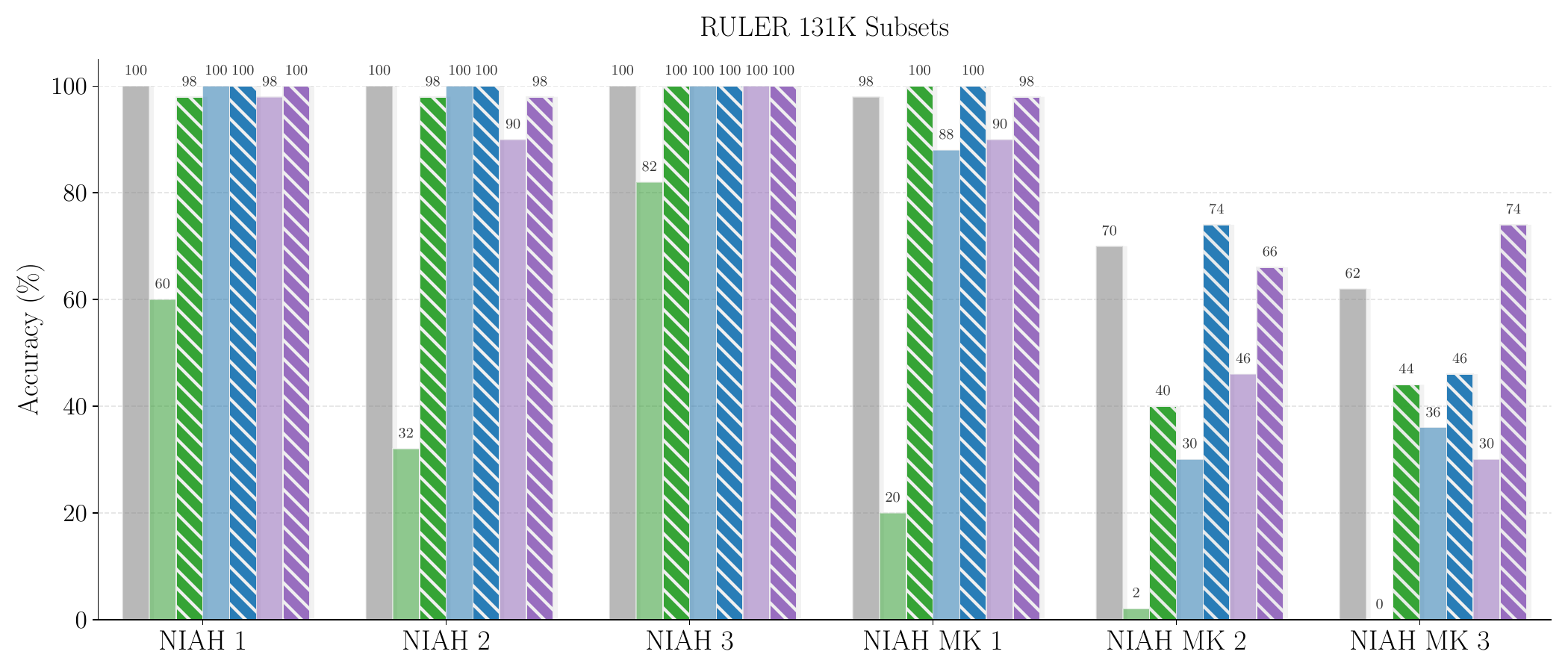}
    \includegraphics[width=\linewidth]{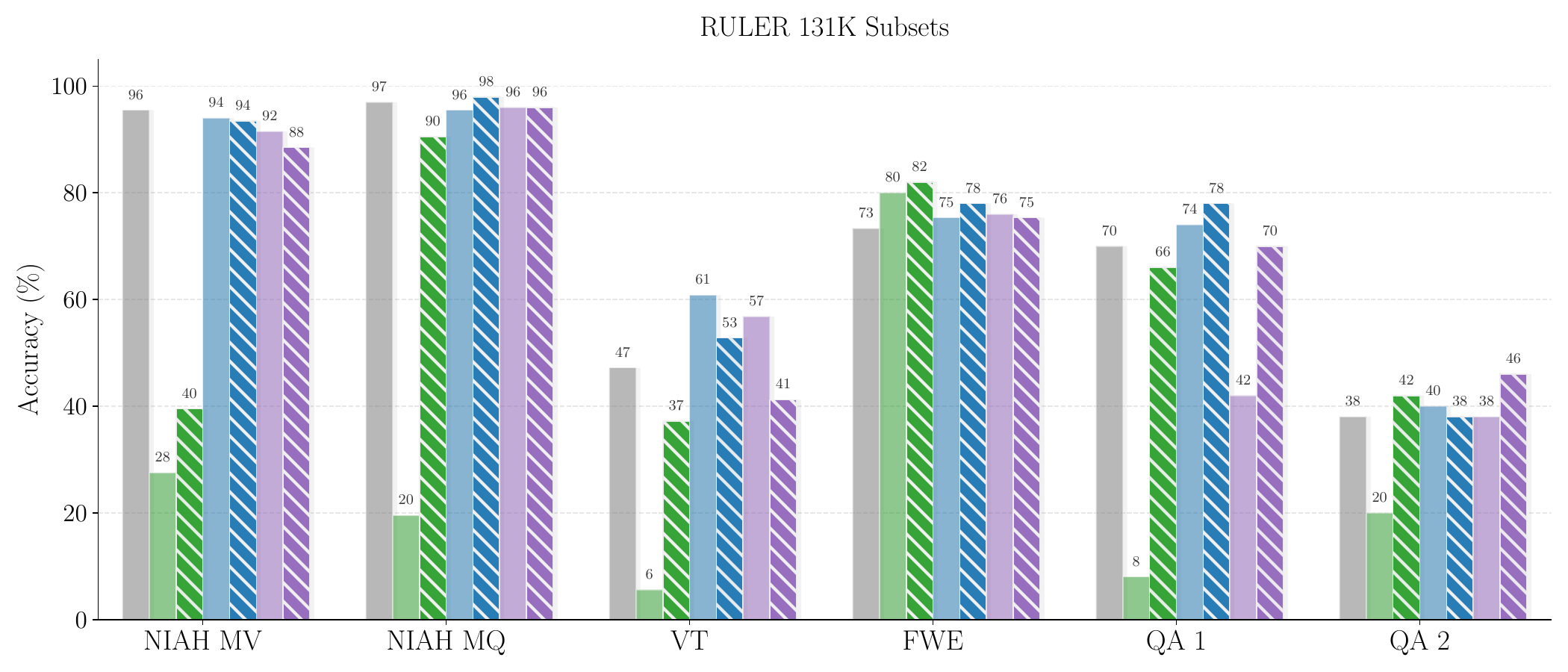}
    \caption{All RULER 131K subsets. This is a companion to~\cref{fig:ruler-subset-motivation}. The CWE subset is excluded, as all models, including quadratic attention, scored 0\%.}
    \label{fig:ruler-bar-app}
\end{figure}

\section{Extended Related Work}
\label{sec:extended-related-work}

In addition to the related work cited in \cref{sec:background}, there are a number of additional works which deal with related topics that we wish to highlight. 

LESS \citep{less} requires training a low rank cache compressor. LESS mentions differences in attention distributions between dense and sparse attention, however, the authors make no mention of the critical insight of our work, namely that a dense decode fails to properly align with the tokens resulting from a sparse prefill due to the distributional shift of the keys that is induced by the sparse prefill. 

Cacheblend \citep{cacheblend} proposes using one dense attention layer to identify important tokens, and then selectively recomputing these tokens in later layers to add missing parts of the sparse attention to cached KV pairs. Cacheblend proposed this as a way to augment and consolidate independently processed chunks of a RAG pipeline. In practice, however, this would effectively be similar to a "smart" sparse prefill method like HiP or MInference which fills in some of the missing tokens in the attention matrix which are outside of the local window context. As out experiments show, this is not always sufficient to fix the distributional shift between sparse and dense prefills. 

APE \citep{ape} proposes temperature scaling and rescaling the attention post-hoc in order to correct any error introduced. However, APE misses the crucial insight of our work, namely that sparse and dense attention result in completely different token distributions which means that there is a problem of query-key matching during decoding. APE only considers query and key geometry as a function of a) position and b) input. They deduce that because they key states of the first few keys (sink tokens) are relatively stable, then the geometry of all other keys are also stable. 

Rectified Sparse Attention \citep{rectified} (a concurrent work) considers dense prefills and a sparse decoding procedure. Their sparse decoding procedure is similar to what we call “recompute” in \cref{tab:ruler-ablation} and \cref{fig:spearman-cos,fig:infbench-ablation,fig:spearman-cos-app,fig:spearman-cos-app-2,fig:spearman-cos-app-3} where we showed that this “recomptue” method is insufficient to mitigate the distributional shift in the outputs.

Comparisons to these extended related works, and to Star Attention~\citep{star-attn} can be seen in \cref{tab:related-work-ruler}.

\begin{table}[h]
\centering
\caption{RULER comaprison to related works on sparse attention and sparse RAG works.}
\label{tab:related-work-ruler}
\begin{tabular}{lccccccc}
    \toprule
    Method & 131K & 65K & 32K & 16K & 8K & 4K &	Avg. \\
    \midrule
    Str.LLM	& 27.45 & 18.59 & 30.25 & 38.13 & 60.53 & 90.52 & 44.25 \\
    Cachblend  & 0.00 &	0.21 & 0.31 & 1.49 & 24.42 & 96.27 & 20.45 \\
    APE & 26.76 & 43.03 & 53.13 & 67.50 & 77.25 & 93.76 & 60.24 \\
    Str.LLM + Delta & \textbf{64.40} & \textbf{75.22} & \textbf{81.27} & \textbf{88.66} & \textbf{92.25} & \textbf{96.54} & \textbf{83.06} \\
    \midrule
    \midrule
    Star Attention Mask & 12.00 & 14.86 & 20.43 & 31.66 & 51.60 & 78.62 & 34.86 \\
    Star Attention Mask + Delta & \textbf{58.84} & \textbf{70.12} & \textbf{74.77} & \textbf{82.69} & \textbf{89.12} & \textbf{93.28} & \textbf{78.13} \\
    \bottomrule
\end{tabular}
\end{table}

\section{RepoQA}
\label{sec:repoqa}

We evaluate $\Delta$ Attention on code understanding by using the RepoQA~\citep{repoqa} dataset that asks the model to retrieve a function from a long block of input text. In this dataset, the long input text contains the code from many functions and the query contains a plain language description of what the function does. The model is then supposed to return back the correct function as output. We compare Streaming LLM with and without our delta correction in~\cref{tab:repoqa}  

\begin{table}[h]
\centering
\caption{RepoQA results for Streaming LLM and Streaming LLM + Delta. Plain FA3 is included for reference.}
\label{tab:repoqa}
\resizebox{\linewidth}{!}{%
\begin{tabular}{lccccccccccccc}
    \toprule
    Threshold &	0.0 & 0.1 & 0.2 & 0.3 & 0.4 & 0.5 & 0.6 & 0.7 & 0.8 & 0.9 & 1.0 & Avg \\
    \midrule
    Vanilla (FA3) & 94.8 & 92.2 & 90.6 & 89.4 & 88.8 & 88.4 & 86.8 & 85.4 & 84.4 & 83.2 & 76 & 87.27 \\
    \midrule
    Str.LLM	& 73.6 & 64.0 & 60.6 & 58.8 & 57.6 & 56.8 & 55.0 & 53.0 & 50.4 & 44.2 & 35.6 & 55.42 \\
    Str.LLM + Delta & \textbf{85.8} & \textbf{78.0} & \textbf{73.8} & \textbf{72.0} & \textbf{70.6} & \textbf{67.0} & \textbf{64.8} & \textbf{61.2} & \textbf{57.2} & \textbf{50.4} & \textbf{42.6} & \textbf{65.76} \\
    
    \bottomrule
\end{tabular}}
\end{table}

\section{Interpolation}
\label{sec:interpolation}

The method presented in~\cref{sec:method} proposes to use a single delta correction at index $i$ to influence the next $i + \gamma - 1$ attention outputs. This causes a discrete jump in the delta correction at every $\gamma^{\text{th}}$ output. It may be the case that a better strategy would be to smooth out the transition or impute the delta corrections within the window by some imputation function. In~\cref{tab:interpolation}, we look at three different possible imputation functions and evaluate the overall effect on RULER. 

\textbf{Linear Interpolation.} For linear interpolation, we first compute all delta corrections, and then produce mixing coefficients $\beta \in [0, 1]$ which linearly increase from $[0, ..., 1]$. Interpolation is then performed between consecutive delta correction terms by the function $\hat{\Delta}_k = (1 - \beta_k)\Delta_i + \beta_k \Delta_{i + 1}$. Each $\Delta$ term will therefore expand into $\vert k \vert  = \gamma$ terms, such that the number of delta corrections now matches the sparse attention output size. These expanded, and smoothed delta corrections will be treated as the new correction term, providing a smoother transition between terms.

\textbf{EMA.} Instead of linear interpolation, which technically violates the causality of the attention mechanism by incorporating information from the future into the past, we may instead expand the delta correction term by repeating each vector $\gamma$ times, and then perform an exponential moving average (EMA) over the full set of vectors using a coefficient $\beta \in [0, 1]$ and computing the EMA as $\Delta_i = (1 - \beta)\Delta_{i-1} + \beta \Delta_i$. The EMA acts as a smoothing mechanism which smooths the transition between delta terms.

\textbf{$\alpha,\beta,\gamma$ Filter.} A third option is to use a Kalman style filter. We chose to use an $\alpha,\beta,\gamma$ filter where $\alpha$ is a position coefficient, $\beta$ is a velocity coefficient, and $\gamma$ is an acceleration coefficient. At each step, position, velocity, and acceleration are updated based on a mixture of the real position and the accumulated statistics for position, velocity, and acceleration. We consider every operation to be an elementwise scalar operation. The algorithm for the $\alpha$, $\beta$, $\gamma$ filter can be seen in~\cref{alg:abg}

Although there are slight improvements using these imputation methods in~\cref{tab:interpolation}, no method shows conclusive improvements over our original method. However, we think delta smoothing or imputation shows a promising direction for future research.

\begin{figure}
\rule{\linewidth}{1.5pt}
\vspace{-1.2em}
\captionof{algorithm}{$\alpha,\beta,\gamma$ Filter}\label{alg:abg}
\vspace{-0.2em}
\rule{\linewidth}{1pt}
\begin{algorithmic}
    \Require $\alpha,\beta,\gamma$ and $\Delta$ vectors

    \State $o \gets$ zero vector like $\Delta$
    \State $p \gets \Delta_0$
    \State $v \gets$ zero vector like $p$
    \State $a \gets$ zero vector like $p$
    \State $o \gets \Delta_0$ 

    \For {$i$ in $[1, ..., \text{len}(\Delta)]$}
        \State $y \gets \Delta_i$
        \State \text{\small \textcolor{gray}{// update approx position and velocity}}
        \State $\hat{p} \gets p + v + 0.5 a$
        \State $\hat{v} \gets v + a$
        \State \text{\small \textcolor{gray}{// calculate difference between real and predicted position}}
        \State $r \gets y - \hat{p}$
        \State \text{\small \textcolor{gray}{// update position, velocity, and acceleration.}}
        \State $p \gets \hat{p} + \alpha r$
        \State $v \gets \hat{v} + \beta r$
        \State $a \gets a + \gamma r$

        \State $o_i \gets p$
    \EndFor
    \State return $o$
    \end{algorithmic}
\rule{\linewidth}{1pt}
\end{figure}

\begin{table}[h]
\centering
\caption{Interpolation Experiments.}
\label{tab:interpolation}
\resizebox{\linewidth}{!}{%
\begin{tabular}{lcccccccc}
    \toprule
    Method & 131K & 65K & 32K & 16K & 8K & 4K &	Avg. \\
    \midrule
    Str.LLM + Delta + Linear Interpolation & \textbf{65.15} & \textbf{75.65} & 81.26 & 88.26 & 92.34 & \textbf{96.66} & \textbf{83.22} \\
    Str. LLM + Delta + EMA ($\beta =0.5$) &	63.21 & 75.22 & \textbf{81.27} & \textbf{88.66} & 92.25 & 96.54 & 82.85 \\
    Str. LLM + Delta + EMA ($\beta =0.75$) & 63.40 & 74.60 & 80.76 & 88.52 & 92.29 & 96.62 & 82.69 \\
    Str. LLM + Delta + EMA ($\beta =0.95$) & 63.16 & 75.87 & 81.03 & 88.27 & 92.26 & 96.59 & 82.86 \\
    Str. LLM + Delta + ($\alpha=0.05,\beta=1.25 \times 10^{-4},\gamma=2.08 \times 10^{-5}$) Filter & 58.35 & 73.48 & 80.54 & 88.15 & 92.03 & 96.48 & 81.50 \\
    Str. LLM + Delta + ($\alpha=0.1,\beta=5 \times 10^{-3},\gamma=1.66\times 10^{-4}$) Filter & 57.99 & 72.70 & 79.64 & 88.58 & \textbf{92.42} & 96.57 & 81.31 \\
    Str. LLM + Delta + ($\alpha=0.2,\beta=5 \times 10^{-2},\gamma=3.5 \times 10^{-3}$) Filter & 61.47 & 74.31 & 80.29 & 88.47 & 92.32 & 96.58 & 82.24 \\
    \midrule
    Str.LLM + Delta	& 64.40 & 75.22 & \textbf{81.27} & \textbf{88.66} & 92.25 & 96.54 & 83.06 \\
    \bottomrule
\end{tabular}
}
\end{table}

\section{Statistical Significance Tests}
\label{sec:paired-permutation}

We assess the statistical significance of the results presented in~\cref{tab:ruler}. For this, we use a one-sided paired permutation test to test the significance of the difference between the versions of Streaming LLM, HiP and MInference with and without our delta correct applied. The results are shown in~\cref{tab:significance}. We split RULER tasks according to QA vs. non-QA retrieval tasks. The statistical significance shows a high correlation with the displayed colors in~\cref{tab:ruler} and verifies that our results are statistically significant. 

\begin{table}[h]
\centering
\caption{Interpolation Experiments. Each entry is a p-value assessing whether or not our delta correction results in a significant improvement (significance level is $p < 0.05$).}
\label{tab:significance}
\resizebox{\linewidth}{!}{%
\begin{tabular}{lccccccc}
    \toprule
    Method & 131K & 65K & 32K & 16K & 8K & 4K \\
    \midrule
    Str.LLM (all non qa tasks)& \textbf{0.0001} & \textbf{0.0001} & \textbf{0.0001} & \textbf{0.0001} & \textbf{0.0001} & \textbf{0.0001} \\
    Str.LLM (all qa tasks) & \textbf{0.0001} & \textbf{0.0001} & \textbf{0.0001} & \textbf{0.0001} & \textbf{0.0001} & 0.4958 \\
    HiP (all non qa tasks) & \textbf{0.0001} & \textbf{0.0018} & 0.7112 & 0.5018 & 0.8331 & 0.5747 \\
    HiP (all qa tasks) & 0.4918 & 0.7252 & 0.9245 & 0.8076 & 0.5009 & 1 \\
    MInference (all non qa tasks) & \textbf{0.0001} & 0.858 & 0.6485 & 0.5116 & 0.8774 & 1 \\
    MInference (all qa tasks) & \textbf{0.0004} &  0.8813 & 0.9848 & 0.8777 & 0.499 & 1 \\
    \bottomrule
\end{tabular}
}
\end{table}

\begin{figure}
    \centering
    \includegraphics[width=0.5\linewidth]{figures/recompute-plots/spearman-cos-legend.pdf} \\
    \includegraphics[width=0.32\linewidth]{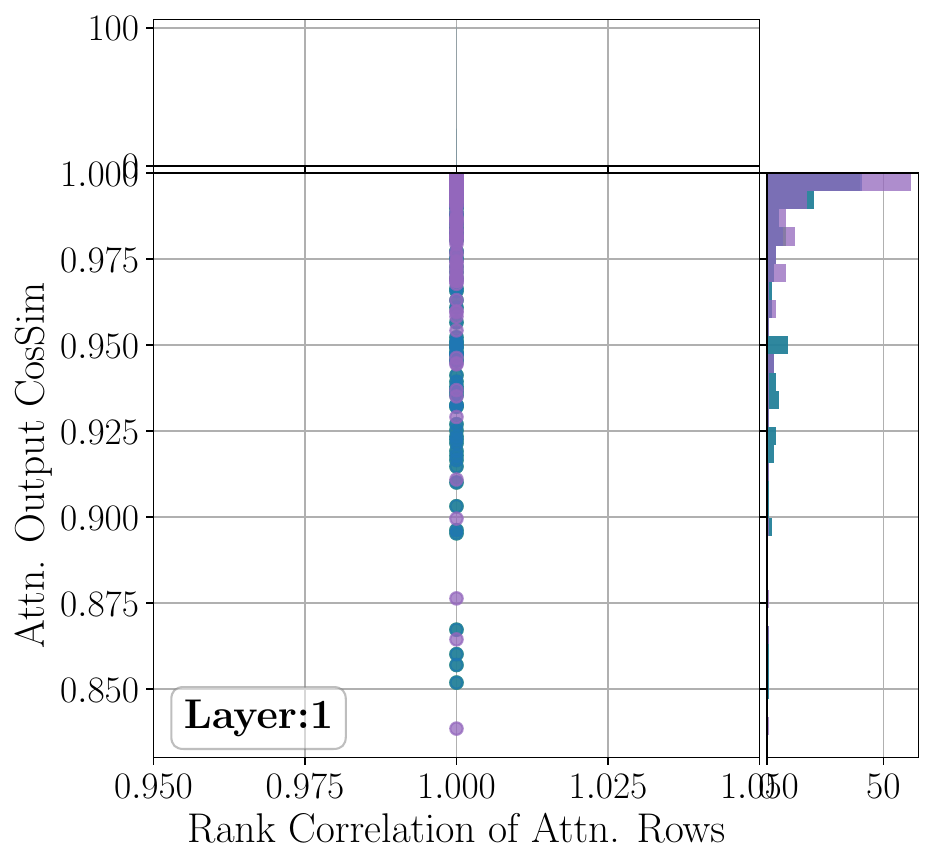}
    \includegraphics[width=0.32\linewidth]{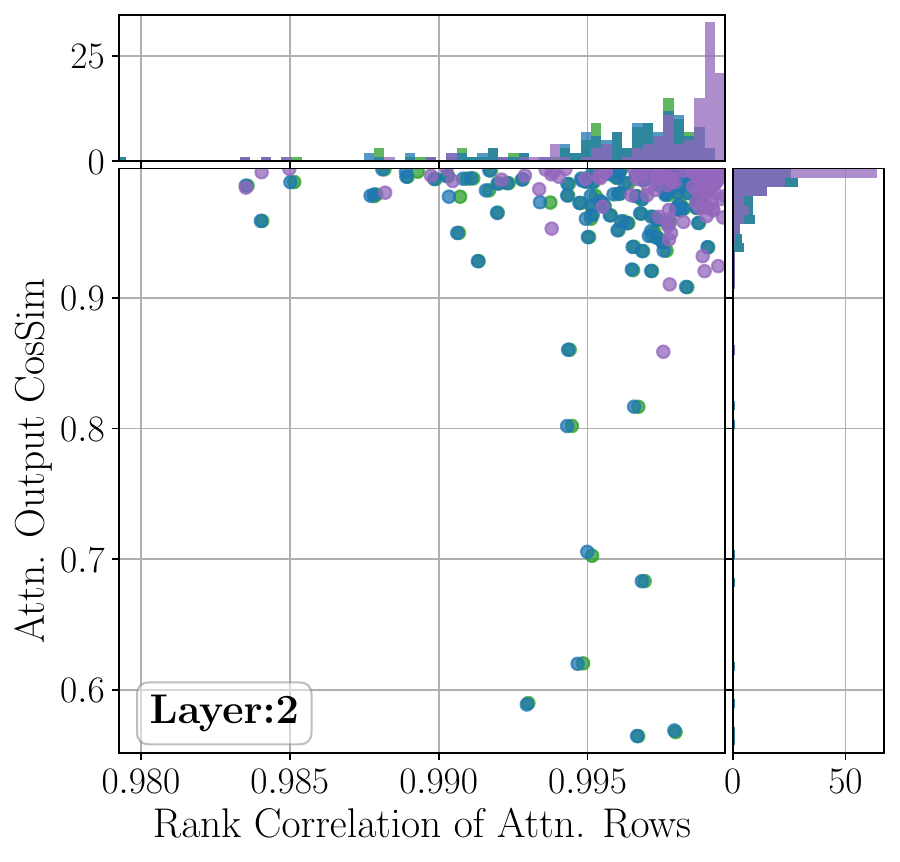}
    \includegraphics[width=0.32\linewidth]{figures/recompute-plots/spearman-vs-output-layer-3.pdf}
    \includegraphics[width=0.5\linewidth]{figures/recompute-plots/spearman-cos-legend.pdf} \\
    \includegraphics[width=0.32\linewidth]{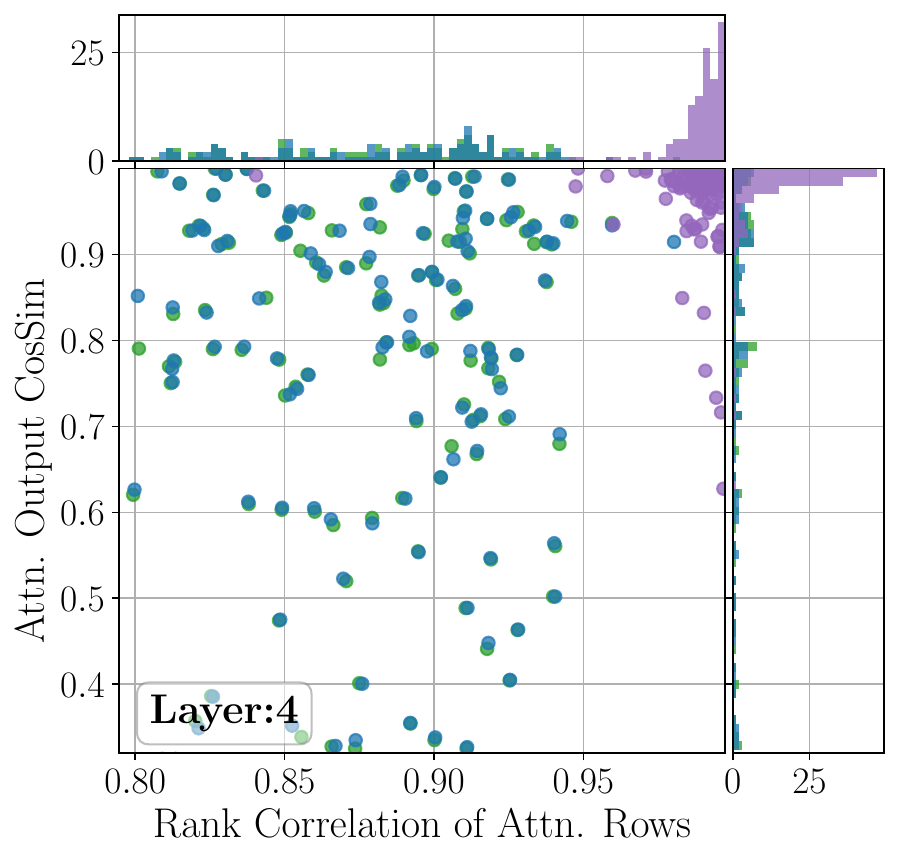}
    \includegraphics[width=0.32\linewidth]{figures/recompute-plots/spearman-vs-output-layer-5.pdf}
    \includegraphics[width=0.32\linewidth]{figures/recompute-plots/spearman-vs-output-layer-6.pdf}
    \includegraphics[width=0.5\linewidth]{figures/recompute-plots/spearman-cos-legend.pdf} \\
    \includegraphics[width=0.32\linewidth]{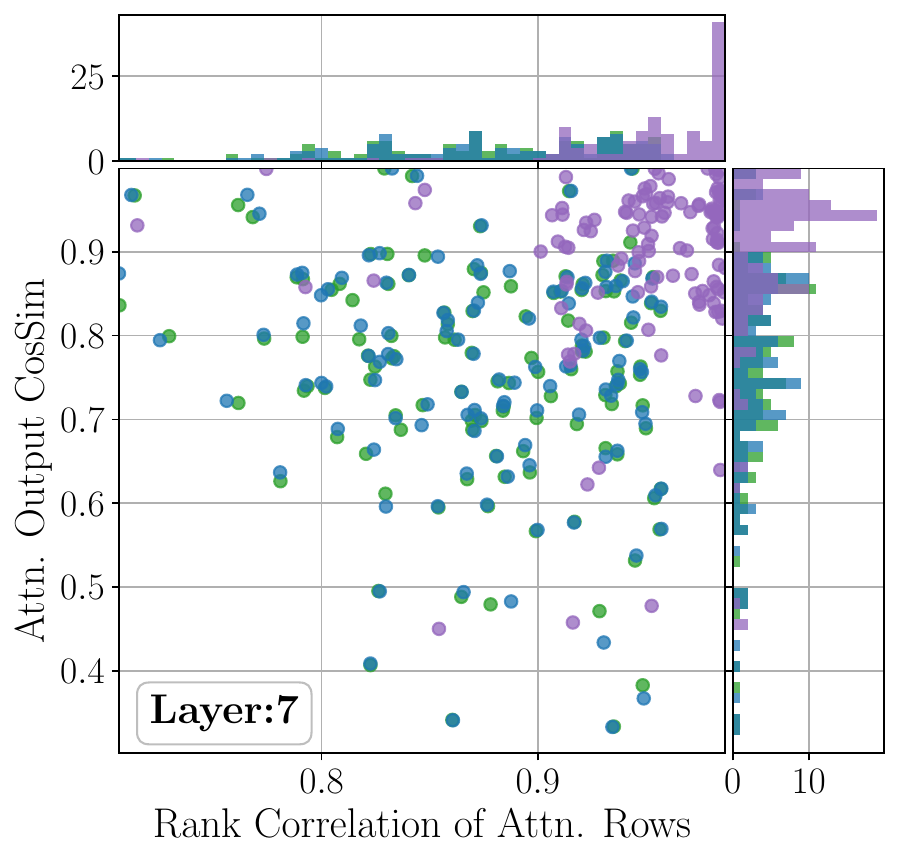}
    \includegraphics[width=0.32\linewidth]{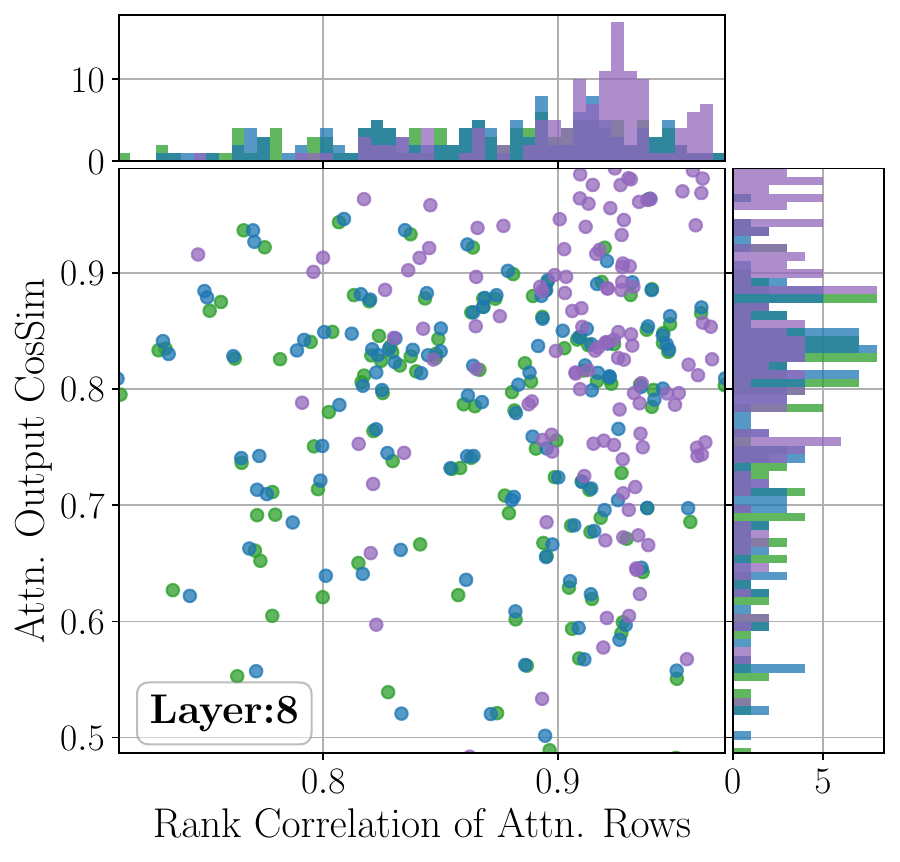}
    \includegraphics[width=0.32\linewidth]{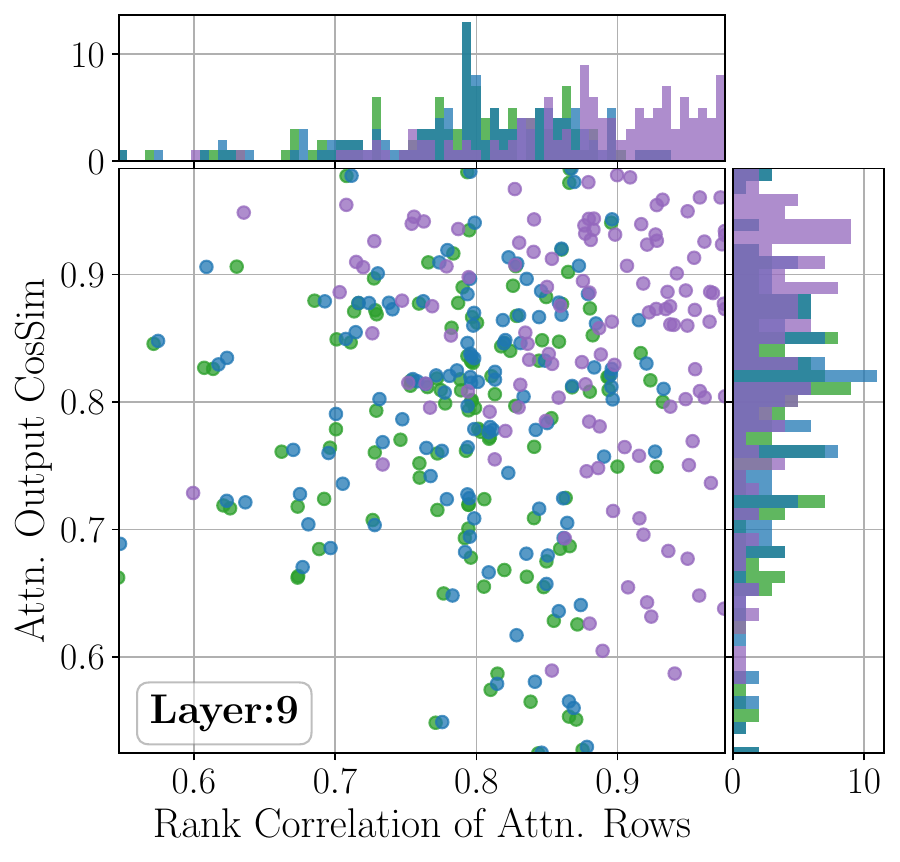}
    \includegraphics[width=0.5\linewidth]{figures/recompute-plots/spearman-cos-legend.pdf} \\
    \includegraphics[width=0.32\linewidth]{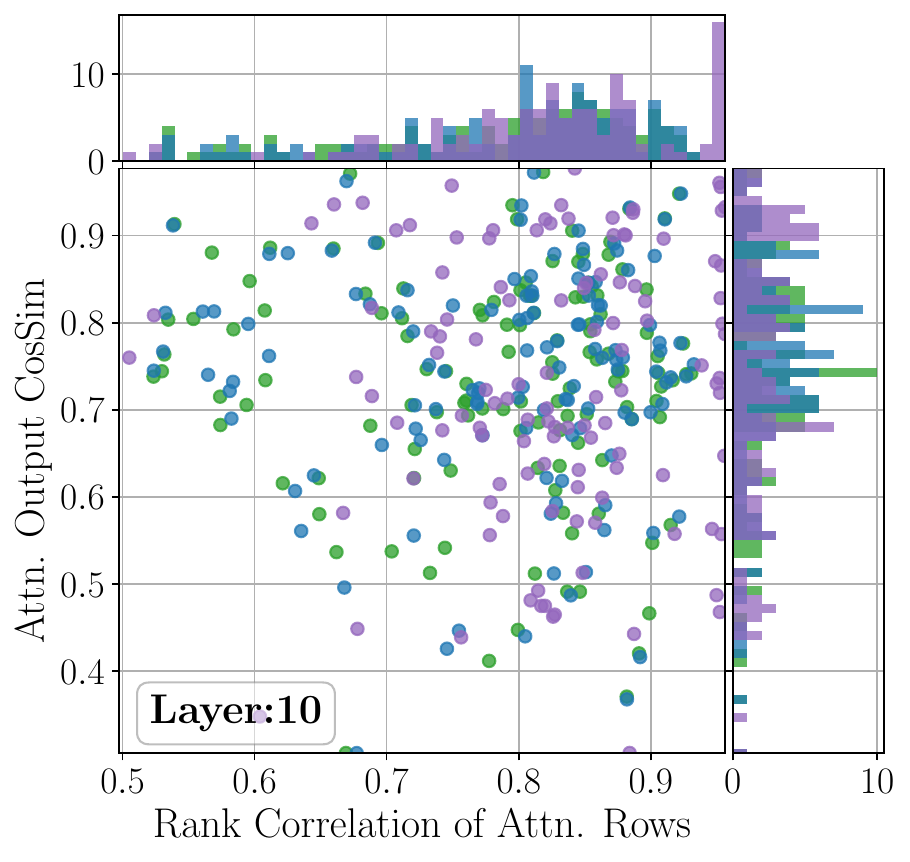}
    \includegraphics[width=0.32\linewidth]{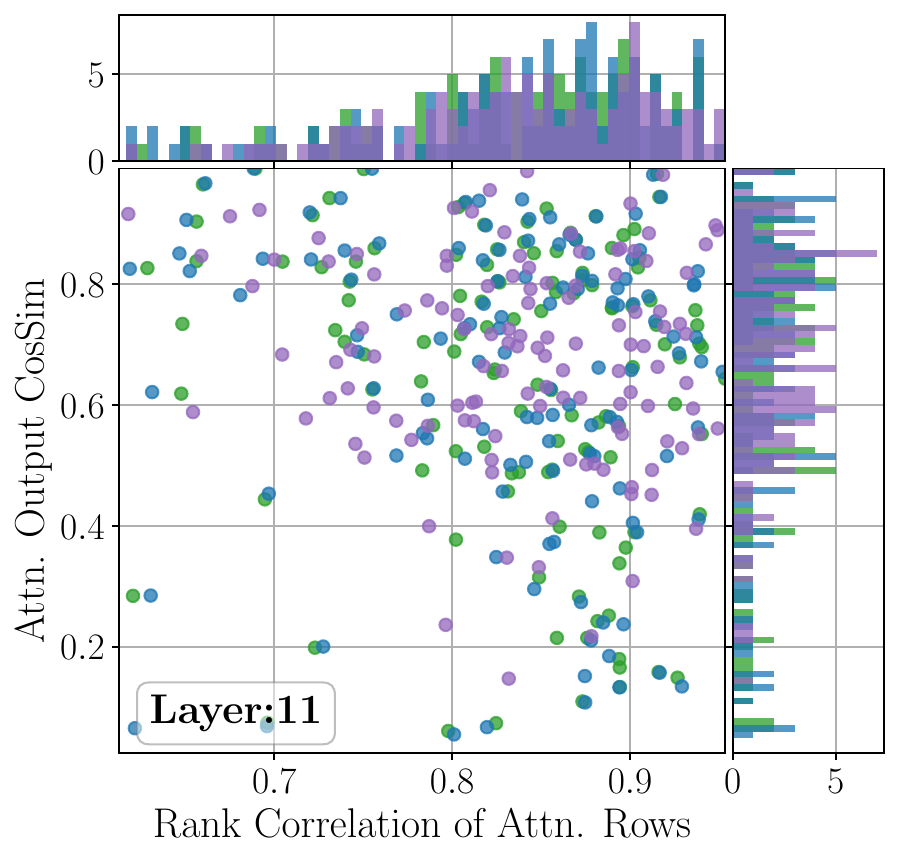}
    \includegraphics[width=0.32\linewidth]{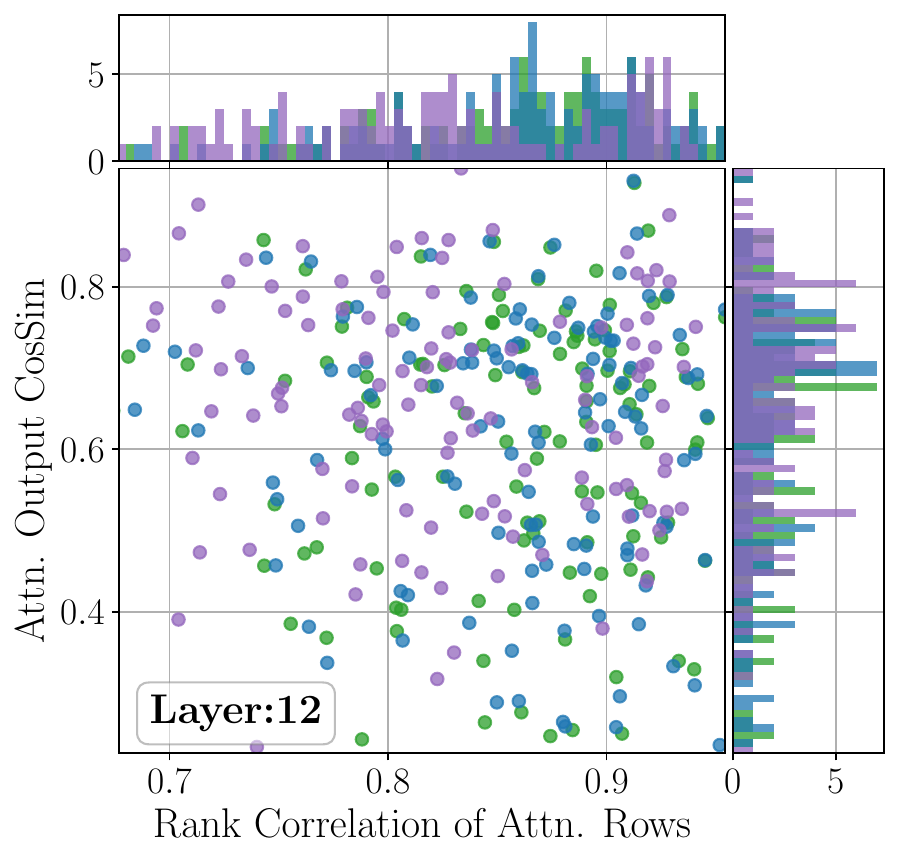}
    \caption{Attention output cosine similarity (compared to full attention) for Streaming LLM with our method. \cref{fig:spearman-cos-app,fig:spearman-cos-app-2,fig:spearman-cos-app-3} show the results from every layer, and are a counterpart to~\cref{fig:spearman-cos} in the main text. For the lower layers where induction heads are most prevalent, our method shows higher cosine similarity and attention row rank correlation as compared to quadratic attention.}
    \label{fig:spearman-cos-app}
\end{figure}

\begin{figure}
    \centering
    \includegraphics[width=0.5\linewidth]{figures/recompute-plots/spearman-cos-legend.pdf} \\
    \includegraphics[width=0.32\linewidth]{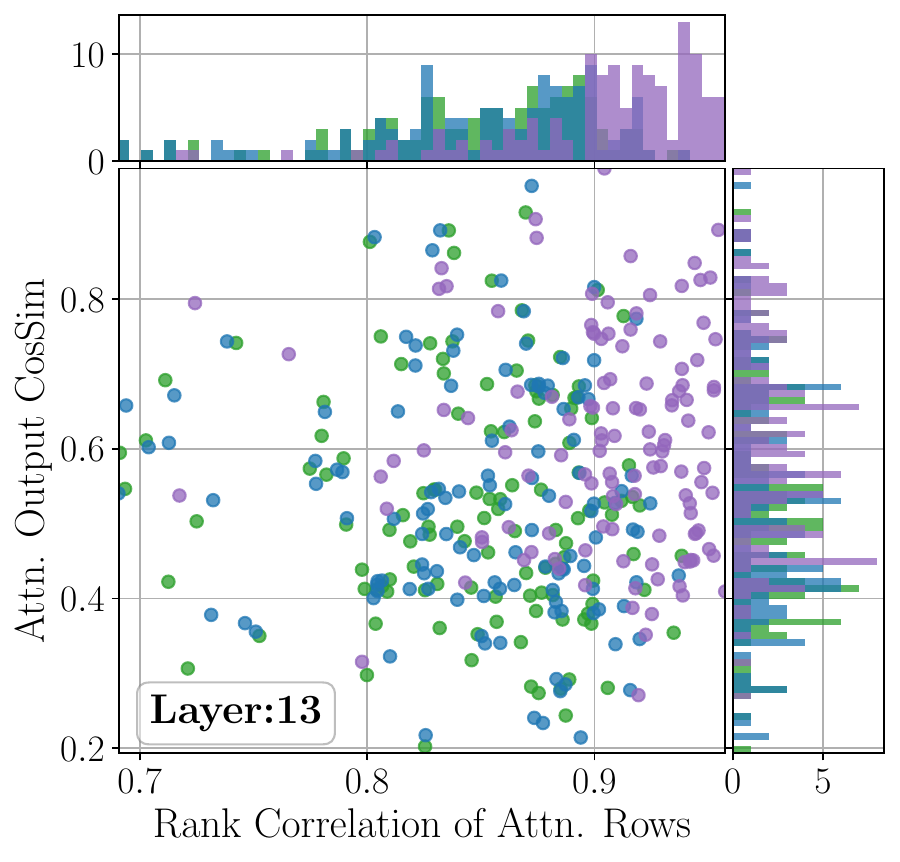}
    \includegraphics[width=0.32\linewidth]{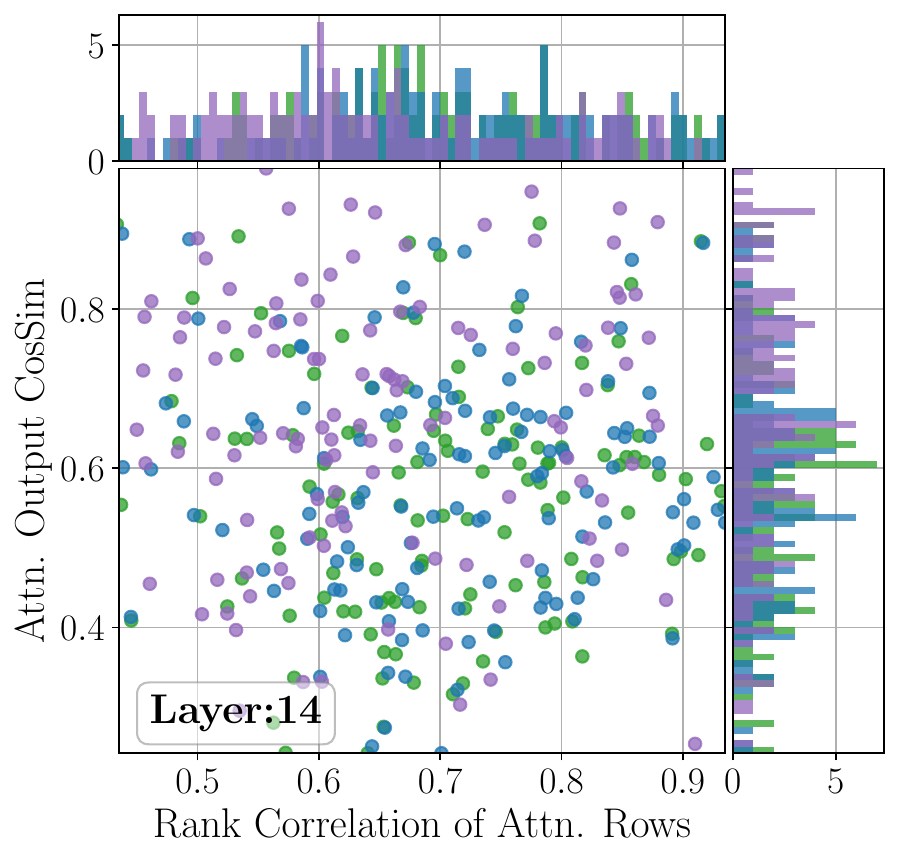}
    \includegraphics[width=0.32\linewidth]{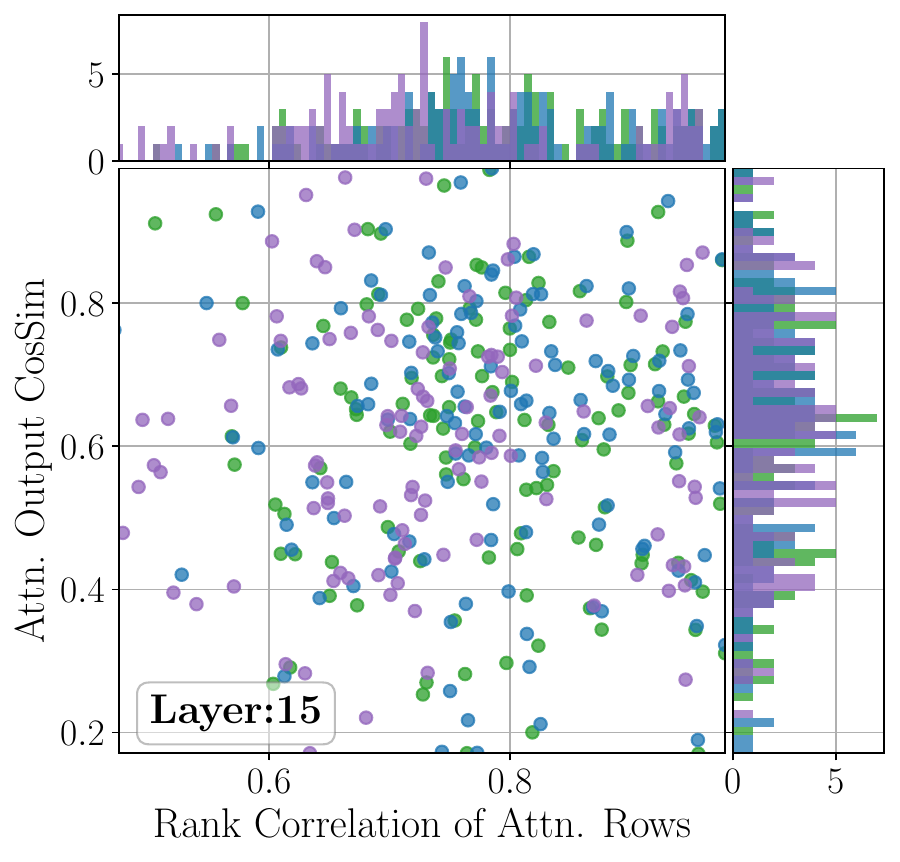}
    \includegraphics[width=0.5\linewidth]{figures/recompute-plots/spearman-cos-legend.pdf} \\
    \includegraphics[width=0.32\linewidth]{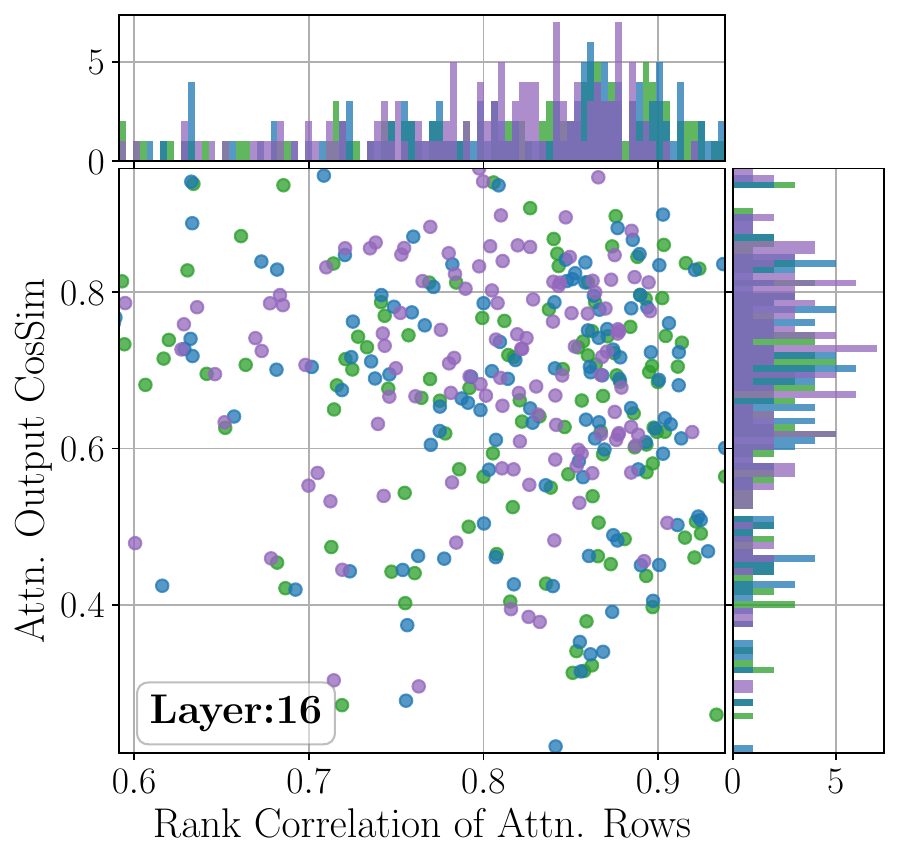}
    \includegraphics[width=0.32\linewidth]{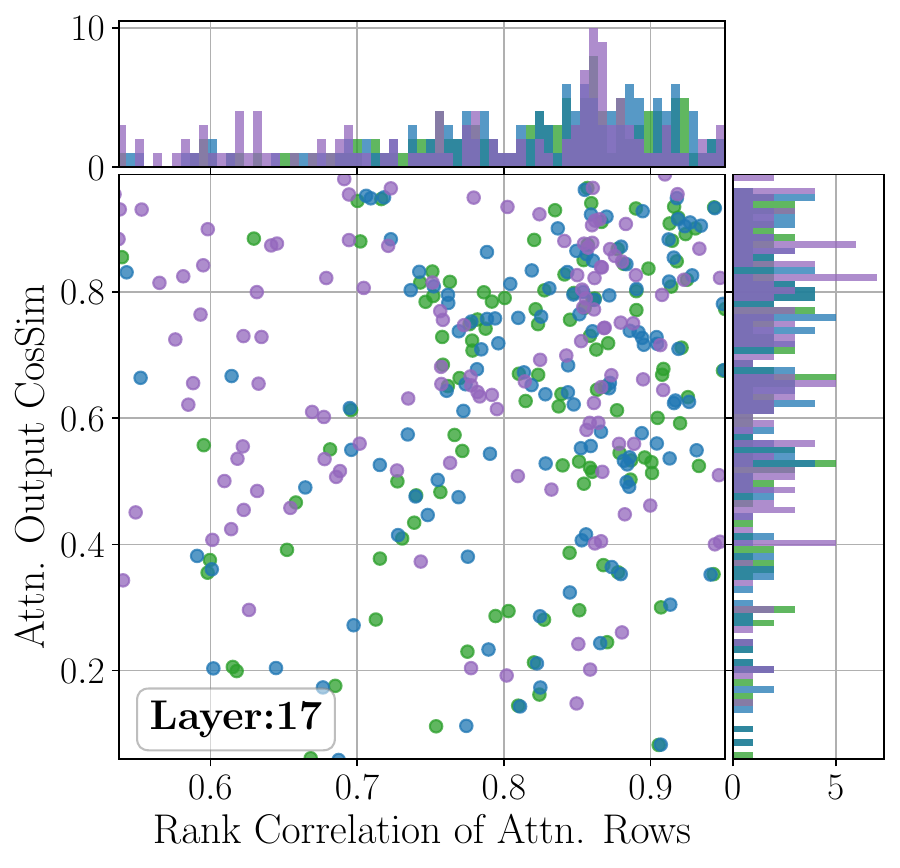}
    \includegraphics[width=0.32\linewidth]{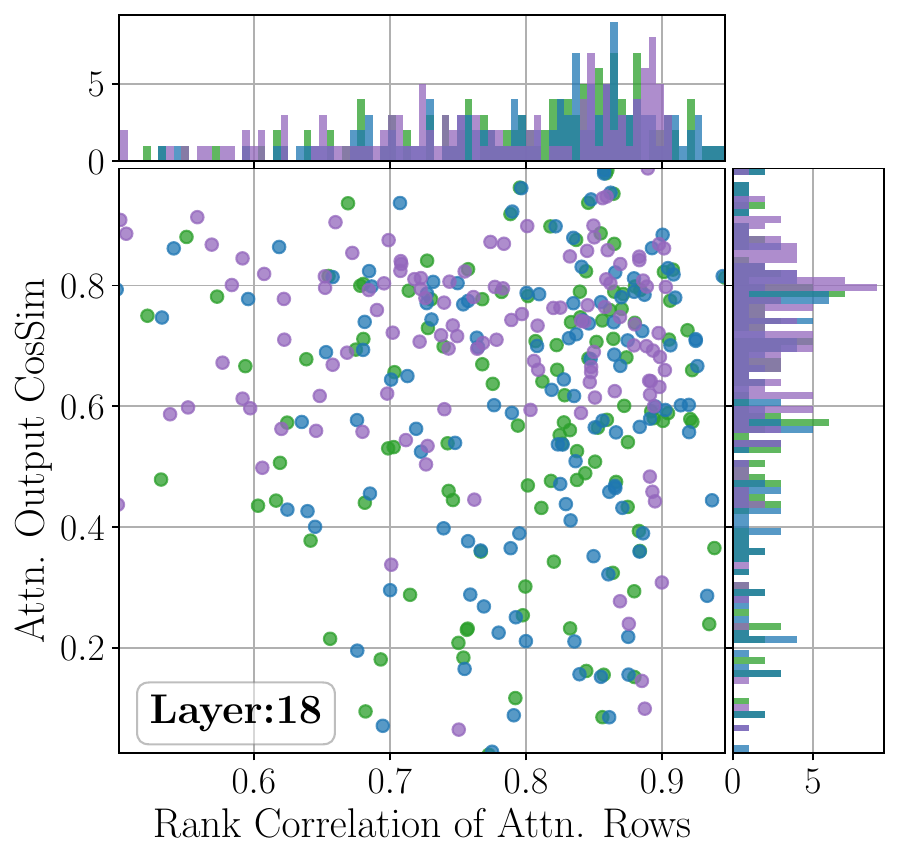}
    \includegraphics[width=0.5\linewidth]{figures/recompute-plots/spearman-cos-legend.pdf} \\
    \includegraphics[width=0.32\linewidth]{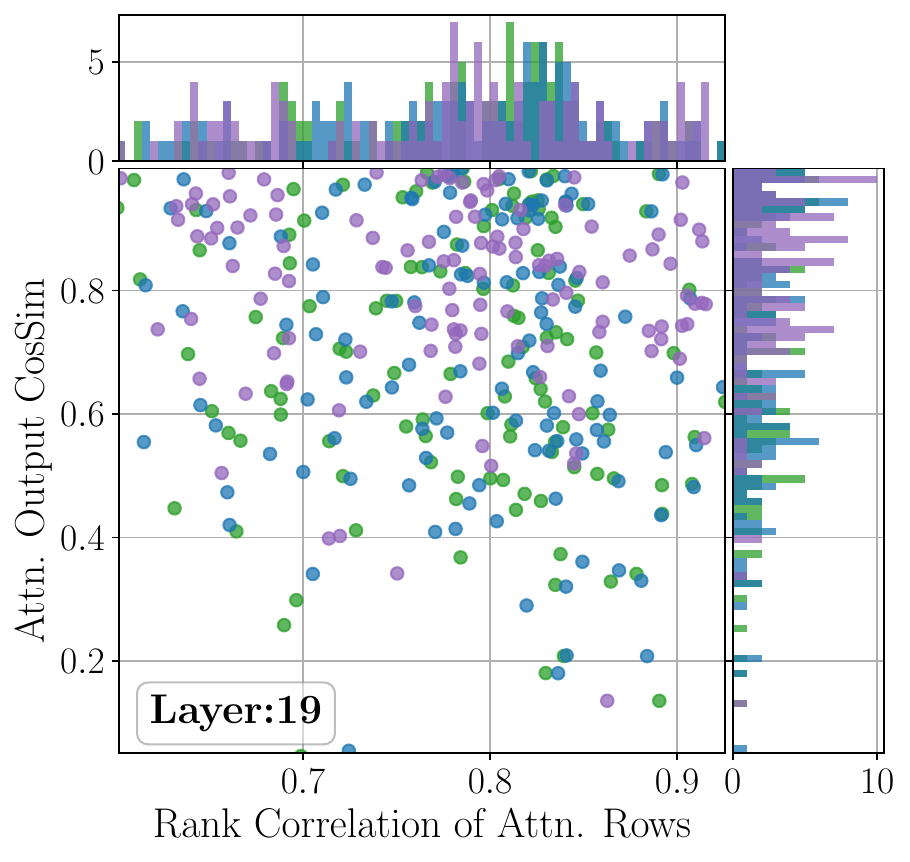}
    \includegraphics[width=0.32\linewidth]{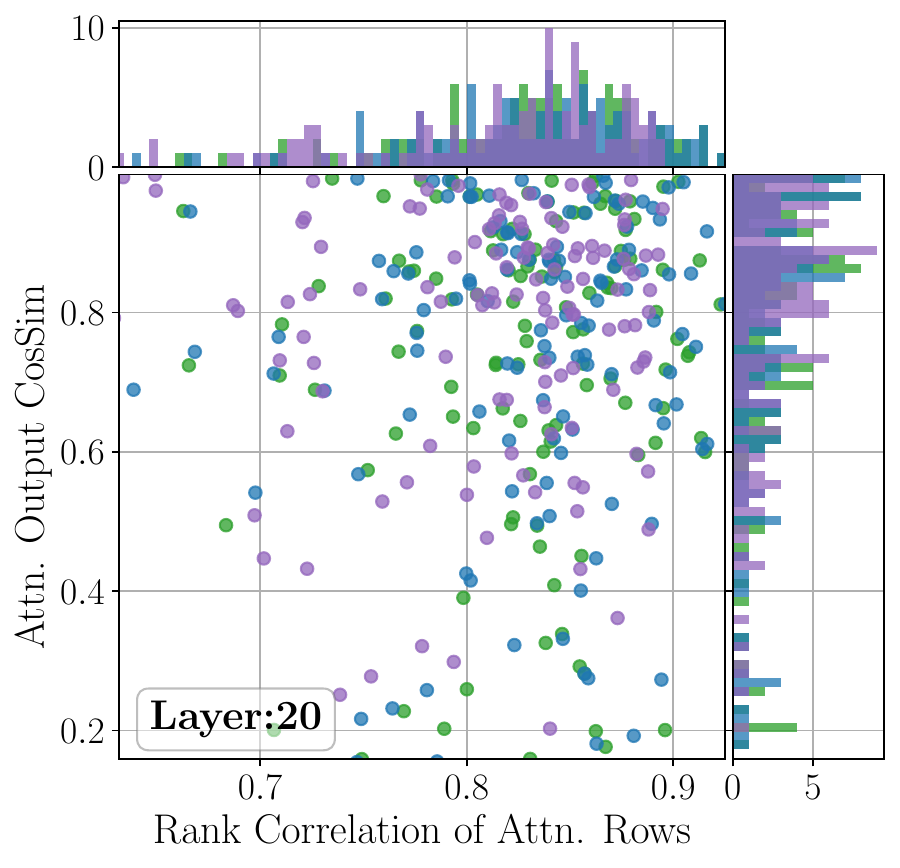}
    \includegraphics[width=0.32\linewidth]{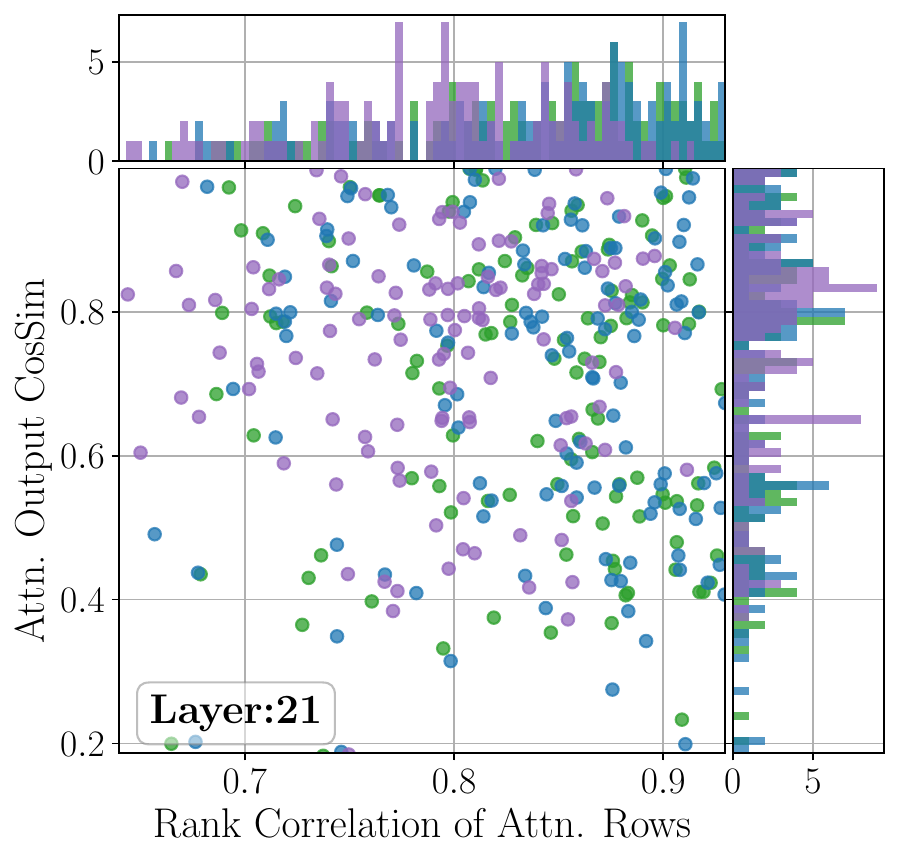}
    \includegraphics[width=0.5\linewidth]{figures/recompute-plots/spearman-cos-legend.pdf} \\
    \includegraphics[width=0.32\linewidth]{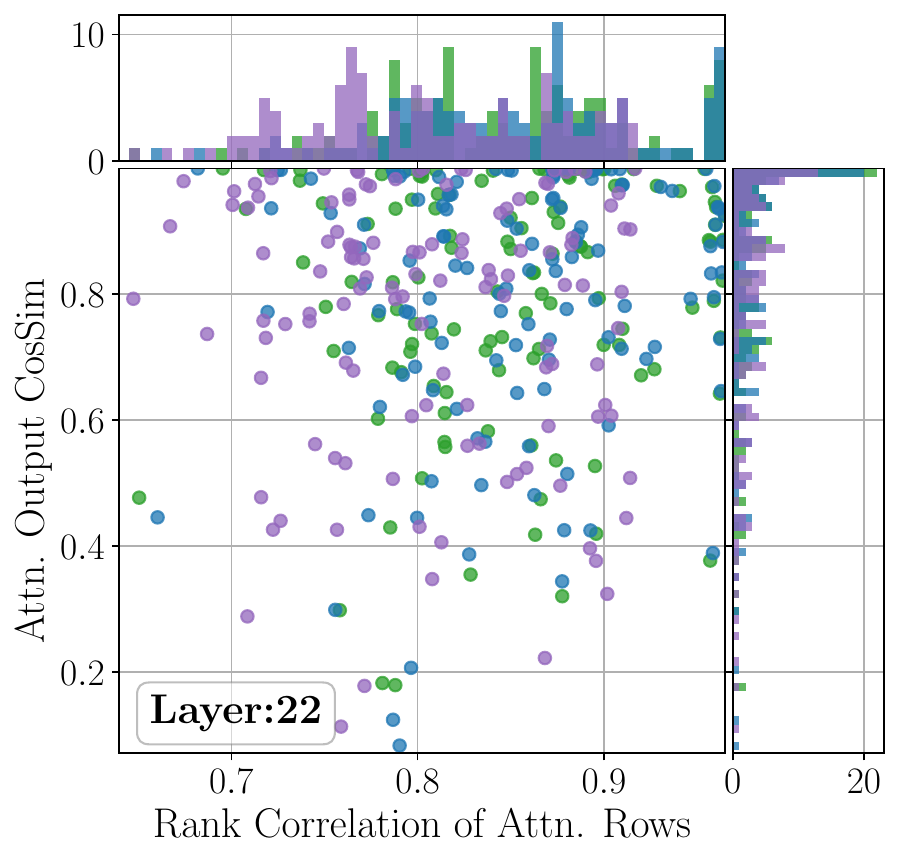}
    \includegraphics[width=0.32\linewidth]{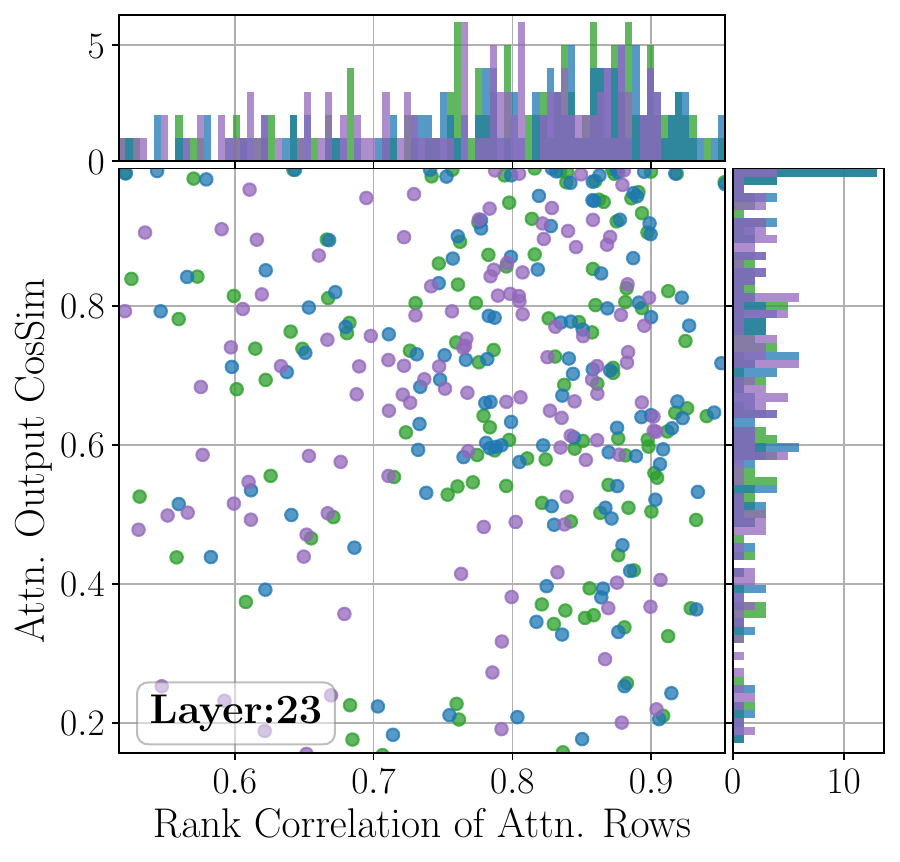}
    \includegraphics[width=0.32\linewidth]{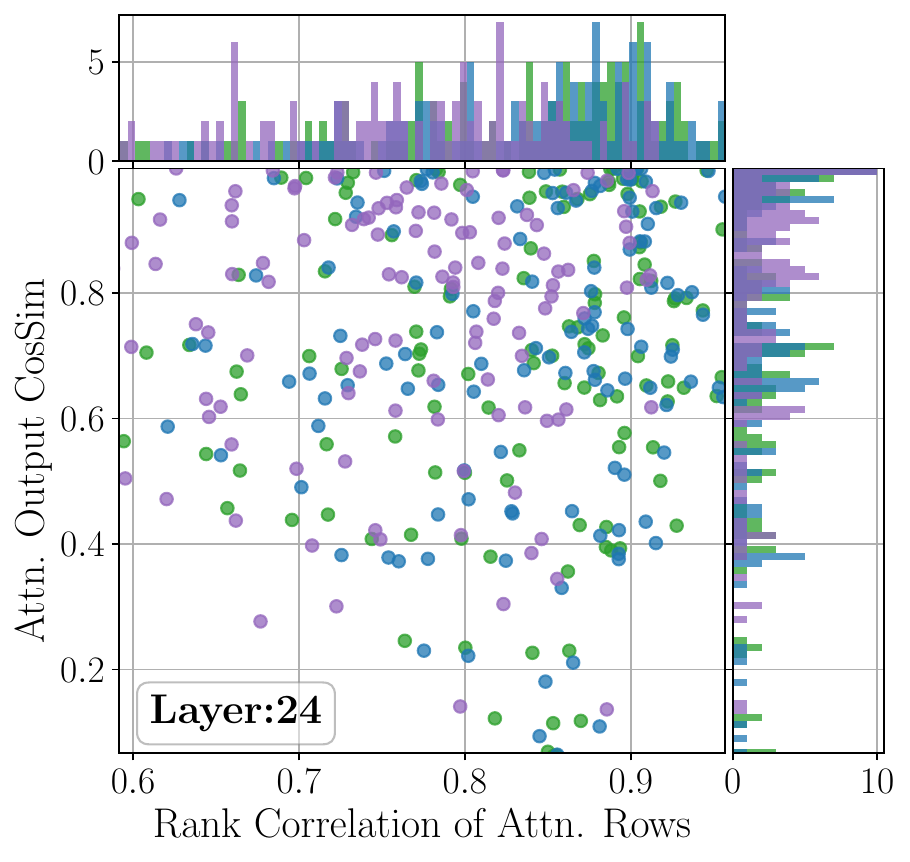}
    \caption{Attention output cosine similarity (compared to full attention) for Streaming LLM with our method. \cref{fig:spearman-cos-app,fig:spearman-cos-app-2,fig:spearman-cos-app-3} show the results from every layer, and are a counterpart to~\cref{fig:spearman-cos} in the main text. For the lower layers where induction heads are most prevalent, our method shows higher cosine similarity and attention row rank correlation as compared to quadratic attention.}
    \label{fig:spearman-cos-app-2}
\end{figure}

\begin{figure}
    \centering
    \includegraphics[width=0.5\linewidth]{figures/recompute-plots/spearman-cos-legend.pdf} \\
    \includegraphics[width=0.32\linewidth]{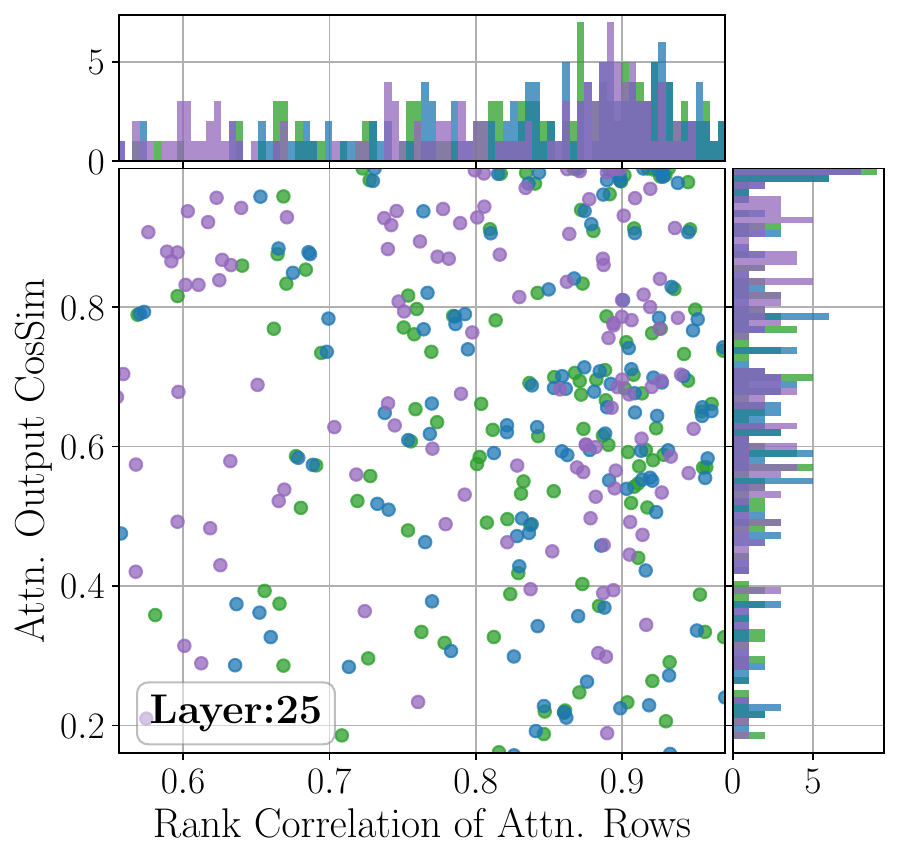}
    \includegraphics[width=0.32\linewidth]{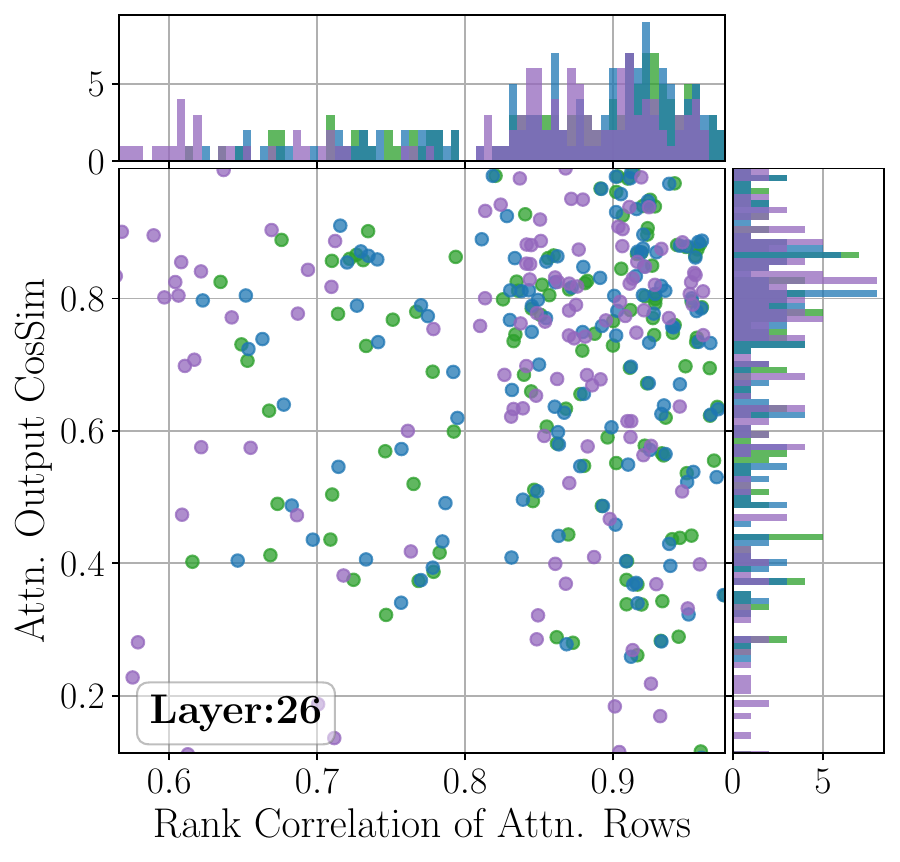}
    \includegraphics[width=0.32\linewidth]{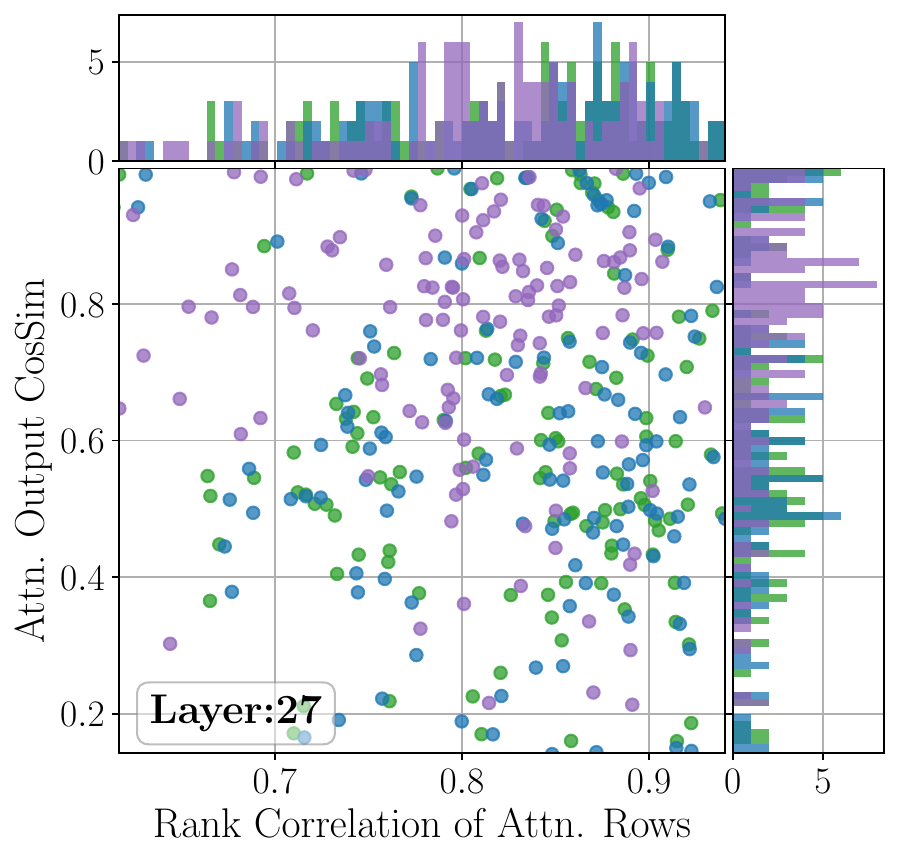}
    \includegraphics[width=0.5\linewidth]{figures/recompute-plots/spearman-cos-legend.pdf} \\
    \includegraphics[width=0.32\linewidth]{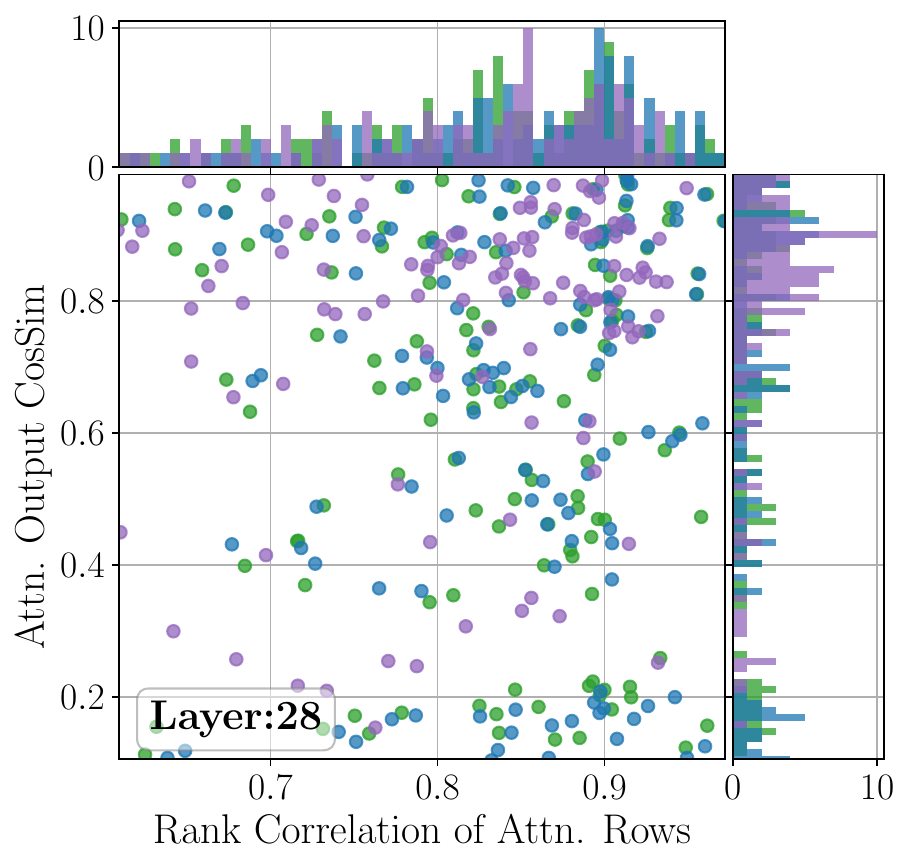}
    \includegraphics[width=0.32\linewidth]{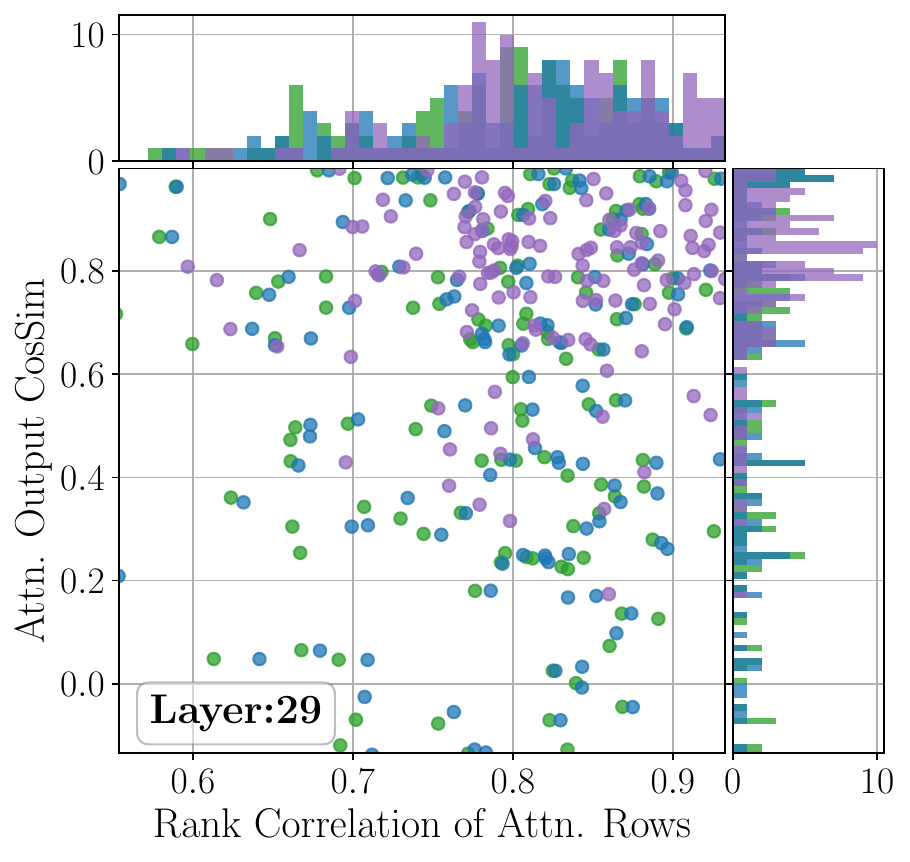}
    \includegraphics[width=0.32\linewidth]{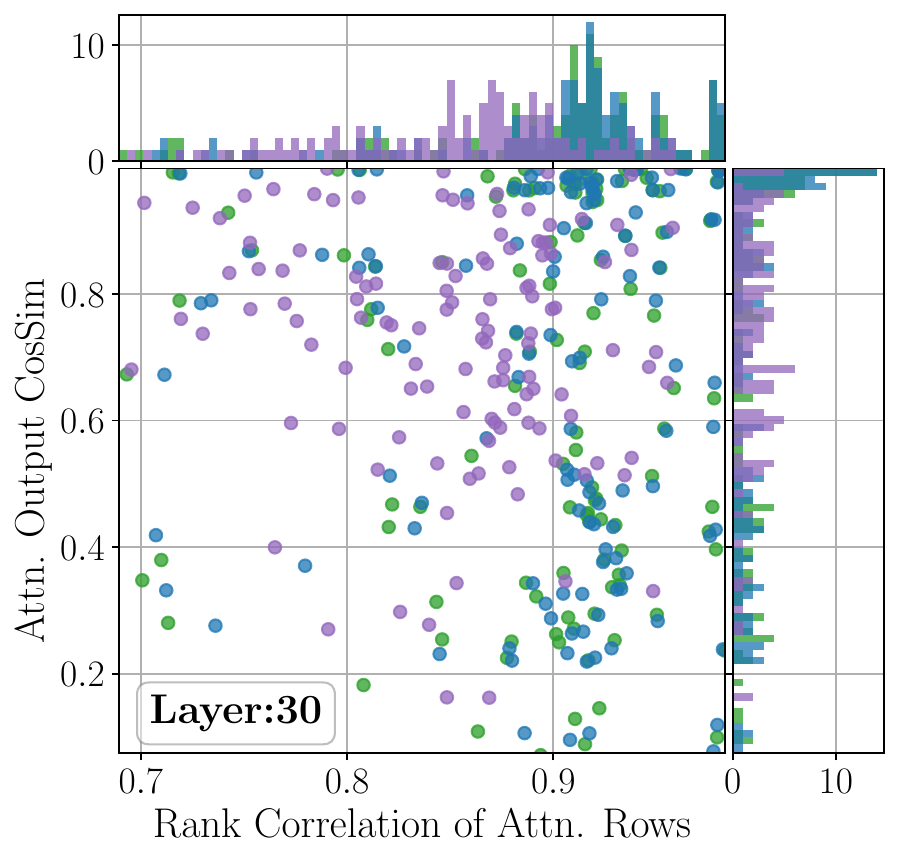}
    \includegraphics[width=0.5\linewidth]{figures/recompute-plots/spearman-cos-legend.pdf} \\
    \includegraphics[width=0.32\linewidth]{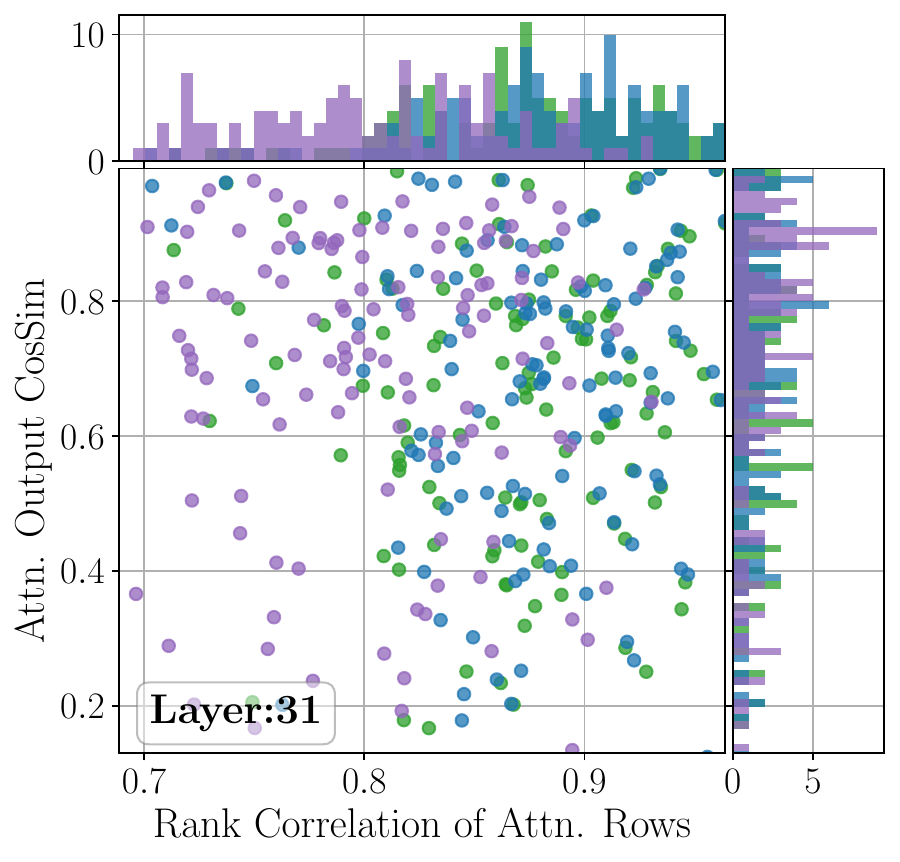}
    \includegraphics[width=0.32\linewidth]{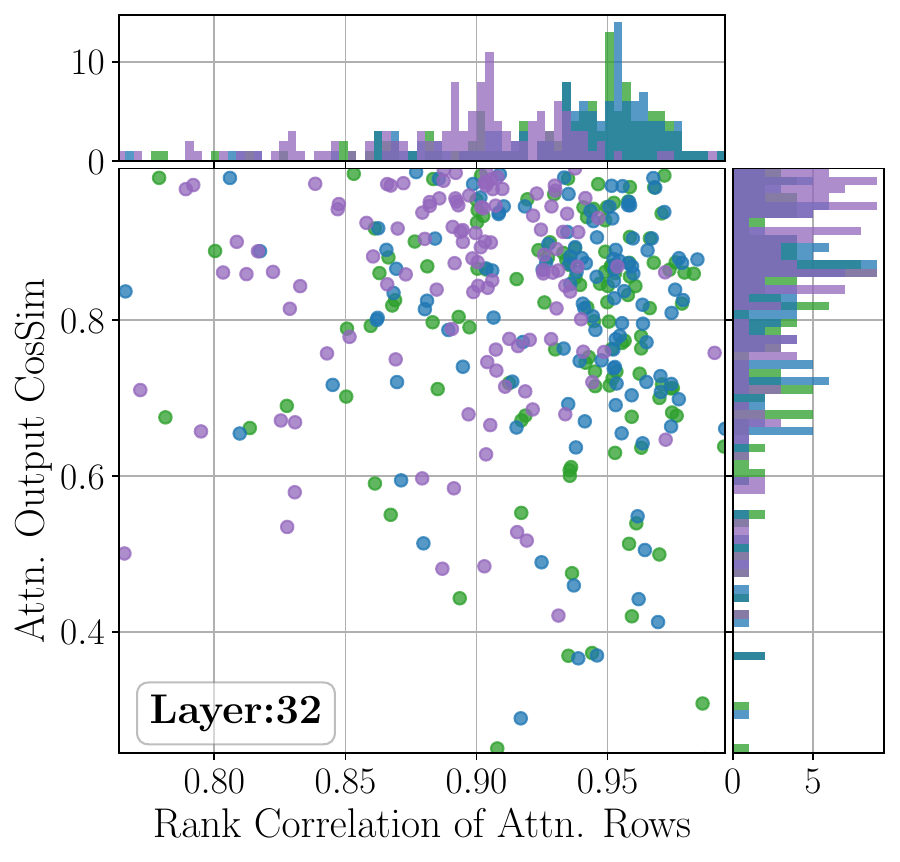}
    \includegraphics[width=0.32\linewidth]{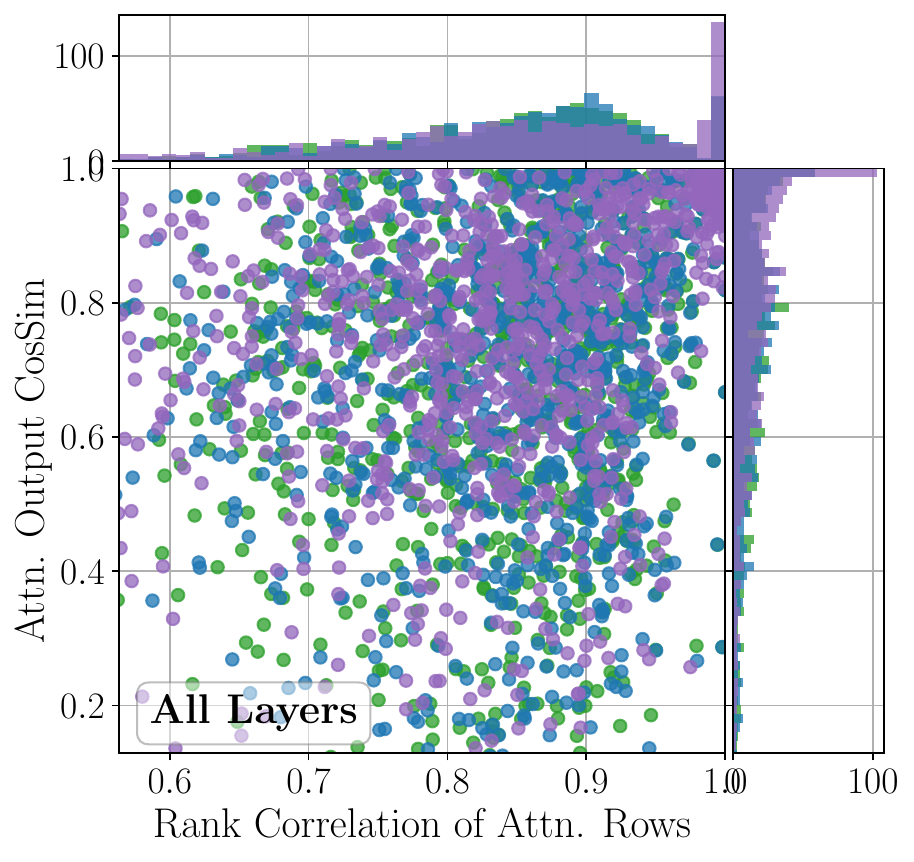}
    \caption{Attention output cosine similarity (compared to full attention) for Streaming LLM with our method. \cref{fig:spearman-cos-app,fig:spearman-cos-app-2,fig:spearman-cos-app-3} show the results from every layer, and are a counterpart to~\cref{fig:spearman-cos} in the main text. For the lower layers where induction heads are most prevalent, our method shows higher cosine similarity and attention row rank correlation as compared to quadratic attention.}
    \label{fig:spearman-cos-app-3}
\end{figure}

\clearpage
\section*{NeurIPS Paper Checklist}

\begin{enumerate}

\item {\bf Claims}
    \item[] Question: Do the main claims made in the abstract and introduction accurately reflect the paper's contributions and scope?
    \item[] Answer: \answerYes{} 
    \item[] Justification: The claims made in the abstract and introduction are verified in our experiments conducted in \cref{sec:experiments}.
    \item[] Guidelines:
    \begin{itemize}
        \item The answer NA means that the abstract and introduction do not include the claims made in the paper.
        \item The abstract and/or introduction should clearly state the claims made, including the contributions made in the paper and important assumptions and limitations. A No or NA answer to this question will not be perceived well by the reviewers. 
        \item The claims made should match theoretical and experimental results, and reflect how much the results can be expected to generalize to other settings. 
        \item It is fine to include aspirational goals as motivation as long as it is clear that these goals are not attained by the paper. 
    \end{itemize}

\item {\bf Limitations}
    \item[] Question: Does the paper discuss the limitations of the work performed by the authors?
    \item[] Answer: \answerYes{} 
    \item[] Justification: We have discussed limitations of our method in \cref{sec:discussion}.
    \item[] Guidelines:
    \begin{itemize}
        \item The answer NA means that the paper has no limitation while the answer No means that the paper has limitations, but those are not discussed in the paper. 
        \item The authors are encouraged to create a separate "Limitations" section in their paper.
        \item The paper should point out any strong assumptions and how robust the results are to violations of these assumptions (e.g., independence assumptions, noiseless settings, model well-specification, asymptotic approximations only holding locally). The authors should reflect on how these assumptions might be violated in practice and what the implications would be.
        \item The authors should reflect on the scope of the claims made, e.g., if the approach was only tested on a few datasets or with a few runs. In general, empirical results often depend on implicit assumptions, which should be articulated.
        \item The authors should reflect on the factors that influence the performance of the approach. For example, a facial recognition algorithm may perform poorly when image resolution is low or images are taken in low lighting. Or a speech-to-text system might not be used reliably to provide closed captions for online lectures because it fails to handle technical jargon.
        \item The authors should discuss the computational efficiency of the proposed algorithms and how they scale with dataset size.
        \item If applicable, the authors should discuss possible limitations of their approach to address problems of privacy and fairness.
        \item While the authors might fear that complete honesty about limitations might be used by reviewers as grounds for rejection, a worse outcome might be that reviewers discover limitations that aren't acknowledged in the paper. The authors should use their best judgment and recognize that individual actions in favor of transparency play an important role in developing norms that preserve the integrity of the community. Reviewers will be specifically instructed to not penalize honesty concerning limitations.
    \end{itemize}

\item {\bf Theory assumptions and proofs}
    \item[] Question: For each theoretical result, does the paper provide the full set of assumptions and a complete (and correct) proof?
    \item[] Answer: \answerYes{} 
    \item[] Justification: We have one theoretical result in \cref{lem:delta}, which was stated briefly in the main text. We have included a more detailed derivation and statement in \cref{sec:delta-lemma-restate}. This section was also referenced under the lemma in the main text. 
    \item[] Guidelines:
    \begin{itemize}
        \item The answer NA means that the paper does not include theoretical results. 
        \item All the theorems, formulas, and proofs in the paper should be numbered and cross-referenced.
        \item All assumptions should be clearly stated or referenced in the statement of any theorems.
        \item The proofs can either appear in the main paper or the supplemental material, but if they appear in the supplemental material, the authors are encouraged to provide a short proof sketch to provide intuition. 
        \item Inversely, any informal proof provided in the core of the paper should be complemented by formal proofs provided in appendix or supplemental material.
        \item Theorems and Lemmas that the proof relies upon should be properly referenced. 
    \end{itemize}

    \item {\bf Experimental result reproducibility}
    \item[] Question: Does the paper fully disclose all the information needed to reproduce the main experimental results of the paper to the extent that it affects the main claims and/or conclusions of the paper (regardless of whether the code and data are provided or not)?
    \item[] Answer: \answerYes{} 
    \item[] Justification: We have provided all necessary information to reproduce our results. Our method only relies on publicly available pretrained models. We have included experimental code as well. 
    \item[] Guidelines:
    \begin{itemize}
        \item The answer NA means that the paper does not include experiments.
        \item If the paper includes experiments, a No answer to this question will not be perceived well by the reviewers: Making the paper reproducible is important, regardless of whether the code and data are provided or not.
        \item If the contribution is a dataset and/or model, the authors should describe the steps taken to make their results reproducible or verifiable. 
        \item Depending on the contribution, reproducibility can be accomplished in various ways. For example, if the contribution is a novel architecture, describing the architecture fully might suffice, or if the contribution is a specific model and empirical evaluation, it may be necessary to either make it possible for others to replicate the model with the same dataset, or provide access to the model. In general. releasing code and data is often one good way to accomplish this, but reproducibility can also be provided via detailed instructions for how to replicate the results, access to a hosted model (e.g., in the case of a large language model), releasing of a model checkpoint, or other means that are appropriate to the research performed.
        \item While NeurIPS does not require releasing code, the conference does require all submissions to provide some reasonable avenue for reproducibility, which may depend on the nature of the contribution. For example
        \begin{enumerate}
            \item If the contribution is primarily a new algorithm, the paper should make it clear how to reproduce that algorithm.
            \item If the contribution is primarily a new model architecture, the paper should describe the architecture clearly and fully.
            \item If the contribution is a new model (e.g., a large language model), then there should either be a way to access this model for reproducing the results or a way to reproduce the model (e.g., with an open-source dataset or instructions for how to construct the dataset).
            \item We recognize that reproducibility may be tricky in some cases, in which case authors are welcome to describe the particular way they provide for reproducibility. In the case of closed-source models, it may be that access to the model is limited in some way (e.g., to registered users), but it should be possible for other researchers to have some path to reproducing or verifying the results.
        \end{enumerate}
    \end{itemize}

\item {\bf Open access to data and code}
    \item[] Question: Does the paper provide open access to the data and code, with sufficient instructions to faithfully reproduce the main experimental results, as described in supplemental material?
    \item[] Answer: \answerYes{} 
    \item[] Justification: The datasets we use are publicly available and cited. We generated one dataset according to a previous paper (PG19 Long QA), which has been included in our supplementary materials. The code for our experiments is included in the supplementary material.
    \item[] Guidelines:
    \begin{itemize}
        \item The answer NA means that paper does not include experiments requiring code.
        \item Please see the NeurIPS code and data submission guidelines (\url{https://nips.cc/public/guides/CodeSubmissionPolicy}) for more details.
        \item While we encourage the release of code and data, we understand that this might not be possible, so “No” is an acceptable answer. Papers cannot be rejected simply for not including code, unless this is central to the contribution (e.g., for a new open-source benchmark).
        \item The instructions should contain the exact command and environment needed to run to reproduce the results. See the NeurIPS code and data submission guidelines (\url{https://nips.cc/public/guides/CodeSubmissionPolicy}) for more details.
        \item The authors should provide instructions on data access and preparation, including how to access the raw data, preprocessed data, intermediate data, and generated data, etc.
        \item The authors should provide scripts to reproduce all experimental results for the new proposed method and baselines. If only a subset of experiments are reproducible, they should state which ones are omitted from the script and why.
        \item At submission time, to preserve anonymity, the authors should release anonymized versions (if applicable).
        \item Providing as much information as possible in supplemental material (appended to the paper) is recommended, but including URLs to data and code is permitted.
    \end{itemize}

\item {\bf Experimental setting/details}
    \item[] Question: Does the paper specify all the training and test details (e.g., data splits, hyperparameters, how they were chosen, type of optimizer, etc.) necessary to understand the results?
    \item[] Answer: \answerYes{} 
    \item[] Justification: We have one hyperparameter which is specified in \cref{sec:experiments}. We have also provided \cref{alg:delta}.
    \item[] Guidelines:
    \begin{itemize}
        \item The answer NA means that the paper does not include experiments.
        \item The experimental setting should be presented in the core of the paper to a level of detail that is necessary to appreciate the results and make sense of them.
        \item The full details can be provided either with the code, in appendix, or as supplemental material.
    \end{itemize}

\item {\bf Experiment statistical significance}
    \item[] Question: Does the paper report error bars suitably and correctly defined or other appropriate information about the statistical significance of the experiments?
    \item[] Answer: \answerNo{} 
    \item[] Justification: Our method is deterministic and works on pretrained models. Therefore, there is no stochasticity present in order to report error bars. Instead we conduct a range of experiments on different datasets in \cref{sec:experiments} in order to verify that the results do not randomly favor our method for a particular experiment. However, we do provide a paired permutation test for the RULER experiments in~\cref{sec:paired-permutation}.
    \item[] Guidelines:
    \begin{itemize}
        \item The answer NA means that the paper does not include experiments.
        \item The authors should answer "Yes" if the results are accompanied by error bars, confidence intervals, or statistical significance tests, at least for the experiments that support the main claims of the paper.
        \item The factors of variability that the error bars are capturing should be clearly stated (for example, train/test split, initialization, random drawing of some parameter, or overall run with given experimental conditions).
        \item The method for calculating the error bars should be explained (closed form formula, call to a library function, bootstrap, etc.)
        \item The assumptions made should be given (e.g., Normally distributed errors).
        \item It should be clear whether the error bar is the standard deviation or the standard error of the mean.
        \item It is OK to report 1-sigma error bars, but one should state it. The authors should preferably report a 2-sigma error bar than state that they have a 96\% CI, if the hypothesis of Normality of errors is not verified.
        \item For asymmetric distributions, the authors should be careful not to show in tables or figures symmetric error bars that would yield results that are out of range (e.g. negative error rates).
        \item If error bars are reported in tables or plots, The authors should explain in the text how they were calculated and reference the corresponding figures or tables in the text.
    \end{itemize}

\item {\bf Experiments compute resources}
    \item[] Question: For each experiment, does the paper provide sufficient information on the computer resources (type of compute workers, memory, time of execution) needed to reproduce the experiments?
    \item[] Answer: \answerYes{} 
    \item[] Justification: We have stated the full range of compute resources in~\cref{sec:compute-resources}.
    \item[] Guidelines:
    \begin{itemize}
        \item The answer NA means that the paper does not include experiments.
        \item The paper should indicate the type of compute workers CPU or GPU, internal cluster, or cloud provider, including relevant memory and storage.
        \item The paper should provide the amount of compute required for each of the individual experimental runs as well as estimate the total compute. 
        \item The paper should disclose whether the full research project required more compute than the experiments reported in the paper (e.g., preliminary or failed experiments that didn't make it into the paper). 
    \end{itemize}
    
\item {\bf Code of ethics}
    \item[] Question: Does the research conducted in the paper conform, in every respect, with the NeurIPS Code of Ethics \url{https://neurips.cc/public/EthicsGuidelines}?
    \item[] Answer: \answerYes{} 
    \item[] Justification: We have read the ethics guidelines, and we believe our paper conforms to them.
    \item[] Guidelines:
    \begin{itemize}
        \item The answer NA means that the authors have not reviewed the NeurIPS Code of Ethics.
        \item If the authors answer No, they should explain the special circumstances that require a deviation from the Code of Ethics.
        \item The authors should make sure to preserve anonymity (e.g., if there is a special consideration due to laws or regulations in their jurisdiction).
    \end{itemize}

\item {\bf Broader impacts}
    \item[] Question: Does the paper discuss both potential positive societal impacts and negative societal impacts of the work performed?
    \item[] Answer: \answerYes{} 
    \item[] Justification: In \cref{sec:intro} we discuss the enormous costs and negative externalities caused by inference compute requirements. We also discuss the broader impacts in~\cref{sec:broader-impact}.
    \item[] Guidelines:
    \begin{itemize}
        \item The answer NA means that there is no societal impact of the work performed.
        \item If the authors answer NA or No, they should explain why their work has no societal impact or why the paper does not address societal impact.
        \item Examples of negative societal impacts include potential malicious or unintended uses (e.g., disinformation, generating fake profiles, surveillance), fairness considerations (e.g., deployment of technologies that could make decisions that unfairly impact specific groups), privacy considerations, and security considerations.
        \item The conference expects that many papers will be foundational research and not tied to particular applications, let alone deployments. However, if there is a direct path to any negative applications, the authors should point it out. For example, it is legitimate to point out that an improvement in the quality of generative models could be used to generate deepfakes for disinformation. On the other hand, it is not needed to point out that a generic algorithm for optimizing neural networks could enable people to train models that generate Deepfakes faster.
        \item The authors should consider possible harms that could arise when the technology is being used as intended and functioning correctly, harms that could arise when the technology is being used as intended but gives incorrect results, and harms following from (intentional or unintentional) misuse of the technology.
        \item If there are negative societal impacts, the authors could also discuss possible mitigation strategies (e.g., gated release of models, providing defenses in addition to attacks, mechanisms for monitoring misuse, mechanisms to monitor how a system learns from feedback over time, improving the efficiency and accessibility of ML).
    \end{itemize}
    
\item {\bf Safeguards}
    \item[] Question: Does the paper describe safeguards that have been put in place for responsible release of data or models that have a high risk for misuse (e.g., pretrained language models, image generators, or scraped datasets)?
    \item[] Answer: \answerNA{} 
    \item[] Justification: We create no new data or models to release, as our method proposes a modification to existing pretrained models for inference efficiency. 
    \item[] Guidelines:
    \begin{itemize}
        \item The answer NA means that the paper poses no such risks.
        \item Released models that have a high risk for misuse or dual-use should be released with necessary safeguards to allow for controlled use of the model, for example by requiring that users adhere to usage guidelines or restrictions to access the model or implementing safety filters. 
        \item Datasets that have been scraped from the Internet could pose safety risks. The authors should describe how they avoided releasing unsafe images.
        \item We recognize that providing effective safeguards is challenging, and many papers do not require this, but we encourage authors to take this into account and make a best faith effort.
    \end{itemize}

\item {\bf Licenses for existing assets}
    \item[] Question: Are the creators or original owners of assets (e.g., code, data, models), used in the paper, properly credited and are the license and terms of use explicitly mentioned and properly respected?
    \item[] Answer: \answerYes{} 
    \item[] Justification: The datasets we use are publicly available and cited or included in the supplementary material. The dataset included in the supplementary material is a derivation of a publicly available dataset, and the method for constructing it has been cited in~\cref{sec:experiments}. 
    \item[] Guidelines:
    \begin{itemize}
        \item The answer NA means that the paper does not use existing assets.
        \item The authors should cite the original paper that produced the code package or dataset.
        \item The authors should state which version of the asset is used and, if possible, include a URL.
        \item The name of the license (e.g., CC-BY 4.0) should be included for each asset.
        \item For scraped data from a particular source (e.g., website), the copyright and terms of service of that source should be provided.
        \item If assets are released, the license, copyright information, and terms of use in the package should be provided. For popular datasets, \url{paperswithcode.com/datasets} has curated licenses for some datasets. Their licensing guide can help determine the license of a dataset.
        \item For existing datasets that are re-packaged, both the original license and the license of the derived asset (if it has changed) should be provided.
        \item If this information is not available online, the authors are encouraged to reach out to the asset's creators.
    \end{itemize}

\item {\bf New assets}
    \item[] Question: Are new assets introduced in the paper well documented and is the documentation provided alongside the assets?
    \item[] Answer: \answerYes{} 
    \item[] Justification: We are releasing a QA test set which was specified by a previous work, but not released by those authors directly. We have generated the dataset according to their code, and are releasing it with our supplementary materials.
    \item[] Guidelines:
    \begin{itemize}
        \item The answer NA means that the paper does not release new assets.
        \item Researchers should communicate the details of the dataset/code/model as part of their submissions via structured templates. This includes details about training, license, limitations, etc. 
        \item The paper should discuss whether and how consent was obtained from people whose asset is used.
        \item At submission time, remember to anonymize your assets (if applicable). You can either create an anonymized URL or include an anonymized zip file.
    \end{itemize}

\item {\bf Crowdsourcing and research with human subjects}
    \item[] Question: For crowdsourcing experiments and research with human subjects, does the paper include the full text of instructions given to participants and screenshots, if applicable, as well as details about compensation (if any)? 
    \item[] Answer: \answerNA{} 
    \item[] Justification: Not applicable
    \item[] Guidelines:
    \begin{itemize}
        \item The answer NA means that the paper does not involve crowdsourcing nor research with human subjects.
        \item Including this information in the supplemental material is fine, but if the main contribution of the paper involves human subjects, then as much detail as possible should be included in the main paper. 
        \item According to the NeurIPS Code of Ethics, workers involved in data collection, curation, or other labor should be paid at least the minimum wage in the country of the data collector. 
    \end{itemize}

\item {\bf Institutional review board (IRB) approvals or equivalent for research with human subjects}
    \item[] Question: Does the paper describe potential risks incurred by study participants, whether such risks were disclosed to the subjects, and whether Institutional Review Board (IRB) approvals (or an equivalent approval/review based on the requirements of your country or institution) were obtained?
    \item[] Answer: \answerNA{} 
    \item[] Justification: Not applicable.
    \item[] Guidelines:
    \begin{itemize}
        \item The answer NA means that the paper does not involve crowdsourcing nor research with human subjects.
        \item Depending on the country in which research is conducted, IRB approval (or equivalent) may be required for any human subjects research. If you obtained IRB approval, you should clearly state this in the paper. 
        \item We recognize that the procedures for this may vary significantly between institutions and locations, and we expect authors to adhere to the NeurIPS Code of Ethics and the guidelines for their institution. 
        \item For initial submissions, do not include any information that would break anonymity (if applicable), such as the institution conducting the review.
    \end{itemize}

\item {\bf Declaration of LLM usage}
    \item[] Question: Does the paper describe the usage of LLMs if it is an important, original, or non-standard component of the core methods in this research? Note that if the LLM is used only for writing, editing, or formatting purposes and does not impact the core methodology, scientific rigorousness, or originality of the research, declaration is not required.
    \item[] Answer: \answerNA{} 
    \item[] Justification: Not applicable
    \item[] Guidelines:
    \begin{itemize}
        \item The answer NA means that the core method development in this research does not involve LLMs as any important, original, or non-standard components.
        \item Please refer to our LLM policy (\url{https://neurips.cc/Conferences/2025/LLM}) for what should or should not be described.
    \end{itemize}

\end{enumerate}

\end{document}